\documentclass[english]{article}
\usepackage[T1]{fontenc}
\usepackage[latin9]{inputenc}
\usepackage{geometry}
\usepackage{color}
\usepackage{babel}
\usepackage{float}
\usepackage{amsmath}
\usepackage{amsthm}
\usepackage{amssymb}
\usepackage{graphicx} 
\usepackage{algorithm}
\usepackage{tasks}
\settasks{style=itemize}
\usepackage{booktabs}
\usepackage{caption}
\usepackage{subcaption}
\usepackage{tikz}
\usetikzlibrary{arrows.meta,backgrounds,positioning}

\usepackage{bm}
\usepackage{amsmath, amsfonts, amssymb, amsthm}
\usepackage{mathrsfs, dsfont}
\usepackage[dvipsnames]{xcolor}

\usepackage{bbold}  % or try dsfont

\newcommand{\R}{\mathbb{R}}
\newcommand{\N}{\mathbb{N}}

\newcommand{\E}{\mathbb{E}}

\newcommand{\alphabet}[1][d]{{A}_{#1}}

\newcommand{\word}[1]{{\mathcolor{NavyBlue}{\mathbf{#1}}}}
\newcommand{\emptyword}{{\color{NavyBlue}\textup{\textbf{\o{}}}}}

\newcommand{\conpow}[1]{^{\otimes #1}}

\NewDocumentCommand{\sigX}{O{t} O{X}}{\mathbb{#2}_{#1}}
\NewDocumentCommand{\sig}{O{t} O{W}}{\widehat{\mathbb{#2}}_{#1}}
\NewDocumentCommand{\sigE}{O{t} O{W}}{\E[\sig[#1][#2]]}

\NewDocumentCommand{\bracketsigX}{O{t} O{X} m}{\left \langle #3, \sigX[#1][#2] \right \rangle} % not an actual error
\NewDocumentCommand{\bracketsig}{O{t} O{W} m}{\left \langle #3, \sig[#1][#2] \right \rangle}   % not an actual error
\NewDocumentCommand{\bracketsigtrunc}{O{M} O{t} O{W} m}{\left \langle #4, \sig[#2][#3]^{\leq #1} \right \rangle}   % not an actual error
\NewDocumentCommand{\bracketsigE}{O{t} O{W} m}{\left \langle #3, \sigE[#1][#2] \right \rangle} % not an actual error

\usepackage[round]{natbib}
\usepackage{enumitem}

\usepackage[unicode=true,pdfusetitle,
 bookmarks=true,bookmarksnumbered=false,bookmarksopen=false,
 breaklinks=false,pdfborder={0 0 0},pdfborderstyle={},backref=false,colorlinks=true]
 {hyperref}
\hypersetup{
 citecolor=blue, linkcolor=red}

\makeatletter

\newcommand{\noun}[1]{\textsc{#1}}
\numberwithin{equation}{section}
\theoremstyle{definition}
\newtheorem{defn}{\protect\definitionname}[section]
\theoremstyle{remark}
\newtheorem{rem}{\protect\remarkname}[section]
\theoremstyle{plain}
\newtheorem{prop}{Proposition}[section]
\theoremstyle{remark}

\theoremstyle{plain}
\newtheorem{thm}{Theorem}[section]
\theoremstyle{plain}

\theoremstyle{plain}
\newtheorem{lem}{\protect\lemmaname}[section]
\theoremstyle{plain}
\newtheorem{example}{Example}[section]

\makeatother

\providecommand{\corollaryname}{Corollary}
\providecommand{\definitionname}{Definition}
\providecommand{\lemmaname}{Lemma}
\providecommand{\notationname}{Notation}

\providecommand{\remarkname}{Remark}

\newenvironment{sqremark}{\begin{rem}}{\hfill $\blacksquare$ \end{rem}}

\newtheorem{assumption}{Assumption}

\usepackage{todonotes}
\usepackage{comment}
\usepackage{mathtools}
\usepackage{authblk}

\usepackage{algorithm}
\usepackage{algorithmic}
\title{Deep kernel hedging}

\author[1,2]{Jean-Loup Dupret\thanks{\emph{j.l.dupret@uva.nl}}}
\author[3]{Donatien Hainaut\thanks{\emph{donatien.hainaut@uclouvain.be}}}
\author[3]{Edouard Motte\thanks{\emph{Corresponding author, edouard.motte@uclouvain.be.}}}
\affil[1]{University of Amsterdam, Amsterdam School of Economics}
\affil[2]{ETH Zurich, Department of Mathematics, RiskLab}
\affil[3]{Universit\'e Catholique de Louvain, LIDAM--ISBA}
\begin{document}

\maketitle

\begin{abstract}
 We introduce a deep kernel hedging framework that combines the flexibility of deep learning with the structural inductive bias of kernel methods. The hedging functional is restricted to a reproducing kernel Hilbert space whose kernel is parameterized through a neural network embedding of the input features. The framework minimizes a regularized empirical risk under convex loss functions and can  accommodate path-dependent information through truncated time-augmented signature features.  We derive a generalized representer theorem for the joint hedging problem, reducing the empirical optimization to a finite-dimensional problem. To further reduce the computational cost associated with large kernel matrices, we develop a scalable  random Fourier feature approximation and establish convergence guarantees. The random Fourier parameters are sampled once and remain fixed throughout training, while the deep kernel adapts to  market data through the learned neural representation. We evaluate the performance of the proposed deep kernel approach on both synthetic and real data and compare it with standard kernel methods and classical deep hedging architectures. Numerical results indicate competitive and robust hedging performance, particularly in low-data regimes, which highlights the benefits of combining  expressive  neural representations with the  inductive bias of kernel methods.
\end{abstract}
\noun{Keywords:} {Hedging, Kernel methods, Deep kernel learning, Random Fourier features, Path-signatures}.

\section{Introduction}
The problem of hedging financial derivatives is a crucial topic in mathematical finance. Given a financial claim exposed to uncertain market fluctuations, the objective is to construct trading strategies that optimally reduce the associated financial risk. In idealized complete market settings, claims can be perfectly replicated by an appropriate trading strategy. However, realistic financial markets are typically incomplete due to the presence of stochastic volatility, liquidity effects or other sources of uncertainty. In such situations, perfect replication is generally impossible and hedging problems are naturally formulated as stochastic control problems aiming at minimizing a suitable loss functional associated with the profit and loss (P\&L) of the strategy. \\

The resulting optimization problems strongly depend on the modeling assumptions for the underlying risk factors and market dynamics. In particular, the optimal hedging strategy may vary significantly depending on whether the dynamics are Markovian or path-dependent, low-dimensional or high-dimensional. Except in a few specific settings with restrictive assumptions \citep{pham2009continuous}, explicit characterizations of optimal hedging strategies through stochastic control are rarely available. One well-known example where such a characterization is generally unavailable concerns the hedging under the real-world measure in stochastic volatility models \citep{jonsson2002partial,motte2024partial}. This has motivated the development of efficient numerical and data-driven approaches for solving such hedging problems. \\

Over the last few years, the rapid development of machine learning techniques has deeply impacted quantitative finance, particularly in the context of hedging problems. In this direction, the deep hedging framework introduced in \cite{buehler2019deep} proposes to parametrize hedging strategies using deep neural networks and to learn optimal trading policies directly from simulated or historical data without any modeling assumption on the chosen market dynamics. Since then, several extensions have been proposed, including robust and empirical formulations of deep hedging problems \citep{carbonneau2021equal,lutkebohmert2022robust,mikkila2023empirical,abi2025hedging,gnoatto2025deep,dupret2026deep}. Deep learning approaches offer remarkable flexibility and expressive power, especially in complex market environments where classical parametric models become inadequate. Nevertheless, neural network-based methods may suffer from instability, overfitting and substantial data requirements. This has motivated the exploration of alternative function approximation frameworks that combine flexibility with stronger structural properties in the presence of scarce data, in particular  kernel-based hedging techniques.\\

Kernel methods have emerged as a powerful alternative  in statistical learning theory. Based on the theory of reproducing kernel Hilbert spaces (RKHS), kernel methods provide flexible nonparametric models while preserving a rich functional analytic structure \citep{cristianini2000introduction,herbrich2001learning,scholkopf2002learning,shawe2004kernel,williams2006gaussian, kuurkova2008approximate}. Their mathematical tractability, convex optimization structure, and strong generalization properties have led to numerous successful applications across machine learning and statistics. In finance, kernel-based approaches have recently attracted increasing attention for pricing, hedging, portfolio optimization and risk-management problems \citep{chalup2008kernel,hu2020pricing,hainaut2025optimal,hainaut2026american}. In particular, several recent works have investigated kernel-based hedging strategies \citep{nian2018learning,ludkovski2021krighedge,cirone2025rough,futter2025kernel}, highlighting the ability of RKHS methods to efficiently capture nonlinear structures while having robustness and regularization properties. \\

Despite their appealing theoretical properties, classical kernel methods typically rely on fixed kernels whose representational capacity may be insufficient in highly complex financial environments. Deep kernel learning addresses this limitation by composing a standard kernel with a parametrized neural representation learned from data \citep{wilson2016deep,wilson2016stochastic,jiu2017nonlinear,ober2021promises}. Such approaches aim at combining the expressive power of deep architectures with the structured inductive bias and regularization properties of kernel methods. However, the deep kernel learning approaches considered in \citet{wilson2016deep,wilson2016stochastic,jiu2017nonlinear,ober2021promises} are typically formulated within a Gaussian-process (GP) framework, where the representation parameters and kernel hyperparameters are jointly learned by maximizing the GP marginal likelihood. Although statistically principled, this procedure can be computationally demanding because it involves repeatedly manipulating large kernel matrices, while the resulting nonconvex optimization may be sensitive to initialization and hyperparameter choices. \\

To retain the flexibility of deep kernel learning while improving its scalability, we introduce a novel deep kernel framework for empirical hedging together with an explicit random-feature approximation that enables efficient stochastic  gradient optimization. We consider more precisely hedging functionals belonging to an RKHS associated with a parameterized deep kernel of the form 
\[ K_\theta(x,x') =
K\bigl(\psi_\theta(x),\psi_\theta(x')\bigr),\qquad x,x' \in \mathcal{X}\subseteq \R^d, \]where \(\psi_\theta :\mathcal{X} \to \R^p\) is a learned neural representation and $K:\R^p \times \R^p \to \R$ is a standard base kernel. The inputs $x\in \mathcal{X}$ may consist of standard market-state variables or incorporate path-dependent information through truncated time-augmented signatures, thereby capturing richer temporal features of the underlying asset trajectories. We then formulate the corresponding empirical kernel hedging problem as an infinite-dimensional learning problem. Using a generalized representer theorem, we derive an equivalent finite-dimensional formulation that provides an explicit characterization of the optimal hedging strategy and establishes the well-posedness of the joint learning procedure. While this characterization is theoretically appealing, it still requires constructing and repeatedly updating large kernel matrices, which becomes computationally prohibitive as the number of training paths increases. To address this limitation, we introduce a scalable approximation scheme based on random Fourier features (RFF),   as introduced in \cite{rahimi2007random} and further analyzed in \cite{rahimi2008weighted, sutherland2015error}. The resulting finite-dimensional  RFF formulation substantially reduces the computational and memory requirements by enabling joint mini-batch training of the neural representation $\psi_\theta$ and the RFF output (linear) coefficients.  \\

Our construction is therefore naturally related to random-feature neural networks (RFNNs, also called \textit{extreme learning machine}, see \cite{huang2006extreme,sun2019random, gonon2023approximation}), in which the hidden neural weights and biases are randomly sampled and kept fixed, while only the final output linear layer is trained; see also the online RFNN extension of \cite{koppel2019parsimonious} and \cite{le2025parsimonious}. These RFNN methods allow for general activation functions and sampling distributions but do not learn an input representation $\psi_{\theta}$ before the random-feature layer.  In our framework, the random Fourier parameters are also sampled once before training and subsequently held fixed, but the resulting random features and kernel evolve throughout training through our learned latent representation $\psi_\theta$, which is optimized jointly with the output RFF coefficients. Other related deep kernel constructions based on RFFs have also been proposed in \cite{mehrkanoon2018deep,xie2019deep,mehrkanoon2019deep,fang2023end}. The approach of \cite{xie2019deep} stacks several RFF layers whose initially random Fourier parameters are subsequently trained by backpropagation, while \cite{mehrkanoon2018deep} and \cite{mehrkanoon2019deep} develop similar hybrid neural-kernel architectures with stacked RFF layers, deeper kernel blocks and pooling constructions for Least Squares Support Vector Machines (LS-SVM).  Other methods learn instead kernels directly from data through their random-feature representation \citep{sinha2016learning,zhen2020learning}; in particular, \cite{fang2023end} uses a neural generator to learn the spectral distribution from which the Fourier frequencies are sampled. In contrast, we sample the base Fourier frequencies and phases once and keep them fixed during training, thereby learning the deep kernel only through the representation map $\psi_\theta$. The RFF layer is thus used solely as a scalable finite-dimensional approximation of the deep kernel $K_\theta$ arising from our empirical hedging formulation. \\

Our main contribution is  therefore the development and analysis of a  deep kernel framework specifically built for empirical hedging. More precisely, we:

\begin{itemize}
\item introduce a deep kernel hedging architecture that combines a learned nonlinear representation of market information together with RKHS regularization, making it particularly well suited to complex financial environments with limited training data;

\item formulate the hedging problem over the RKHS induced by the learned deep kernel, derive an equivalent finite-dimensional representation through a generalized representer theorem, and establish the existence of an optimal deep kernel hedge;

\item develop a scalable random Fourier feature formulation
that enables joint mini-batch training of the neural
representation and output coefficients, along with convergence guarantees; 

\item provide an extensive numerical evaluation across different derivative products, synthetic and real datasets, and different hedging objectives, including quadratic loss and CVaR minimization. The experiments demonstrate competitive and robust hedging performance, particularly in low-data regimes where purely neural approaches suffer from instability or overfitting.

\end{itemize}

The remainder of the paper is organized as follows. In Section~\ref{sec:recall_deep_kernel}, we recall the main elements of RKHS theory and deep kernel learning. Section~\ref{sec:deep_kernel_hedging} presents the proposed deep kernel hedging formulation, derives the finite-dimensional representation of the optimization problem via a generalized representer theorem, and introduces a random Fourier feature approximation for efficient learning with convergence guarantees. Finally, Section~\ref{sec:num} contains the numerical experiments on both synthetic and real datasets, including comparisons with benchmark methods (standard kernel methods, deep hedging and RFNN architectures) and the use of signature-based features.

\section{Deep kernel learning}\label{sec:recall_deep_kernel}
In this section, we introduce the kernel learning framework used in the paper. Let $\mathcal{X} \subseteq \R^d$ be a nonempty set. A function
$K:\mathcal{X}\times\mathcal{X}\to\mathbb{R}$ is called a
\emph{kernel} on $\mathcal{X}$ if there exists a Hilbert space $\mathcal{H}_K$ and a map $\Phi : \mathcal{X} \to \mathcal{H}_K$ such that for all $x,x' \in \mathcal{X}$, we have $K(x,x') = \langle \Phi(x), \Phi(x') \rangle_{\mathcal{H}_K}$. Equivalently, a function $K$ is a kernel if and only if it is symmetric and  positive-semidefinite, that is, for every $m\in\mathbb{N}$, $x_1,\ldots,x_m\in\mathcal{X}$, 
$c_1,\ldots,c_m\in\mathbb{R}$,
\[
    \sum_{i=1}^{m}\sum_{j=1}^{m}c_i c_j K(x_i,x_j)\geq 0 \, .
\]

\begin{defn}[Reproducing Kernel Hilbert Space]
A Hilbert space $\mathcal{H}_K$ of real-valued functions on $\mathcal{X}$ equipped with an inner product $\langle \cdot, \cdot \rangle_{\mathcal{H}_K}$ is a \emph{reproducing kernel Hilbert space} (RKHS) associated with the kernel $K$ if for every $x \in \mathcal{X}$ and $f \in \mathcal{H}_K$, the point evaluation can be represented as
    \[
    f(x) = \langle f, K(\cdot, x) \rangle_{\mathcal{H}_K}.
    \]
\end{defn}
This property ensures that evaluating a function at a point is a continuous linear functional on $\mathcal{H}_K$. The Moore--Aronszajn theorem then states that every 
kernel \(K\) on \(\mathcal X\) determines a unique RKHS having \(K\) as
its reproducing kernel, and this space satisfies
    \(
    \mathcal{H}_K = \overline{\mathrm{span} \{ K(\cdot, x) : x \in \mathcal{X} \}}.
    \)
The reproducing property provides in the next sections the key link between the (potentially) infinite-dimensional space \(\mathcal H_K\) and the
finite-dimensional representations arising in kernel-based learning
problems.

\subsection{Empirical kernel regression}
We first recall the classical empirical kernel regression framework. Let $X\in\mathcal X$ denote a vector of covariates and $Y\in\mathbb R$ a target variable that we aim to approximate by a function $f:\mathcal X\to\mathbb R$. 
Kernel methods restrict $f$ to the RKHS $\mathcal{H}_K$ associated with a kernel  $K$ on $\mathcal X$. In this configuration, given a sample $\{(X_i,Y_i)\}_{i=1}^N$, a continuous convex bounded from below loss function $\mathcal{L} : \R \to \mathbb{R}$ and a regularization parameter $\lambda>0$, the goal of  empirical kernel regression is to solve
\begin{equation}\label{eq: kernel_ridge_regression}
\inf_{f \in \mathcal{H}_K} \Bigg\{ \frac{1}{N}
\sum_{i=1}^N \mathcal{L}\left( Y_i- f(X_i )\right) + \lambda \|f\|_{\mathcal{H}_K}^2 \Bigg\} \,,
\end{equation}
where $\|.\|_{\mathcal{H}_K}$ denotes the norm induced by the inner product of the Hilbert space $\mathcal{H}_K$, i.e. $\|.\|_{\mathcal{H}_K}=\sqrt{\langle.,. \rangle_{\mathcal{H}_K}}$. The term $\lambda\|f\|_{\mathcal{H}_K}^{2}$ controls the complexity of the estimator and makes the objective coercive and strictly convex. In particular, for every $\lambda>0$, problem \eqref{eq: kernel_ridge_regression} has a unique solution in $\mathcal{H}_K$, see also Lemma \ref{lem: existence and uniqueness}.
By the representer theorem (\cite{scholkopf2001generalized} and Theorem \ref{thm:representer_thm_inner_problem} below), the solution has the form
\begin{equation} \label{eq:represent}
f^\star_N(\cdot) = \sum_{i=1}^N \alpha^\star_i K(\cdot, {{X}}_i ) , 
\end{equation}
for some $\boldsymbol{\alpha}^\star=(\alpha^\star_1,\ldots,\alpha^\star_N)^\top\in\mathbb{R}^N$. Define the response vector $\mathbf{Y}\in\mathbb{R}^N$ and Gram matrix
$\mathbf{K}\in\mathbb{R}^{N\times N}$ by
\[
    \mathbf{Y}:=(Y_1,\ldots,Y_N)^\top,
    \qquad
    \mathbf{K}_{ij}:=K(X_i,X_j).
\]
The empirical kernel regression can then be reformulated as 
\begin{equation}\label{eq: KRR_formulation_min_alpha}
\min_{\boldsymbol{\alpha} \in \mathbb{R}^N} \Bigg\{ \frac{1}{N} \sum_{i=1}^N  \mathcal{L}\Big(Y_i- \sum_{j=1}^N \alpha_j K({{X}}_{j},X_i) \Big) +\lambda\boldsymbol{\alpha}^{\top}\mathbf{K}\boldsymbol{\alpha} \Bigg\} .
\end{equation}
For a general loss function, the optimal coefficient vector $\boldsymbol{\alpha}^\star$ solving \eqref{eq: KRR_formulation_min_alpha} does not generally admit a closed-form expression. However, for the squared loss $\mathcal{L}(u)=u^2$, the problem reduces to the quadratic optimization 
\begin{equation*}
\underset{\boldsymbol{\alpha}\in\mathbb{R}^{N}}{\operatorname{min}}
\, \Big\{ \frac{1}{N}
\|\mathbf{Y}-\mathbf{K}\boldsymbol{\alpha}\|_2^2
+\lambda\boldsymbol{\alpha}^{\top}\mathbf{K}\boldsymbol{\alpha} \Big\}.
\end{equation*}
Since $\mathbf{K}$ is positive semidefinite and $\lambda>0$, the matrix
$\mathbf{K}+\lambda N I_N$ is positive definite. A convenient coefficient
vector representing the unique minimizer in $\mathcal{H}_K$ for the square loss is then
\[
\boldsymbol{\alpha}^\star
    =(\mathbf{K}+\lambda N I_N)^{-1}\mathbf{Y},
\quad \text{or equivalently} \quad
    (\mathbf{K}+\lambda N I_N)\boldsymbol{\alpha}^\star
    =\mathbf{Y}.
\]

\subsection{Deep kernel regression}
While classical kernel ridge regression relies on a fixed kernel $K$, its performance is strongly influenced by the choice of kernel and its hyperparameters. In some cases, a fixed kernel may fail to capture the relevant structure of the data. To address this limitation, we consider a deep kernel learning approach, initially introduced by \cite{wilson2016deep}. The main idea is to combine kernel methods with deep learning by introducing a parametric feature map 
\[
\psi_\theta : \mathcal{X} \to \mathbb{R}^p, 
\]
where $\psi_\theta$ is typically a neural network with parameters $\theta\in \Theta$. Instead of applying the kernel directly to input features, we define a parametrized function of the form
\begin{equation}\label{eq:def_deep_kernel}
    K_\theta(x, x') := K(\psi_\theta(x), \psi_\theta(x')),\quad  x, x' \in \mathcal{X},
\end{equation}
where $K:\R^p \times \R^p \to \R$ is a standard kernel with feature map $\Phi$. The function $K_\theta$ is thus itself a symmetric positive-semidefinite kernel with feature map $x \mapsto \Phi(\psi_\theta(x))$. Therefore, the neural network $\psi_\theta$ learns a representation of the input space, while the kernel measures similarity in the resulting latent space. In particular, when $\psi_\theta$ reduces to (or learns) the identity function, we retrieve the classical kernel framework.  Thus, given the dataset $\{(X_i,Y_i)\}_{i=1}^N$, deep kernel empirical regression jointly selects the representation parameter \(\theta\) and a regression function in \(\mathcal H_{K_\theta}\) solving
\begin{equation} \label{eq:first_RFF}
\inf_{\theta\in\Theta}\;
\inf_{f\in\mathcal H_{K_\theta}}
\left\{ \frac{1}{N}
\sum_{i=1}^N
\mathcal{L}\bigl(Y_i-f(X_i)\bigr)
+
\lambda\lVert f\rVert_{\mathcal H_{K_\theta}}^2
\right\},
\qquad \lambda>0.
\end{equation}
For every fixed \(\theta\), the representer theorem and equation \eqref{eq:represent} imply that  the unique optimal regression function can be written as
\[
f^\star_{N,\theta}(\,\cdot\,)
=
\sum_{i=1}^N
\alpha^\star_{\theta,i} \, K_\theta(\,\cdot\,,X_i).
\]
for some coefficients $\boldsymbol{\alpha}^\star_\theta = (\alpha^\star_{\theta,1}, \ldots, \alpha^\star_{\theta,N})^\top$.\ The infinite-dimensional optimization problem hence reduces to
\begin{equation}\label{eq: deep_kernel_ridge_problem}
\inf_{\theta\in\Theta}\;
\min_{\boldsymbol{\alpha}\in\mathbb{R}^N}
\left\{ \frac{1}{N}
\sum_{i=1}^N 
\mathcal{L}\Big(
Y_i
-
\sum_{j=1}^N
\alpha_{j} \, K_\theta(X_i,X_{j})
\Big)
+
\lambda \boldsymbol{\alpha} ^\top\mathbf K_\theta \boldsymbol{\alpha}
\right\},
\end{equation}
where
\(
(\mathbf K_\theta)_{ij}
=
K_\theta(X_i,X_{j}) \). For fixed $\theta\in \Theta$ and squared loss $\mathcal{L}(u) = u^2$, the optimal coefficients $\boldsymbol{\alpha}^\star_\theta $ are again given by
\begin{equation} \label{eq: opt_alpha}
\boldsymbol{\alpha}^\star_\theta
=
(\mathbf K_\theta+\lambda N I_N)^{-1}\mathbf Y \, .
\end{equation}
The inverse is well defined because \(\mathbf K_\theta\) is positive semidefinite and \(\lambda>0\).
\\

The deep kernel learning framework proceeds by optimizing the parameters $\theta$ of the feature map. This is typically achieved for the square loss function by substituting for each $\theta$ the closed-form solution \eqref{eq: opt_alpha} for $\boldsymbol{\alpha^*}_\theta$ into the objective, resulting in a loss function that is differentiable with respect to $\theta$ and can be optimized using gradient-based  methods. Nevertheless, a direct implementation is computationally demanding: constructing and storing the Gram matrix $\mathbf{K}_\theta$ requires \(\mathcal O(N^2)\) memory, while solving the associated linear system in \eqref{eq: opt_alpha} generally requires \(\mathcal O(N^3)\) operations for each update of \(\theta\). This quickly becomes prohibitive as the dataset size increases.\\[2mm]
For general loss functions, or to avoid repeatedly solving the inner linear system, one may instead optimize $\theta$ and $\boldsymbol{\alpha}$ jointly using stochastic gradient methods applied to \eqref{eq: deep_kernel_ridge_problem}. Although this allows for mini-batch updates, each prediction still depends on all $N$ kernel coefficients $\boldsymbol{\alpha}$. Consequently, a mini-batch of size $B$ requires evaluating a $B\times N$ kernel block, leading to a computational cost of order $\mathcal{O}(BN)$ per iteration (and a memory cost still of order $\mathcal{O}(N^2)$ for storing $\mathbf{K}_\theta$), which remains challenging for large-scale problems. We address these limitations by replacing the kernel with an explicit finite-dimensional approximation based on random Fourier features.

\subsection{Random Fourier feature approximation}\label{sec: RFF_intro}
Random Fourier features (RFF), introduced by
\cite{rahimi2007random} and further analyzed by \cite{rahimi2008weighted, sutherland2015error, avron2017random}, provide an explicit finite-dimensional
approximation of continuous shift-invariant kernels. Importantly, the
deep kernel \(K_\theta\) need not itself be shift-invariant as a function of the
original inputs. We only require the base kernel $K$ in \eqref{eq:def_deep_kernel} acting on the
latent space $\mathcal{Z}_
\theta= \psi_\theta(\mathcal{X}) \subseteq \R^p$ to be shift-invariant. More precisely, a base kernel $K$ is called shift-invariant, if it depends only on the difference between its arguments, i.e.,
\[
K(z, z') = \kappa(z - z'), \quad \forall z, z' \in \mathbb{R}^{p},
\]
for some continuous function $\kappa:\R^p \to \R$. Examples of shift-invariant kernels include the radial basis function (RBF) kernel. The starting point for the approximation is Bochner's theorem, which characterizes continuous, positive-definite, shift-invariant kernels. 
 \begin{thm}[Bochner]
   A continuous function \(\kappa:\mathbb R^p\to \R\) is positive
definite if and only if there exists a unique finite non-negative Borel
measure \(p_K\) on \(\mathbb R^p\) such that
\[
\kappa(z)
=
\int_{\mathbb R^p}
e^{\mathrm i w^\top z}\,p_K(dw),
\qquad z\in\mathbb R^p.
\]
\end{thm}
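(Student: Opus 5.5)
The plan is to prove the two directions separately. The easy direction is that every such Fourier transform is positive definite; the substantive direction is that every continuous positive-definite function is such a transform. For the easy direction, assume $\kappa(z)=\int e^{\mathrm i w^\top z}\,p_K(dw)$ with $p_K$ a finite non-negative measure. Continuity follows from dominated convergence, since $|e^{\mathrm i w^\top z}|\le 1$ and $p_K$ is finite. For points $z_1,\dots,z_m$ and coefficients $c_j$, I would expand the quadratic form:
\[
\sum_{j,k} c_j \bar c_k\, \kappa(z_j-z_k)=\int_{\mathbb R^p}\Bigl|\sum_j c_j e^{\mathrm i w^\top z_j}\Bigr|^2 p_K(dw)\ \ge 0 .
\]
In the paper's real-valued setting, $\kappa$ is real and even, so real coefficients suffice.

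For the converse, let $\kappa$ be continuous and positive definite.
\begin{itemize}
\item First record the elementary consequences of positive definiteness, taking $m=1,2$: $\kappa(0)\ge 0$, $\kappa(-z)=\overline{\kappa(z)}$, and $|\kappa(z)|\le\kappa(0)$. So $\kappa$ is bounded.
\item Regularize with a Gaussian. For $\varepsilon>0$ set $\kappa_\varepsilon(z)=\kappa(z)e^{-\varepsilon|z|^2}$. This is a product of two positive-definite functions, so it is positive definite by the Schur product theorem, and it is integrable.
\item Define the candidate density
\[
g_\varepsilon(w)=(2\pi)^{-p}\int \kappa_\varepsilon(z)e^{-\mathrm i w^\top z}\,dz .
\]
\item Show $g_\varepsilon\ge 0$. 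The route is to approximate the continuous quadratic form
\[
\iint \kappa_\varepsilon(x-y)\,e^{-\mathrm i w^\top (x-y)}\,\phi_R(x)\phi_R(y)\,dx\,dy\ \ge 0
\]
by Riemann sums, each of which is non-negative by positive definiteness. Here $\phi_R$ is a Gaussian cutoff $e^{-|x|^2/R^2}$. Letting $R\to\infty$, after the change of variables $u=x-y$, recovers a positive multiple of $g_\varepsilon(w)$.
\end{itemize}

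Next I would establish total mass and inversion. Since $g_\varepsilon\ge 0$, I would integrate it against a Gaussian $e^{-\delta|w|^2}$ and use Fubini with the Gaussian Fourier transform. Letting $\delta\to0$ and using continuity of $\kappa_\varepsilon$ at $0$ gives $\int g_\varepsilon=\kappa(0)<\infty$. So $g_\varepsilon\in L^1$, and Fourier inversion gives $\kappa_\varepsilon(z)=\int e^{\mathrm i w^\top z}g_\varepsilon(w)\,dw$. The measures $\mu_\varepsilon=g_\varepsilon\,dw$ therefore all have mass $\kappa(0)$.

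The main obstacle is passing to the limit $\varepsilon\to0$, which requires tightness of $\{\mu_\varepsilon\}$. Here I would use the standard truncation inequality from Lévy's continuity theorem:
\[
\mu_\varepsilon\bigl(\{|w|_\infty>2/\delta\}\bigr)\le \frac{C}{\delta^p}\int_{[-\delta,\delta]^p}\bigl(\kappa_\varepsilon(0)-\operatorname{Re}\kappa_\varepsilon(z)\bigr)\,dz .
\]
The right-hand side is small uniformly in $\varepsilon\le1$ because $\kappa_\varepsilon\to\kappa$ uniformly on compacts and $\kappa$ is continuous at $0$. This continuity at $0$ is exactly where the continuity hypothesis enters. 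By Prokhorov's theorem, some subsequence $\mu_{\varepsilon_n}$ converges weakly to a finite measure $p_K$. Since $w\mapsto e^{\mathrm i w^\top z}$ is bounded and continuous, we get $\kappa(z)=\lim_n\kappa_{\varepsilon_n}(z)=\int e^{\mathrm i w^\top z}\,p_K(dw)$.

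Uniqueness of $p_K$ follows because a finite Borel measure is determined by its characteristic function. Concretely, if two measures have the same Fourier transform, then integrating against Gaussians and applying Fubini shows they agree on every Schwartz function. Such functions are dense enough to separate finite measures.

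Finally, since $\kappa$ is real, the measure $p_K$ is symmetric. This in turn yields the real cosine form $\kappa(z)=\int\cos(w^\top z)\,p_K(dw)$ that the paper uses for random Fourier features.
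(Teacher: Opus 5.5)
The paper gives no proof of this statement. Bochner's theorem is quoted as a classical result to justify the random Fourier feature construction, and only its consequences (symmetry of $p_K$ and the cosine representation) are used later.

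Your argument is a correct, self-contained version of the standard proof. Its steps are:
\begin{itemize}
\item Gaussian damping of $\kappa$.
\item Nonnegativity of the damped Fourier transform via Riemann sums of the quadratic form.
\item Total mass $\kappa(0)$ by monotone convergence.
\item Tightness from the truncation inequality. This holds with $C=2^{1-p}$: the cube average of $\cos(w^\top z)$ equals $\prod_j \sin(\delta w_j)/(\delta w_j)$, which is at most $\tfrac12$ in absolute value when $|w|_\infty>2/\delta$.
\item Prokhorov's theorem, and uniqueness of characteristic functions.
\end{itemize}
Compared with the bare citation, your proof also justifies the symmetry of $p_K$, which the paper simply asserts when passing to the cosine form.

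\textbf{Symmetry of $\kappa$.} Your derivation of $\kappa(-z)=\overline{\kappa(z)}$ from the case $m=2$ needs complex coefficients. Under the paper's real-coefficient definition of positive semidefiniteness, this symmetry is not implied and must be assumed. For instance, $\kappa(z)=\sin z$ on $\mathbb R$ has all real quadratic forms equal to $0$. Yet it cannot have the stated form, since that would force $p_K(\mathbb R)=\kappa(0)=0$. In the paper the symmetry comes from the symmetry of the kernel $K(z,z')=\kappa(z-z')$. For even real $\kappa$ the real and Hermitian quadratic forms agree, so your proof applies as written.

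\textbf{A shortcut.} The $\varepsilon$-damping and the appeal to the Schur product theorem are redundant. Your $\phi_R$ computation applied directly to $\kappa$ gives $\iint\kappa(x-y)e^{-\mathrm i w^\top(x-y)}\phi_R(x)\phi_R(y)\,dx\,dy=(\pi R^2/2)^{p/2}(2\pi)^p g_{1/(2R^2)}(w)$. Nonnegativity of every $g_\varepsilon$ then follows at once, without a second limit.
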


We further assume that the base kernel is normalized, so that \(\kappa(0)= 1\), and that $\kappa \in L^1(\R^p)$. Its spectral measure \(p_K\) is then an absolutely continuous probability
measure. Bochner's representation then becomes
\begin{equation}\label{eq: kernel_rep_positive_measure}
K(z,z')
=
\int_{\mathbb R^p}
e^{\mathrm i w^\top(z-z')}\,p_K(w) dw.
\end{equation}
This representation \eqref{eq: kernel_rep_positive_measure} thus shows that positive definite shift-invariant normalized kernels can be expressed as expectations over complex exponentials.
Following \cite{rahimi2007random}, since \(K\) is real-valued, its spectral measure is symmetric. The
imaginary part in \eqref{eq: kernel_rep_positive_measure} therefore
vanishes, and
\begin{equation*}
     K(z,z') = \kappa(z-z') = \int_{\mathbb{R}^p}\cos\left( w^\top (z-z')\right)p_K(w)\, dw.
\end{equation*}
Using the identity
\begin{equation*}
    \cos(u-v) = 2 \E_{b\sim \mathcal{U}(0,2\pi)} \left[\cos(u+b) \cos(v+b) \right], \quad u,v \in \mathbb{R},
\end{equation*}
we obtain

\begin{equation}\label{eq: kernel_rep_positive_measure_uniform}
K(z,z') = 2\, \E_{(w,b) \sim p_K \otimes \mathcal{U}(0,2\pi)}\left[ \cos\left( w^\top z+b\right) \cos\left( w^\top z'+b\right)\right] \, .
\end{equation}
Applying this representation to 
 \(z=\psi_\theta(x)\) and
\(z'=\psi_\theta(x')\) gives for the deep kernel \eqref{eq:def_deep_kernel} the formulation

\begin{equation*}
    K_\theta(x,x')= K(\psi_\theta(x),\psi_\theta(x')) = 2\, \E_{(w,b)\sim p_K \otimes \mathcal{U}(0,2\pi)}\left[ \cos\left( w^\top \psi_\theta(x)+b\right) \cos\left( w^\top \psi_\theta(x')+b\right)\right]. 
\end{equation*}

The key idea behind random Fourier features is to approximate this expectation by Monte Carlo sampling. For a given number of random features $D\ll N$, let
\[
(w_j,b_j)
\overset{\mathrm{i.i.d.}}{\sim}
p_K\otimes\mathcal U(0,2\pi),
\qquad j=1,\ldots,D,
\]
and define
\begin{equation} \label{eq: latent_repr}
y_\theta(x)
:=
\sqrt{\frac{2}{D}}
\begin{bmatrix}
\cos\bigl(w_1^\top\psi_\theta(x)+b_1\bigr)\\
\vdots\\
\cos\bigl(w_D^\top\psi_\theta(x)+b_D\bigr)
\end{bmatrix}
\in\mathbb R^D.
\end{equation}
The deep kernel $K_\theta$ in \eqref{eq: kernel_rep_positive_measure_uniform}  then admits the finite approximation
\begin{equation*}
\widetilde{K}_{\theta,D}(x,x'):=y_\theta(x)^\top  y_\theta(x'), \quad \forall x, x' \in \mathcal{X}.
\end{equation*}
 For every fixed $\theta \in \Theta$ and \(x,x' \in \mathcal{X}\), this estimator is unbiased:
\(
\mathbb E\!\big[\widetilde K_{\theta,D}(x,x')\big]
=
K_\theta(x,x').
\)
Hence, by the strong law of large numbers, $\widetilde{K}_{\theta,D}(x,x') \to K_\theta(x,x')$ almost surely as $D \to \infty$. Moreover, if $\mathcal{X}$ is  compact and $\psi_\theta$ is continuous, the latent set $\mathcal{Z}_\theta := \psi_\theta(\mathcal{X}) \subset \R^p$ is compact and standard uniform RFF approximation results, see \cite[Claim 1]{rahimi2007random} and  Proposition \ref{prop:uniform_RFF_theta} below, give for fixed $\theta \in \Theta$,
$$\sup_{x,x'\in \mathcal{X}} \big| \widetilde{K}_{\theta,D}(x,x')  - K_\theta(x,x') \big|  \to 0 \, \quad \text{ almost surely as } D\to \infty .$$ 
For every finite $D$, \(\widetilde K_{\theta,D}\) is itself a positive-semidefinite
kernel because it is the Euclidean inner product in $\R^D$ of the explicit feature
vectors \(y_\theta(x)\) and \(y_\theta(x')\). Let us define
\[
\mathbf Z_\theta
:=
\begin{bmatrix}
y_\theta(X_1)^\top\\
\vdots\\
y_\theta(X_N)^\top
\end{bmatrix}
\in\mathbb R^{N\times D}.
\]
Then, the approximate Gram matrix is
\[
\widetilde{\mathbf K}_{\theta,D}
=
\mathbf Z_\theta\mathbf Z_\theta^\top.
\]
Substituting this approximation into
\eqref{eq: deep_kernel_ridge_problem} gives
\begin{equation}
\label{eq:rff_compact}
\inf_{\theta\in\Theta}
\min_{\boldsymbol\alpha\in\mathbb R^N}
\left\{ \frac{1}{N}
\sum_{i=1}^N
\mathcal L\left(
Y_i
-
\bigl(\mathbf Z_\theta\mathbf Z_\theta^\top\boldsymbol\alpha\bigr)_i
\right)
+
\lambda
\left\|\mathbf Z_\theta^\top\boldsymbol\alpha\right\|_2^2
\right\}.
\end{equation}
The following result provides an equivalent primal formulation of
\eqref{eq:rff_compact} for fixed $\theta \in \Theta$.

\begin{prop}
\label{prop:rff_primal_equivalence}
Let $\lambda>0$, fix $\theta\in\Theta$, and let
$\mathcal L:\mathbb R\to\mathbb{R}$ be continuous, convex and bounded from below. Then,
\begin{align}
&\min_{\boldsymbol\alpha\in\mathbb R^N}
\left\{ \frac{1}{N}
\sum_{i=1}^N
\mathcal L\left(
Y_i
-
\bigl(\mathbf Z_\theta\mathbf Z_\theta^\top\boldsymbol\alpha\bigr)_i
\right)
+
\lambda
\left\|\mathbf Z_\theta^\top\boldsymbol\alpha\right\|_2^2
\right\}
=
\min_{\boldsymbol\beta\in\mathbb R^D}
\left\{ \frac{1}{N}
\sum_{i=1}^N
\mathcal L\left(
Y_i
-
\bigl(\mathbf Z_\theta\boldsymbol\beta\bigr)_i
\right)
+
\lambda\|\boldsymbol\beta\|_2^2
\right\}.
\label{eq:rff_equivalent_problems}
\end{align}
Moreover, the primal problem admits a unique minimizer
$\boldsymbol\beta^\star_\theta\in\mathbb R^D$, and the corresponding predictor is
\[
f^*_{D,\theta}(x)
=
(\boldsymbol\beta^\star_\theta)^\top y_\theta(x), \qquad x \in \mathcal{X}.
\]
 In particular, for the square loss $\mathcal{L}(u) = u^2$, we have
\begin{equation}
\label{eq:rff_optimal_beta}
\boldsymbol\beta^\star_\theta
=
\left(
\mathbf Z_\theta^\top\mathbf Z_\theta
+\lambda N I_D
\right)^{-1}
\mathbf Z_\theta^\top\mathbf Y \, .
\end{equation}
\end{prop}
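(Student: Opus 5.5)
The plan is to relate the two problems through the change of variables $\boldsymbol\beta=\mathbf Z_\theta^\top\boldsymbol\alpha$ and to treat the mismatch between $\mathbb R^D$ and the range of $\mathbf Z_\theta^\top$ by an orthogonal decomposition.

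Write $G(\boldsymbol\beta):=\frac1N\sum_i\mathcal L(Y_i-(\mathbf Z_\theta\boldsymbol\beta)_i)+\lambda\|\boldsymbol\beta\|_2^2$. The dual objective in \eqref{eq:rff_equivalent_problems} equals $G(\mathbf Z_\theta^\top\boldsymbol\alpha)$, since $\mathbf Z_\theta\mathbf Z_\theta^\top\boldsymbol\alpha=\mathbf Z_\theta(\mathbf Z_\theta^\top\boldsymbol\alpha)$. Hence the left-hand side is the infimum of $G$ over the subspace $R:=\operatorname{range}(\mathbf Z_\theta^\top)$, while the right-hand side is the infimum over all of $\mathbb R^D$. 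This immediately gives ``$\geq$''.

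\emph{Existence and uniqueness for the primal problem.} Since $\mathcal L$ is continuous and convex, $G$ is continuous and convex. Adding $\lambda\|\boldsymbol\beta\|_2^2$ with $\lambda>0$ makes $G$ strictly convex. Because $\mathcal L$ is bounded below by some constant $c$, we have $G(\boldsymbol\beta)\ge c+\lambda\|\boldsymbol\beta\|^2$, so $G$ is coercive. Sublevel sets are therefore compact, a minimizer exists by Weierstrass, and strict convexity makes it unique; call it $\boldsymbol\beta^\star_\theta$.

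\emph{The reverse inequality.} This is the only step with real content, and it is a finite-dimensional representer argument. Decompose any $\boldsymbol\beta=\boldsymbol\beta_R+\boldsymbol\beta_\perp$ with $\boldsymbol\beta_R\in R$ and $\boldsymbol\beta_\perp\in R^\perp=\ker\mathbf Z_\theta$. Then $\mathbf Z_\theta\boldsymbol\beta=\mathbf Z_\theta\boldsymbol\beta_R$, so the loss term is unchanged, while
\[
\|\boldsymbol\beta\|^2=\|\boldsymbol\beta_R\|^2+\|\boldsymbol\beta_\perp\|^2\ge\|\boldsymbol\beta_R\|^2 .
\]
Thus $G(\boldsymbol\beta_R)\le G(\boldsymbol\beta)$, with strict inequality when $\boldsymbol\beta_\perp\neq0$. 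Consequently $\boldsymbol\beta^\star_\theta\in R$, i.e.\ $\boldsymbol\beta^\star_\theta=\mathbf Z_\theta^\top\boldsymbol\alpha^\star$ for some $\boldsymbol\alpha^\star$. This $\boldsymbol\alpha^\star$ attains the left-hand infimum, so both minima exist and are equal, which justifies writing ``min'' on the left.

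\emph{The predictor.} Take $\boldsymbol\beta^\star_\theta=\mathbf Z_\theta^\top\boldsymbol\alpha^\star$. Then
\[
f^\star_{D,\theta}(x)=\sum_j\alpha^\star_j\widetilde K_{\theta,D}(x,X_j)=(\mathbf Z_\theta^\top\boldsymbol\alpha^\star)^\top y_\theta(x)=(\boldsymbol\beta^\star_\theta)^\top y_\theta(x).
\]
The predictor depends only on $\boldsymbol\beta^\star_\theta$, so it is well defined even when $\boldsymbol\alpha^\star$ is not unique.

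\emph{Square loss.} For $\mathcal L(u)=u^2$, $G$ is a differentiable strictly convex quadratic, so the first-order condition characterizes the minimizer:
\[
-\tfrac2N\mathbf Z_\theta^\top(\mathbf Y-\mathbf Z_\theta\boldsymbol\beta)+2\lambda\boldsymbol\beta=0 ,
\]
that is, $(\mathbf Z_\theta^\top\mathbf Z_\theta+\lambda NI_D)\boldsymbol\beta=\mathbf Z_\theta^\top\mathbf Y$. The matrix on the left is positive definite, hence invertible, which yields \eqref{eq:rff_optimal_beta}.
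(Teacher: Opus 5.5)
Your proposal is correct and follows essentially the same route as the paper: the substitution $\boldsymbol\beta=\mathbf Z_\theta^\top\boldsymbol\alpha$, the orthogonal splitting $\mathbb R^D=\operatorname{range}(\mathbf Z_\theta^\top)\oplus\ker(\mathbf Z_\theta)$ to show that every minimizer lies in the range, coercivity plus strict convexity for existence and uniqueness, and the normal equations for the square loss. Two small differences are worth noting, though the argument is the same. You establish existence of $\boldsymbol\beta^\star_\theta$ before placing it in the range, which also justifies writing ``min'' on the left-hand side, and you check explicitly that the predictor agrees with the kernel expansion $\sum_j\alpha^\star_j\widetilde K_{\theta,D}(\cdot,X_j)$.
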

\begin{proof}
The proof is standard and is deferred in Appendix \ref{proof_Prop_2.1}. 
\end{proof}

By Proposition~\ref{prop:rff_primal_equivalence}, the RFF approximation
of the deep kernel regression \eqref{eq:first_RFF} is equivalently given by
\begin{equation*}
\inf_{\theta\in\Theta}\;
\min_{\boldsymbol\beta\in\mathbb R^D}
\left\{ \frac{1}{N}
\sum_{i=1}^N
\mathcal{L}\left(
Y_i-\boldsymbol\beta^\top y_\theta(X_i)
\right)
+
\lambda\|\boldsymbol\beta\|_2^2
\right\} ,
\end{equation*}
When the outer infimum is attained (see Theorem \ref{thm:existence_thm_hedging} below), we let \(\theta^\star_D\) be a
minimizing parameter and set
\(\boldsymbol\beta^\star_D
:=\boldsymbol\beta^\star_{\theta^\star_D}\).
The corresponding predictor is then
\[
f^\star_D(\cdot)
:=
f^\star_{D,\theta^\star_D}(\cdot)
=
(\boldsymbol\beta^\star_D)^\top
y_{\theta^\star_D}(\cdot) \, ,
\] see also Theorem \ref{thm:RFF_hedging_convergence} below for convergence
guarantees of the analogous RFF approximation in the hedging setting. This yields a finite-dimensional randomized feature representation of the infinite-dimensional RKHS, enabling kernel methods to scale linearly in the number of random features \(D\).
More precisely, the \(N\times N\) Gram matrix $\mathbf{K}_\theta$ is replaced by an explicit 
\(N\times D\) Fourier feature matrix $\mathbf{Z}_\theta$ and a coefficient vector $\boldsymbol\beta\in\mathbb R^D$. For the squared loss, forming the normal equations \eqref{eq:rff_optimal_beta} requires
\(\mathcal O(ND^2)\) operations and solving them requires
\(\mathcal O(D^3)\) operations, with \(\mathcal O(ND)\) memory if the
feature matrix is stored. For a general continuous convex loss, no normal equations are available, but we will see that the joint (first-order) stochastic optimization
of \(\theta\) and \(\boldsymbol\beta\) has a per-batch cost of order $\mathcal{O}(BD)$ and
does not require storing the full Gram matrix. This makes RFF-based deep kernel regression substantially more scalable than its exact
kernel-based counterpart, especially when using stochastic gradient descent techniques.

\section{Deep kernel hedging}\label{sec:deep_kernel_hedging}

In this section, we formulate the deep kernel hedging problem under the empirical measure. Let \(S=(S_t)_{t\in[0,T]}\) denote the price process
of a traded risky asset, let \(B=(B_t)_{t\in[0,T]}\) be a strictly positive
cash account with \(B_0=1\), and let \(H\) be a terminal liability payable
at time \(T\). Trading takes place at the discrete dates
\[
0=t_0<t_1<\cdots<t_n=T.
\]

For \(k=0,\ldots,n-1\), let \(\xi_{t_k}\) and \(\eta_{t_k}\) denote,
respectively, the numbers of units held in the risky asset and in the cash
account over the interval \([t_k,t_{k+1})\).  These holdings are assumed to
be chosen using only the information available at time \(t_k\). In
particular, if \((\mathcal F_t)_{t\in[0,T]}\) denotes the market
filtration, then \(\xi_{t_k}\) and \(\eta_{t_k}\) are
\(\mathcal F_{t_k}\)-measurable. Immediately after rebalancing at time \(t_k\), the portfolio value is
\begin{equation*}
    V_{t_k} = \xi_{t_k} S_{t_k} + \eta_{t_k} B_{t_k}, 
    \qquad k = 0,\dots,n-1,
\end{equation*}
with $V_0 = v_0 \in \mathbb{R}$ denoting the initial value of the hedging portfolio.

\begin{defn}[Self-financing property]\label{def: self-financing}
    The hedging strategy $(\xi_{t_k},\eta_{t_k})_{k=0}^{n-1}$ is self-financing if 
    \begin{equation}\label{eq: self-financing}
        V_{t_{k+1}}-V_{t_k}=\xi_{t_k} \left(S_{t_{k+1}}-S_{t_k}\right) + \eta_{t_k} \left(B_{t_{k+1}}-B_{t_k}\right), \qquad k=0,...,n-1.
    \end{equation}
    This condition means that all changes in the portfolio value are solely due to the gains and losses of the assets already held in the portfolio. In particular, no external capital is injected into or withdrawn from the strategy.
\end{defn}

Throughout this paper, we consider hedging strategies satisfying the
self-financing condition of Definition~\ref{def: self-financing}. In this case, the position in the cash account can be expressed in terms of the position in the risky asset as
\[
  \eta_{t_k}
    =
    \frac{V_{t_k}-\xi_{t_k}S_{t_k}}{B_{t_k}},
    \qquad k=0,\ldots,n-1.
\]
Hence, the self-financing condition \eqref{eq: self-financing} implies
\begin{equation*}
    V_{t_k}
    =
    B_{t_k}
    \left[
        v_0
        +
        \sum_{j=0}^{k-1}
        \xi_{t_j}
        \left(
            \frac{S_{t_{j+1}}}{B_{t_{j+1}}}
            -
            \frac{S_{t_j}}{B_{t_j}}
        \right)
    \right],
    \qquad k=1,\ldots,n.
\end{equation*}
In particular, the terminal value of the hedging portfolio is
\begin{equation} \label{eq:value_self_financing_port_at_k2}
    V_T
    =
    B_T
    \left[
        v_0
        +
        \sum_{k=0}^{n-1}
        \xi_{t_k}
        \left(
            \frac{S_{t_{k+1}}}{B_{t_{k+1}}}
            -
            \frac{S_{t_k}}{B_{t_k}}
        \right)
    \right].
\end{equation}

Let \(X_{t_k}\in \mathcal{X}\) be a vector of features observable at time
\(t_k\). We consider feedback strategies of the form
\begin{equation*}
\xi_{t_k}
=
\phi(X_{t_k}),
\qquad k=0,\ldots,n-1,
\end{equation*}
where \(\phi: \mathcal{X}\to\mathbb R\) is measurable. The input feature vector $X_{t_k}$ can include time-augmented market variables such as the underlying price $S_{t_k}$, the cash account $B_{t_k}$, asset volatility, etc. Moreover, in more general non-Markovian settings where the hedging strategy may depend on the path of the market features, the learning inputs may include truncated time-augmented signatures of the market features. Path-signatures, originally introduced by \cite{chen1957integration} as sequences of iterated integrals, provide a compact and tractable way to encode path-dependent information. See \cite{abi2025hedging, cirone2025rough} for applications of signatures to the learning of optimal hedging strategies via machine learning approaches. For further details on path-signatures, we refer to Appendix \ref{appendix:sig} and the references therein.\\ 

Our objective is now to learn the optimal hedging functional $\phi$ from sample data using a deep kernel-based approach. We assume access to a dataset of $N$ sample paths observed at the trading dates, defined as
\[
\mathcal D_N
=
\left\{
\left(
\left(S_{t_k}^i,B_{t_k}^i,X_{t_k}^i\right)_{k=0}^{n}, H^i
\right)
\right\}_{i=1}^N .
\]
Each sample path $i$ therefore contains the underlying trajectory, the cash account and input features. For each path $i$, we define the gain associated with one
unit of the risky asset held over \([t_k,t_{k+1})\) by
\begin{equation*}
G_{t_{k+1}}^i
:=
B_T^i
\left(
\frac{S_{t_{k+1}}^i}{B_{t_{k+1}}^i}
-
\frac{S_{t_k}^i}{B_{t_k}^i}
\right).
\end{equation*}
It follows from \eqref{eq:value_self_financing_port_at_k2} that the terminal hedging error (or negative P\&L) $\mathcal{E}_i$ 
generated by \(\phi\) along path \(i\) is the random variable
\begin{equation} \label{eq:hedg_error}
\mathcal{E}_i(\phi; v_0) := H^i-V_T^i
=
H^i-B_T^iv_0
-
\sum_{k=0}^{n-1}
\phi(X_{t_k}^i)G_{t_{k+1}}^i \, , \end{equation} which we simply write $\mathcal{E}_i(\phi)$ when the initial capital $v_0$ is clear from the context. Let \(\mathcal L:\mathbb R\to \mathbb{R}\) be a continuous convex bounded from below loss function, and let \(\mathcal M\) be a prescribed class of admissible measurable hedging functions. The empirical hedging problem can then be written
\begin{equation*}
\inf_{\phi\in\mathcal M}
\frac{1}{N}\sum_{i=1}^{N}
\mathcal{L}\left( \mathcal{E}_i(\phi) \right) = \inf_{\phi\in\mathcal M}
\frac{1}{N}\sum_{i=1}^{N}
\mathcal{L}\left(
H^i
-
B_T^iv_0
-
\sum_{k=0}^{n-1}
\phi({X}^i_{t_k})
\,G^i_{t_{k+1}}
\right).
\end{equation*}
The choice
\(
\mathcal L(u)=u^2
\)
gives the classical quadratic hedging problem, see Section \ref{subsec:QH}. Other convex losses can be
used to emphasize different properties of the terminal hedging error $\mathcal{E}_i(\phi)$, such as the CVaR loss \eqref{eq:deep_kernel_cvar_hedging} in Section \ref{subsec:CVaR hedging}.
\\

We now restrict the space of measurable functionals $\mathcal{M}$ to RKHS. In this sense, for $\theta\in \Theta$, let $\mathcal{H}_{K_{\theta}}$ be the RKHS associated with the kernel $K_\theta$ on $\mathcal{X}$. The regularized deep kernel hedging problem is
\begin{equation}\label{eq: deep_kernel_hedging}
\inf_{\theta\in \Theta, \, \phi\in \mathcal{H}_{K_\theta}} 
\frac{1}{N} \sum_{i=1}^N
\mathcal{L}\left(
H^i - B_T^iv_0 - \sum_{k=0}^{n-1} \phi({X}^i_{t_k}) \, G^i_{t_{k+1}}
\right)
+
\lambda \|\phi\|_{\mathcal{H}_{K_\theta}}^2 ,
\end{equation}
with $\lambda>0$. For fixed $\theta\in\Theta$, define the functional 

\begin{equation*} 
    \mathcal{J}_{\theta}(\phi) :=\frac{1}{N} \sum_{i=1}^N
\mathcal{L}\left(
H^i - B_T^iv_0 - \sum_{k=0}^{n-1} \phi({X}^i_{t_k}) \, G^i_{t_{k+1}}
\right)
+
\lambda \|\phi\|_{\mathcal{H}_{K_\theta}}^2 , \quad \phi \in \mathcal{H}_{K_\theta}, 
\end{equation*}

such that the hedging problem can be rewritten as 
\begin{equation} \label{eq: inner_outer}
    \inf_{\theta\in \Theta}\bigg{\{}~\inf_{\phi\in \mathcal{H}_{K_\theta}} \mathcal{J}_{\theta}(\phi) \bigg{\}}. 
\end{equation}

We first address in Section \ref{sec:hedging_representer} the well-posedness of the inner problem in \eqref{eq: inner_outer} for fixed $\theta \in \Theta$ and show that, using a generalized version of the representer theorem, this infinite-dimensional optimization problem admits an equivalent finite-dimensional reformulation, leading to an explicit characterization of the unique optimizer $\phi^\star$.  Then, in Section \ref{sec:existence_optimal_deep_kernel}, we reformulate the deep kernel hedging problem and establish the existence of a minimizer $\theta^\star \in \Theta$. Finally, in Section \ref{sec:rff_deep_kernel_hedging}, we develop a Random Fourier Feature approximation for deep kernel hedging. The following Table~\ref{tab:hedging_notation} summarizes the notations used in following sections.
\begin{table}[H]
\centering
\small
\renewcommand{\arraystretch}{1.2}
\caption{Notation for hedging functions. Here, $N$ is the number
of training paths and $D$ the number of RFF.}
\label{tab:hedging_notation}
\begin{tabular}{@{}lp{0.74\linewidth}@{}}
\toprule
Notation & Description \\
\midrule

$\phi$
& Generic hedging function, defining the risky-asset position through
$\xi_{t_k}=\phi(X_{t_k})$. \\

$\phi_{N,\theta}^\star$
& Optimal exact RKHS hedging function for a fixed kernel
parameter $\theta$. \\

$\phi_N^\star$ 
& A globally optimal exact deep kernel hedge $\phi_N^\star=\phi_{N,\theta^\star}^\star$, where $\theta^\star$
is also optimized. \\
$\phi^*_{D,\theta}$ & Optimal RFF hedging function for a fixed kernel
parameter $\theta$. \\
$\phi_D^\star$
& A globally optimal hedge for the RFF approximation, given by
$\phi_D^\star(x) = (\boldsymbol{\beta}_D^\star)^\top y_{\theta_D^\star}(x)$,
where both $\theta^\star_D$ and $\boldsymbol{\beta}^\star_D$ are optimized. \\

$\widehat{\phi}_D$
& RFF hedging function returned by numerical training (without optimality guarantee). \\ 
\bottomrule
\end{tabular}
\end{table}

\subsection{Representer theorem and finite-dimensional formulation}
\label{sec:hedging_representer}
We first focus on the inner problem in \eqref{eq: inner_outer} for fixed $\theta \in \Theta$.
\begin{lem}[Existence and uniqueness for the inner problem] \label{lem: existence and uniqueness}
    Suppose that \(\mathcal L:\mathbb R\to \mathbb{R}\) is continuous,
convex and bounded from below, and let \(\lambda>0\). Then, for every fixed
\(\theta\in\Theta\), the problem
\begin{equation*}
\inf_{\phi\in\mathcal H_{K_\theta}}
\mathcal J_\theta(\phi)
\end{equation*}
admits a unique minimizer
\(\phi_{N,\theta}^\star\in\mathcal H_{K_\theta}\).
\end{lem}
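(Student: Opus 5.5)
The plan is the direct method of the calculus of variations on the Hilbert space $\mathcal H_{K_\theta}$. We show that $\mathcal J_\theta$ is (i) convex and in fact strictly convex, (ii) continuous for the norm topology, and (iii) coercive. Existence then follows from weak lower semicontinuity and the reflexivity of $\mathcal H_{K_\theta}$. Uniqueness follows from strict convexity.

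First, for each path $i$ the map
\[
\phi\mapsto \mathcal{E}_i(\phi)=H^i-B_T^iv_0-\sum_{k=0}^{n-1}\phi(X^i_{t_k})G^i_{t_{k+1}}
\]
is affine and continuous on $\mathcal H_{K_\theta}$. By the reproducing property,
\[
\phi(X^i_{t_k})=\langle \phi, K_\theta(\cdot,X^i_{t_k})\rangle_{\mathcal H_{K_\theta}},
\]
so point evaluations are bounded linear functionals, with $|\phi(x)|\le \|\phi\|_{\mathcal H_{K_\theta}}\sqrt{K_\theta(x,x)}$. Composing with the continuous convex function $\mathcal L$ shows that each $\phi\mapsto\mathcal L(\mathcal E_i(\phi))$ is convex and continuous. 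The penalty $\lambda\|\phi\|^2_{\mathcal H_{K_\theta}}$ is continuous and strictly convex because $\lambda>0$; strict convexity of the squared norm in a Hilbert space follows from the parallelogram identity. Hence $\mathcal J_\theta$ is a continuous, strictly convex functional.

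Second, coercivity. Let $c:=\inf_u \mathcal L(u)>-\infty$. Then
\[
\mathcal J_\theta(\phi)\ge c+\lambda\|\phi\|^2_{\mathcal H_{K_\theta}}\to\infty \quad\text{as } \|\phi\|_{\mathcal H_{K_\theta}}\to\infty .
\]
Take a minimizing sequence $(\phi_m)$. Its values are eventually bounded by, say, $\mathcal J_\theta(0)+1$, so $(\phi_m)$ is bounded in norm. By reflexivity (Banach–Alaoglu together with the Riesz representation in Hilbert space), a subsequence converges weakly to some $\phi^\star$.

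A convex, norm-continuous functional is weakly lower semicontinuous, since its sublevel sets are closed and convex and hence weakly closed by Mazur's lemma. Therefore
\[
\mathcal J_\theta(\phi^\star)\le\liminf_m \mathcal J_\theta(\phi_m)=\inf\mathcal J_\theta,
\]
and $\phi^\star$ is a minimizer. An alternative route avoids weak topologies: point evaluations are weakly continuous, so $\mathcal E_i(\phi_m)\to\mathcal E_i(\phi^\star)$ and the loss term converges, while the norm is weakly lsc. Either argument works.

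Finally, uniqueness. If $\phi_1\neq\phi_2$ were both minimizers with common value $J^\star$, strict convexity would give
\[
\mathcal J_\theta\bigl(\tfrac12(\phi_1+\phi_2)\bigr)<J^\star,
\]
which is a contradiction. The main point requiring care is the existence step, namely passing to the limit along a minimizing sequence in an infinite-dimensional space. We handle it with the weak-compactness argument above, using only continuity of $\mathcal L$, lower boundedness and $\lambda>0$. No growth condition on $\mathcal L$ is needed, because the loss term enters only through finitely many weakly continuous evaluations.
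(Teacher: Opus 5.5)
Your proposal is correct and follows essentially the same argument as the paper. The paper takes a minimizing sequence, bounds it using the lower bound on $\mathcal L$ and $\lambda>0$, extracts a weakly convergent subsequence by reflexivity, passes to the limit through weak continuity of the finitely many functionals $L_i(\phi)=\sum_{k}G^i_{t_{k+1}}\phi(X^i_{t_k})$ together with weak lower semicontinuity of the norm (exactly your ``alternative route''), and obtains uniqueness from strict convexity; your Mazur-based lower semicontinuity argument is an equally valid standard variant.
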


\begin{proof}
The proof relies on standard arguments for minimization problems in Hilbert spaces that can be found in \cite{bauschke2017correction}. Since the zero function belongs to $\mathcal H_{K_\theta}$, we have
\[
0\leq  \inf_{\phi\in\mathcal H_{K_\theta}}
\mathcal J_\theta(\phi)\leq\mathcal J_\theta(0)<\infty.
\]
Choose a minimizing sequence
$(\phi_m)_{m\geq1}\subset\mathcal H_{K_\theta}$ such that
\[
\mathcal J_\theta(\phi_m)
\leq
 \inf_{\phi\in\mathcal H_{K_\theta}}
\mathcal J_\theta(\phi) +\frac1m.
\]
Since \(\mathcal L\) is bounded from below, there exists a constant
\(C_{\mathcal L}\in\mathbb R\) such that
\(\mathcal L\geq C_{\mathcal L}\). Hence

$$
C_{\mathcal L}
+\lambda\|\phi_m\|_{\mathcal H_{K_\theta}}^2
\leq
\mathcal J_\theta(\phi_m)
\leq
\inf_{\phi\in\mathcal H_{K_\theta}}
\mathcal J_\theta(\phi)+\frac1m
\leq
\mathcal J_\theta(0)+1.
$$
Therefore,
$$
\lambda\|\phi_m\|_{\mathcal H_{K_\theta}}^2
\leq
\mathcal J_\theta(0)+1-C_{\mathcal L},
$$
and consequently
$$
\|\phi_m\|_{\mathcal H_{K_\theta}}
\leq
\sqrt{\frac{\mathcal J_\theta(0)+1-C_{\mathcal L}}{\lambda}}.
$$
Thus, \((\phi_m)_{m\geq1}\) is bounded in
\(\mathcal H_{K_\theta}\). As a Hilbert space is reflexive, there exist a subsequence, still denoted by \((\phi_m)\), and some
\(\overline\phi\in\mathcal H_{K_\theta}\) such that
\[
\phi_m\rightharpoonup\overline\phi
\qquad\text{weakly in }\mathcal H_{K_\theta}.
\]

For each \(i=1,\ldots,N\), define
\[
L_i(\phi)
:=
\sum_{k=0}^{n-1}
G_{t_{k+1}}^i\,\phi(X_{t_k}^i).
\]
Because point evaluations are continuous linear functionals on an RKHS,
\(L_i\) is a continuous linear functional on
\(\mathcal H_{K_\theta}\). Hence,
\[
L_i(\phi_m)\rightarrow L_i(\overline\phi),
\qquad i=1,\ldots,N.
\]
The continuity of \(\mathcal L\) then gives
\[
\frac1N\sum_{i=1}^N
\mathcal L\left(
H^i-B_T^iv_0-L_i(\phi_m)
\right)
\rightarrow
\frac1N\sum_{i=1}^N
\mathcal L\left(
H^i-B_T^iv_0-L_i(\overline\phi)
\right).
\]
Moreover, the squared Hilbert-space norm is weakly lower
semicontinuous. Consequently,
\[
\mathcal J_\theta(\overline\phi)
\leq
\liminf_{m\to\infty}\mathcal J_\theta(\phi_m),
\]
so \(\overline\phi\) is a minimizer. Finally, the empirical loss is convex in \(\phi\), while
\(\lambda\|\phi\|_{\mathcal H_{K_\theta}}^2\) is strictly convex because
\(\lambda>0\). Thus, \(\mathcal J_\theta\) is strictly convex and its
minimizer is unique.
\end{proof}
We next derive a finite-dimensional representation of this minimizer \(\phi_{N,\theta}^\star\).

\begin{thm}[Representer theorem for the inner problem]\label{thm:representer_thm_inner_problem}
Fix \(\theta\in\Theta\). For each \(i=1,\ldots,N\), define
\begin{equation*}
g_{\theta,i}(\cdot)
:=
\sum_{k=0}^{n-1}
G_{t_{k+1}}^i
K_\theta\bigl(X_{t_k}^i,\cdot\bigr)
\in\mathcal H_{K_\theta},
\end{equation*}
and let the associated hedging Gram matrix
\(\mathbf Q_\theta\in\mathbb R^{N\times N}\) be given by
\begin{equation}
\label{eq:def_Q_theta}
(\mathbf Q_\theta)_{ij}
:=
\langle g_{\theta,i},g_{\theta,j}\rangle_{\mathcal H_{K_\theta}}
=
\sum_{k,\ell=0}^{n-1}
G_{t_{k+1}}^i G_{t_{\ell+1}}^j
K_\theta\bigl(X_{t_k}^i,X_{t_\ell}^j\bigr).
\end{equation} 
Moreover, define
\begin{equation}
\label{eq:def_H_v0}
\mathbf H^{v_0}
:=
\left(
H^1-B_T^1v_0,\ldots,H^N-B_T^Nv_0
\right)^\top \in \R^N.
\end{equation}
Then,
\begin{align}
\min_{\phi\in\mathcal H_{K_\theta}}
\mathcal J_\theta(\phi) 
=
\min_{\bm\alpha\in\mathbb R^N}
\left\{
\frac1N
\sum_{i=1}^N
\mathcal L\left( ( \mathbf H^{v_0} - \mathbf Q_\theta\bm\alpha)_i \right)
+
\lambda
\bm\alpha^\top
\mathbf Q_\theta
\bm\alpha
\right\}.
\label{eq:matrix_finite_dim_reformulation}
\end{align}
Moreover, the unique optimal hedging function is of the form
\begin{equation}
\label{eq:phi_alpha}
\phi_{N,\theta}^\star(\cdot)
= \sum_{i=1}^N
\alpha_{\theta,i}^\star \, 
g_{\theta,i} = 
\sum_{i=1}^N
\alpha_{\theta,i}^\star
\sum_{k=0}^{n-1}
G_{t_{k+1}}^i
K_\theta(X_{t_k}^i,\cdot),
\end{equation}
where \(\bm\alpha^\star_\theta\) is any minimizer of
\eqref{eq:matrix_finite_dim_reformulation}.
\end{thm}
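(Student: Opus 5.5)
The plan is to run the classical orthogonal-decomposition argument for the representer theorem, adapted to the fact that the loss depends on $\phi$ only through the linear functionals $L_i(\phi)=\sum_{k}G^i_{t_{k+1}}\phi(X^i_{t_k})$ introduced in the proof of Lemma~\ref{lem: existence and uniqueness}. The first step is to check that the reproducing property gives $L_i(\phi)=\langle \phi, g_{\theta,i}\rangle_{\mathcal H_{K_\theta}}$ for every $\phi\in\mathcal H_{K_\theta}$. This is immediate by linearity, since $\phi(X^i_{t_k})=\langle\phi,K_\theta(X^i_{t_k},\cdot)\rangle$. In particular $g_{\theta,i}\in\mathcal H_{K_\theta}$ as a finite combination of kernel sections. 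The same expansion gives the formula for $(\mathbf Q_\theta)_{ij}$ in \eqref{eq:def_Q_theta} by bilinearity, using $\langle K_\theta(x,\cdot),K_\theta(x',\cdot)\rangle=K_\theta(x,x')$.

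Next, let $V:=\mathrm{span}\{g_{\theta,1},\dots,g_{\theta,N}\}$. This subspace is finite-dimensional, hence closed, so every $\phi$ decomposes as $\phi=\phi_V+\phi_\perp$ with $\phi_\perp\perp V$. Then $L_i(\phi)=L_i(\phi_V)$ for all $i$, while $\|\phi\|^2=\|\phi_V\|^2+\|\phi_\perp\|^2$. Hence $\mathcal J_\theta(\phi)\ge \mathcal J_\theta(\phi_V)$, with strict inequality when $\phi_\perp\neq 0$, because $\lambda>0$. By Lemma~\ref{lem: existence and uniqueness} the unique minimizer $\phi^\star_{N,\theta}$ exists, and it must therefore lie in $V$. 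We conclude that $\inf_{\mathcal H_{K_\theta}}\mathcal J_\theta=\inf_V\mathcal J_\theta$ and that both infima are attained.

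Third, parametrize $V$ as $\phi=\sum_j\alpha_j g_{\theta,j}$ with $\bm\alpha\in\mathbb R^N$. Then $L_i(\phi)=\sum_j\alpha_j\langle g_{\theta,j},g_{\theta,i}\rangle=(\mathbf Q_\theta\bm\alpha)_i$ and $\|\phi\|^2=\bm\alpha^\top\mathbf Q_\theta\bm\alpha$. So $\mathcal J_\theta(\sum_j\alpha_j g_{\theta,j})$ equals the finite-dimensional objective $F(\bm\alpha)$ in \eqref{eq:matrix_finite_dim_reformulation}, using the vector $\mathbf H^{v_0}$ from \eqref{eq:def_H_v0}. The map $\bm\alpha\mapsto\sum_j\alpha_j g_{\theta,j}$ is surjective onto $V$, so $\inf_{\bm\alpha}F=\inf_V\mathcal J_\theta=\min_{\mathcal H_{K_\theta}}\mathcal J_\theta$. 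The infimum over $\bm\alpha$ is attained: any $\bm\alpha$ mapping to $\phi^\star_{N,\theta}$ is a minimizer, which justifies writing $\min$ on both sides.

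The main subtlety is that $\mathbf Q_\theta$ may be singular, because the $g_{\theta,i}$ need not be linearly independent, so minimizers $\bm\alpha$ need not be unique. I will handle this as follows. If $\bm\alpha^\star$ is any minimizer of $F$, then $\phi:=\sum_i\alpha^\star_i g_{\theta,i}$ satisfies $\mathcal J_\theta(\phi)=F(\bm\alpha^\star)=\min\mathcal J_\theta$. By the uniqueness in Lemma~\ref{lem: existence and uniqueness}, $\phi=\phi^\star_{N,\theta}$. This yields \eqref{eq:phi_alpha} for every minimizer, and expanding $g_{\theta,i}$ gives the double-sum expression.
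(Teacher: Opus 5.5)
Your proposal is correct and follows essentially the same route as the paper. Both arguments decompose $\mathcal H_{K_\theta}$ orthogonally along $\operatorname{span}\{g_{\theta,1},\ldots,g_{\theta,N}\}$ and then parametrize $\phi=\sum_j\alpha_j g_{\theta,j}$, so that $L_i(\phi)=(\mathbf Q_\theta\bm\alpha)_i$ and $\|\phi\|^2_{\mathcal H_{K_\theta}}=\bm\alpha^\top\mathbf Q_\theta\bm\alpha$. Your last step uses the uniqueness in Lemma~\ref{lem: existence and uniqueness} to show that every minimizer $\bm\alpha^\star$ reproduces $\phi^\star_{N,\theta}$, even when $\mathbf Q_\theta$ is singular; together with your check that the minimum over $\bm\alpha$ is attained, this spells out what the paper's ``substitution'' step leaves implicit.
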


\begin{proof}
For \(i=1,\ldots,N\), define
$$L_i(\phi)
:=
\sum_{k=0}^{n-1}
G_{t_{k+1}}^i\,\phi(X_{t_k}^i).$$
Since $\mathcal H_{K_\theta}$ is a RKHS, the reproducing property implies that
\begin{equation*}
L_i(\phi)
=
\langle\phi,g_{\theta,i}\rangle_{\mathcal H_{K_\theta}}.
\end{equation*} 
Let
\[
\mathcal H_{\theta,0}
:=
\operatorname{span}
\left\{
g_{\theta,1},\ldots,g_{\theta,N}
\right\}.
\]
Since \(\mathcal H_{\theta,0}\) is finite-dimensional, it is closed.
Every \(\phi\in\mathcal H_{K_\theta}\) therefore admits the orthogonal
decomposition
\[
\phi
=
\phi_\parallel+\phi_\perp,
\qquad
\phi_\parallel\in\mathcal H_{\theta,0},
\quad
\phi_\perp\in\mathcal H_{\theta,0}^\perp.
\]
Because \(g_{\theta,i}\in\mathcal H_{\theta,0}\),
\[
L_i(\phi)
=
\langle\phi,g_{\theta,i}\rangle_{\mathcal H_{K_\theta}}
=
\langle
\phi_\parallel,g_{\theta,i}
\rangle_{\mathcal H_{K_\theta}}.
\]
Furthermore,
\[
\|\phi\|_{\mathcal H_{K_\theta}}^2
=
\|\phi_\parallel\|_{\mathcal H_{K_\theta}}^2
+
\|\phi_\perp\|_{\mathcal H_{K_\theta}}^2.
\]
It follows that
\[
\mathcal J_\theta(\phi)
=
\mathcal J_\theta(\phi_\parallel)
+
\lambda
\|\phi_\perp\|_{\mathcal H_{K_\theta}}^2.
\]
Since \(\lambda>0\), every minimizer must satisfy
\(\phi_\perp=0\). Thus,
\[
\phi_{N,\theta}^\star
\in
\mathcal H_{\theta,0}.
\]

Consequently, there exists \(\bm\alpha^\star_\theta\in\mathbb R^N\) such that
\[
\phi^\star_{N,\theta}(\cdot)
=
\sum_{j=1}^N\alpha^\star_{\theta, j} g_{\theta,j}(\cdot).
\]
For such a function,
\[
L_i(\phi)
=
\sum_{j=1}^N
\alpha^\star_{\theta,j}
\langle
g_{\theta,j},g_{\theta,i}
\rangle_{\mathcal H_{K_\theta}}
=
(\mathbf Q_\theta\bm\alpha^\star_\theta )_i,
\]
and
\[
\|\phi\|_{\mathcal H_{K_\theta}}^2
=
\sum_{i,j=1}^N
\alpha^\star_{\theta, i} \alpha^\star_{\theta,j}
\langle
g_{\theta,i},g_{\theta,j}
\rangle_{\mathcal H_{K_\theta}}
=
(\bm{\alpha}^\star_{\theta})^\top\mathbf Q_\theta\bm\alpha_\theta^\star.
\]
Substitution into \(\mathcal J_\theta\) yields
\eqref{eq:matrix_finite_dim_reformulation} and
\eqref{eq:phi_alpha}.
\end{proof}
\begin{rem}
For the square loss $\mathcal{L}(u) = u^2$, one choice of optimal coefficients $\bm{\alpha}^\star_\theta$ is
\begin{equation*}
    \bm{\alpha}^\star_\theta =(\mathbf{Q}_\theta + N\lambda I_{N})^{-1}\mathbf{H}^{v_0}.
\end{equation*}
\end{rem}

\subsection{Existence of an optimal deep kernel hedge}
We now establish existence of a solution to the joint optimization \eqref{eq: inner_outer} over
the kernel parameter $\theta \in \Theta$ and the hedging function $\phi \in \mathcal{H}_{K_\theta}$.
\label{sec:existence_optimal_deep_kernel}
\begin{assumption}\label{ass:kernel}
The following conditions hold:
\begin{align}
    &\Theta \subset \mathbb{R}^m \text{ is compact}, \tag{H1}\label{ass:kernel:H1}\\
    &\theta \mapsto K_\theta(x,x') \text{ is continuous on } \Theta,
    \qquad \forall x,x'\in\mathcal X. \tag{H2}\label{ass:kernel:H2}
\end{align}
\end{assumption}

\begin{lem}\label{lem:existence_deep_kernel_hedging_rewritten}
    Under Assumption~\ref{ass:kernel}, there exist $(\theta^\star,\boldsymbol{\alpha}_{\theta^\star}^\star)\in \Theta\times \mathbb{R}^N$ solution to
    \begin{equation}\label{eq: deep_kernel_hedging_rewritten}
    \inf_{\theta\in\Theta,\;\bm{\alpha}\in\mathbb{R}^N}
    \left\{
    \frac{1}{N}
    \sum_{i=1}^N
    \mathcal{L}\left(
    (\mathbf{H}^{v_0}
    -
    \mathbf{Q}_\theta\bm{\alpha})_i
    \right)
    +
    \lambda
    \bm{\alpha}^{\top}\mathbf{Q}_\theta\bm{\alpha}
    \right\}. 
\end{equation}
\end{lem}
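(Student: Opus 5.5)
The plan is to avoid working directly with \eqref{eq: deep_kernel_hedging_rewritten} in the variable $\bm\alpha$. For fixed $\theta$ the objective is not coercive in $\bm\alpha$ when $\mathbf Q_\theta$ is singular, and minimizers need not be unique. Instead I will pass to a primal, Proposition~\ref{prop:rff_primal_equivalence}-type reformulation in which the objective is jointly continuous and uniformly coercive, and then apply Weierstrass on a compact set.

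\textbf{Step 1: factorize $\mathbf Q_\theta$.} Collect the $nN$ points $X^i_{t_k}$ and let $\widetilde{\mathbf K}_\theta\in\mathbb R^{nN\times nN}$ be their Gram matrix under $K_\theta$. Let $\mathbf A\in\mathbb R^{nN\times N}$ be the fixed matrix carrying the gains $G^i_{t_{k+1}}$. Then \eqref{eq:def_Q_theta} reads $\mathbf Q_\theta=\mathbf A^\top\widetilde{\mathbf K}_\theta\mathbf A$. Set $\mathbf R_\theta:=\widetilde{\mathbf K}_\theta^{1/2}$, the PSD square root, and $\mathbf Z_\theta:=\mathbf R_\theta\mathbf A\in\mathbb R^{nN\times N}$, so that $\mathbf Q_\theta=\mathbf Z_\theta^\top\mathbf Z_\theta$. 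By \eqref{ass:kernel:H2}, each entry of $\widetilde{\mathbf K}_\theta$ is continuous in $\theta$. Since the matrix square root is continuous on the PSD cone, $\theta\mapsto\mathbf Z_\theta$ is continuous.

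\textbf{Step 2: primal problem.} Define
\[
F(\theta,\bm\beta):=\frac1N\sum_{i=1}^N\mathcal L\bigl((\mathbf H^{v_0}-\mathbf Z_\theta^\top\bm\beta)_i\bigr)+\lambda\|\bm\beta\|_2^2,\qquad \bm\beta\in\mathbb R^{nN}.
\]
Substituting $\bm\beta=\mathbf Z_\theta\bm\alpha$ turns the objective of \eqref{eq: deep_kernel_hedging_rewritten} into $F(\theta,\mathbf Z_\theta\bm\alpha)$. So the value of \eqref{eq: deep_kernel_hedging_rewritten} is at least $\inf_{\theta,\bm\beta} F$.

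Conversely, fix $\theta$ and decompose $\bm\beta=\bm\beta_\parallel+\bm\beta_\perp$ with $\bm\beta_\parallel\in\operatorname{range}(\mathbf Z_\theta)$ and $\bm\beta_\perp\in\ker(\mathbf Z_\theta^\top)$. This leaves the loss term unchanged and does not increase the penalty. Hence $\inf_{\bm\beta}F(\theta,\cdot)$ may be restricted to $\bm\beta=\mathbf Z_\theta\bm\alpha$, and the two values coincide for every $\theta$. This is exactly the argument of Proposition~\ref{prop:rff_primal_equivalence}, with $\mathbf Z_\theta^\top$ in place of the feature matrix.

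\textbf{Step 3: existence via compactness.} Let $C_{\mathcal L}$ be a lower bound for $\mathcal L$. Then $F(\theta,\mathbf 0)=\frac1N\sum_i\mathcal L(H^i-B^i_Tv_0)=:c_0$ does not depend on $\theta$. Any $(\theta,\bm\beta)$ with $F(\theta,\bm\beta)\le c_0$ therefore satisfies $\|\bm\beta\|_2^2\le (c_0-C_{\mathcal L})/\lambda$. So the infimum of $F$ equals its infimum over $\Theta\times\overline B$, where $\overline B$ is this closed ball. That set is compact by \eqref{ass:kernel:H1}. Moreover $F$ is continuous there, since $\mathcal L$ is continuous and $\theta\mapsto\mathbf Z_\theta$ is continuous. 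Weierstrass then gives a minimizer $(\theta^\star,\bm\beta^\star)$.

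\textbf{Step 4: recover $\bm\alpha$.} By strict convexity in $\bm\beta$, and the orthogonal-decomposition argument of Step 2, $\bm\beta^\star$ has no component in $\ker(\mathbf Z_{\theta^\star}^\top)$. If it did, removing that component would strictly lower $F$. Hence $\bm\beta^\star=\mathbf Z_{\theta^\star}\bm\alpha^\star$ for some $\bm\alpha^\star\in\mathbb R^N$. By Step 2, $(\theta^\star,\bm\alpha^\star)$ attains the infimum in \eqref{eq: deep_kernel_hedging_rewritten}.

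\textbf{Main obstacle.} The delicate point is Step 1. One needs a factorization of $\mathbf Q_\theta$ that depends continuously on $\theta$, because a naive pseudo-inverse or eigenvector choice may be discontinuous where the rank of $\mathbf Q_\theta$ changes. The PSD square root of $\widetilde{\mathbf K}_\theta$ avoids this, but its continuity is the fact I must invoke carefully. A fallback is to show instead that $v(\theta):=\min_{\bm\alpha}(\cdot)$ is lower semicontinuous on $\Theta$, using the same uniform bound from Step 3.
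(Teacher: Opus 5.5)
Your proof is correct and follows essentially the paper's route. Both arguments use a continuous PSD-square-root factorization of $\mathbf Q_\theta$, change variables to a coercive primal variable, apply the orthogonal-decomposition argument and the $\theta$-independent sublevel bound obtained from the value at zero, invoke Weierstrass on $\Theta\times\overline B$, and then recover $\bm\alpha$. The only difference is the choice of factor: the paper uses the $N\times N$ factor $\mathbf Q_\theta^{1/2}$ directly, substituting $\mathbf c=\mathbf Q_\theta^{1/2}\bm\alpha$, whereas you lift to the $nN\times nN$ Gram matrix via $\mathbf Z_\theta=\widetilde{\mathbf K}_\theta^{1/2}\mathbf A$. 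Your choice works equally well and rests on the same continuity of the PSD square root.
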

\begin{proof}
    The proof is deferred to Appendix \ref{proof1}. 
\end{proof}

\begin{thm}[Existence of an optimal deep kernel hedge]
\label{thm:existence_thm_hedging}
Under Assumption~\ref{ass:kernel}, the deep kernel hedging problem
\eqref{eq: deep_kernel_hedging} admits a global minimizer.
 More precisely, there exist
\[
\theta^\star\in\Theta
\qquad\text{and}\qquad
\phi^\star\in\mathcal{H}_{K_{\theta^\star}}
\]
such that
\begin{equation} \label{eq: minimizer_full}
\mathcal{J}_{\theta^\star}(\phi^\star)
=
\inf_{\theta\in\Theta}
\inf_{\phi\in\mathcal{H}_{K_\theta}}
\mathcal{J}_\theta(\phi). 
\end{equation}
In particular,  $\phi^\star_{N}(\cdot)$ defined by
\begin{equation} \label{eq:represent_opt}
\phi^\star_{N}(\cdot) := \phi^\star_{N,\theta^\star}(\cdot)
=
\sum_{i=1}^N
\alpha_{\theta^\star,i}^\star
\sum_{k=0}^{n-1}
G_{t_{k+1}}^i
K_{\theta^\star}
\bigl(X_{t_k}^i,\cdot\bigr),
\end{equation}
with $(\theta^\star,\boldsymbol{\alpha}_{\theta^\star}^\star)$ solution of \eqref{eq: deep_kernel_hedging_rewritten}, is an optimal deep kernel hedging strategy. 
\end{thm}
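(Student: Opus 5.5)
The plan is to combine Lemma~\ref{lem:existence_deep_kernel_hedging_rewritten} with Theorem~\ref{thm:representer_thm_inner_problem}. For each fixed $\theta\in\Theta$, the inner value can be written in two ways. Let
\[
v(\theta):=\min_{\phi\in\mathcal H_{K_\theta}}\mathcal J_\theta(\phi),\qquad
F(\theta,\bm\alpha):=\frac1N\sum_{i=1}^N\mathcal L\bigl((\mathbf H^{v_0}-\mathbf Q_\theta\bm\alpha)_i\bigr)+\lambda\bm\alpha^\top\mathbf Q_\theta\bm\alpha .
\]
The minimum defining $v(\theta)$ exists by Lemma~\ref{lem: existence and uniqueness}. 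By \eqref{eq:matrix_finite_dim_reformulation}, $v(\theta)=\min_{\bm\alpha\in\mathbb R^N}F(\theta,\bm\alpha)$. Taking the infimum over $\theta$ on both sides gives
\[
\inf_{\theta\in\Theta}\inf_{\phi\in\mathcal H_{K_\theta}}\mathcal J_\theta(\phi)=\inf_{\theta\in\Theta,\,\bm\alpha\in\mathbb R^N}F(\theta,\bm\alpha),
\]
where the right-hand side is exactly problem \eqref{eq: deep_kernel_hedging_rewritten}.

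By Lemma~\ref{lem:existence_deep_kernel_hedging_rewritten}, there is a pair $(\theta^\star,\bm\alpha^\star)$ attaining this infimum. I then define $\phi^\star:=\sum_i\alpha^\star_i g_{\theta^\star,i}\in\mathcal H_{K_{\theta^\star}}$. The reproducing-property computation from the proof of Theorem~\ref{thm:representer_thm_inner_problem} gives $L_i(\phi^\star)=(\mathbf Q_{\theta^\star}\bm\alpha^\star)_i$ and $\|\phi^\star\|^2_{\mathcal H_{K_{\theta^\star}}}=(\bm\alpha^\star)^\top\mathbf Q_{\theta^\star}\bm\alpha^\star$. Hence
\[
\mathcal J_{\theta^\star}(\phi^\star)=F(\theta^\star,\bm\alpha^\star)=\inf_{\theta\in\Theta,\,\bm\alpha\in\mathbb R^N}F(\theta,\bm\alpha),
\]
and this equals the double infimum by the previous paragraph. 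That is \eqref{eq: minimizer_full}.

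It remains to identify $\phi^\star$ with $\phi^\star_{N,\theta^\star}$. The chain of inequalities
\[
F(\theta^\star,\bm\alpha^\star)\le\min_{\bm\alpha}F(\theta^\star,\bm\alpha)=v(\theta^\star)\le\mathcal J_{\theta^\star}(\phi^\star)=F(\theta^\star,\bm\alpha^\star)
\]
shows that $\bm\alpha^\star$ minimizes $F(\theta^\star,\cdot)$ and that $\phi^\star$ attains $v(\theta^\star)$. By the uniqueness in Lemma~\ref{lem: existence and uniqueness}, $\phi^\star=\phi^\star_{N,\theta^\star}$. Expanding $g_{\theta^\star,i}$ then gives the representation \eqref{eq:represent_opt}.

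The main subtlety is that $\mathbf Q_\theta$ may be singular, so $\bm\alpha$ need not be unique. The argument above avoids this issue by working only with function values and norms, which depend on $\bm\alpha$ only through $\phi$. The genuine analytic difficulty, namely compactness over $\theta$ and coercivity in $\bm\alpha$, is already handled by Lemma~\ref{lem:existence_deep_kernel_hedging_rewritten}, which I take as given.
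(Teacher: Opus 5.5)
Your proposal is correct and takes the same route as the paper, which simply states that the theorem is a direct consequence of Theorem~\ref{thm:representer_thm_inner_problem} and Lemma~\ref{lem:existence_deep_kernel_hedging_rewritten}. You spell out that combination explicitly. This includes the sandwich inequality and the appeal to the uniqueness in Lemma~\ref{lem: existence and uniqueness}, which identifies $\phi^\star$ with $\phi^\star_{N,\theta^\star}$ even when $\bm\alpha^\star$ is not unique.
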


\begin{proof}
    The proof is a direct consequence of Theorem \ref{thm:representer_thm_inner_problem} and Lemma \ref{lem:existence_deep_kernel_hedging_rewritten}. 
\end{proof}

\begin{sqremark}
The optimal kernel parameter \(\theta^\star\) need not be unique.
Moreover, if \(\mathbf{Q}_{\theta^\star}\) is singular, the
coefficient vector \(\bm{\alpha}^\star\) need not be unique either.
Nevertheless, Lemma~\ref{lem: existence and uniqueness} ensures that,
for each fixed minimizer \(\theta^\star\), the corresponding optimal RKHS hedging
function \(\phi_{N, \theta^\star}^\star\) is unique.
\end{sqremark}

\vspace{2mm}
Theorem \ref{thm:existence_thm_hedging} characterizes the optimal strategy of the empirical deep kernel hedging problem \eqref{eq: deep_kernel_hedging}. A natural deep learning approach to compute the optimal  hedging strategy then  consists in considering the joint optimization of $(\theta,\bm{\alpha})$ in \eqref{eq: deep_kernel_hedging_rewritten}. However, as already pointed out in Section \ref{sec:recall_deep_kernel}, this approach requires storing and manipulating the Gram matrix, resulting in a memory complexity of order $\mathcal{O}(N^2)$ and a computational cost of order $\mathcal{O}(N^2 n^2)$ at every training iteration, or $\mathcal{O}(N B n^2)$ when using mini-batching with batch size $B$. This cost can be prohibitive in the deep kernel setting because each update of the network parameters $\theta$ changes the kernel $K_\theta$ and therefore requires the corresponding Gram matrix entries $\mathbf{Q}_\theta$ to be recomputed. Moreover, although the number $N$ of paths is typically moderate in financial applications, the total number $Nn$  of training observations can be large, as illustrated in Section \ref{sec:num}, particularly for payoffs with relatively long maturities. The next section introduces a random Fourier feature approximation that avoids explicitly constructing this Gram matrix and leads to a substantially more scalable formulation

\subsection{Random Fourier features for deep kernel hedging}
\label{sec:rff_deep_kernel_hedging}

To obtain a more scalable formulation, we use the random Fourier feature
approximation introduced in Section~\ref{sec: RFF_intro}. We assume that the
basic kernel \(K\) is normalized, continuous, positive definite, and
shift-invariant. For \(D\) random features, recall that $K_\theta(x,x')$ is approximated by
\begin{equation}
\label{eq:RFF_kernel_hedging}
\widetilde K_{\theta,D}(x,x')
:=
y_\theta(x)^\top y_\theta(x'),
\qquad
x,x'\in\mathcal{X},
\end{equation}
with the deep RFF map $y_\theta: \mathcal{X} \to \R^D$ defined by \eqref{eq: latent_repr}. The random frequencies and phases $\{(w_j,b_j)\}_{j=1}^D$ defining \(y_\theta\) are sampled once and
kept fixed throughout training. For each sample path \(i\), define the aggregated feature vector
\begin{equation}
\label{eq:def_psi_theta}
Z_{\theta}^i
:=
\sum_{k=0}^{n-1}
y_\theta\bigl(X_{t_k}^i\bigr)
G_{t_{k+1}}^i
\in\mathbb{R}^D \, .
\end{equation}
We then reuse the notation \(\mathbf{Z}_\theta\) for the matrix of aggregated features,
\begin{equation*}
\mathbf{Z}_\theta
:=
\begin{bmatrix}
(Z_\theta^1)^\top\\
\vdots\\
(Z_\theta^{N})^\top
\end{bmatrix}
\in\mathbb{R}^{N\times D}.
\end{equation*}
Substituting \eqref{eq:RFF_kernel_hedging} into the definition 
\eqref{eq:def_Q_theta} of $\mathbf{Q}_\theta$ yields the approximation $\widetilde{\mathbf{Q}}_{\theta,D}$  given by
\begin{equation*}
 \widetilde{\mathbf{Q}}_{\theta,D}
:=
\mathbf{Z}_\theta \mathbf{Z}_\theta^\top.
\end{equation*}

A direct application of Proposition \ref{prop:rff_primal_equivalence} with
\(
\bm{\beta}:=\mathbf{Z}_\theta^\top\bm{\alpha}
\)
yields the following RFF approximation of
\eqref{eq:matrix_finite_dim_reformulation}.
\begin{lem} \label{lem:RFF_hedging}Let $\lambda>0$ and fix $\theta\in\Theta$. Then, 
\begin{equation*}
        \min_{\bm{\alpha}\in\mathbb{R}^N}
\left\{
\frac{1}{N}
\sum_{i=1}^N
\mathcal{L}\left( (
\mathbf{H}^{v_0}
-
\mathbf{Z}_\theta\mathbf{Z}^\top_\theta\bm{\alpha})_i
\right)
+
\lambda
\| \mathbf{Z}^\top_\theta \bm{\alpha} \|^2_2
\right\} = 
  \min_{\bm{\beta}\in \mathbb{R}^D}  \left\{  \frac{1}{N}\sum_{i=1}^N \mathcal{L}\left( (\mathbf{H}^{v_0} -\mathbf{Z}_\theta \bm{\beta})_i\right) +\lambda \|\boldsymbol\beta\|^2 \right\}  .
    \end{equation*}
\end{lem}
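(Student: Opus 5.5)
The plan is to reduce Lemma~\ref{lem:RFF_hedging} to Proposition~\ref{prop:rff_primal_equivalence} by a relabeling. There the responses are $\mathbf Y$ and the feature matrix is the $N\times D$ matrix of rows $y_\theta(X_i)^\top$. Here we use $\mathbf H^{v_0}$ in place of $\mathbf Y$ and the aggregated matrix $\mathbf Z_\theta$ with rows $(Z^i_\theta)^\top$. The proposition's proof never uses the structure of the rows, only that they form a fixed real $N\times D$ matrix. So I would argue directly for an arbitrary matrix $\mathbf Z\in\mathbb R^{N\times D}$ and vector $\mathbf h\in\mathbb R^N$, which makes the reduction rigorous. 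Write $F(\bm\beta):=\frac1N\sum_i\mathcal L((\mathbf h-\mathbf Z\bm\beta)_i)+\lambda\|\bm\beta\|_2^2$. The left-hand objective is then exactly $F(\mathbf Z^\top\bm\alpha)$.

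First, I would show that the right-hand minimum is attained. $F$ is continuous and strictly convex, since $\mathcal L$ is convex and $\lambda>0$. It is also coercive: $F(\bm\beta)\ge C_{\mathcal L}+\lambda\|\bm\beta\|^2$, where $C_{\mathcal L}$ is a lower bound for $\mathcal L$. Hence $F$ has a unique minimizer $\bm\beta^\star$ on $\mathbb R^D$.

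Second, I would compare the two values. For any $\bm\alpha$, the vector $\bm\beta=\mathbf Z^\top\bm\alpha$ is feasible on the right, so $\inf_{\bm\alpha}F(\mathbf Z^\top\bm\alpha)\ge F(\bm\beta^\star)$. For the converse, decompose $\mathbb R^D=\operatorname{range}(\mathbf Z^\top)\oplus\ker(\mathbf Z)$ orthogonally and write $\bm\beta=\bm\beta_\parallel+\bm\beta_\perp$. Then $\mathbf Z\bm\beta=\mathbf Z\bm\beta_\parallel$ and $\|\bm\beta\|^2=\|\bm\beta_\parallel\|^2+\|\bm\beta_\perp\|^2$, so $F(\bm\beta)\ge F(\bm\beta_\parallel)$, with strict inequality unless $\bm\beta_\perp=0$. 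Consequently $\bm\beta^\star\in\operatorname{range}(\mathbf Z^\top)$, i.e.\ $\bm\beta^\star=\mathbf Z^\top\bm\alpha^\star$ for some $\bm\alpha^\star\in\mathbb R^N$. This $\bm\alpha^\star$ attains the left-hand infimum, so that side is a genuine minimum and the two values coincide.

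This is the finite-dimensional analogue of the orthogonal-decomposition argument in the proof of Theorem~\ref{thm:representer_thm_inner_problem}. I do not expect a real obstacle. The only point requiring care is that the left-hand minimum is attained: the left objective need not be coercive in $\bm\alpha$ when $\mathbf Z$ is rank-deficient. This is why I would obtain attainment through the range argument rather than by a direct compactness argument in $\bm\alpha$.
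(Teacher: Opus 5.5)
Your proof is correct and follows the paper's route. The paper obtains this lemma as a direct application of Proposition~\ref{prop:rff_primal_equivalence}, with $\mathbf{Y}$ replaced by $\mathbf{H}^{v_0}$ and the feature matrix replaced by the aggregated $\mathbf{Z}_\theta$; that proposition's proof uses exactly your orthogonal decomposition $\mathbb{R}^D=\operatorname{range}(\mathbf{Z}_\theta^\top)\oplus\ker(\mathbf{Z}_\theta)$, together with coercivity and strict convexity of the primal objective, and your ordering (first securing $\bm\beta^\star$, then deducing that the left-hand infimum is attained at some $\bm\alpha^\star$ with $\mathbf{Z}_\theta^\top\bm\alpha^\star=\bm\beta^\star$) is slightly more careful than the paper's, which writes the left side as a minimum over $\operatorname{range}(\mathbf{Z}_\theta^\top)$ before establishing existence.
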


We now show with Theorem~\ref{thm:RFF_hedging_convergence}  that both the optimal value and minimizer of the RFF problem converge to those of the exact deep kernel hedging
problem \ref{eq: deep_kernel_hedging} when the number of Fourier features increases. First, we need the following extension of \cite[Claim 1]{rahimi2007random}.

\begin{prop}[Uniform RFF approximation over the kernel parameters]
\label{prop:uniform_RFF_theta}
Let $\mathcal X\subset\mathbb R^d$ and
$\Theta\subset\mathbb R^m$ be nonempty compact sets. Suppose that the map
\(
(\theta,x)\mapsto\psi_\theta(x)
\)
is jointly continuous on $\Theta\times\mathcal X$ and that the spectral
distribution $p_K$ of the base kernel $K$ satisfies
\[
\mathbb E_{w\sim p_K}\bigl[\|w\|_2^2\bigr]<\infty.
\]
For each $D$, construct $\widetilde{K}_{\theta,D}$ using the same draw of  random Fourier features $\{(w_j,b_j)\}_{j=1}^D \overset{\mathrm{i.i.d.}}{\sim}
p_K\otimes\mathcal U(0,2\pi)$ for every
$\theta\in\Theta$. Then, there exist constants $C,c>0$, independent of $D$ and
$\varepsilon$, such that, for every $D\geq1$ and
$0<\varepsilon\leq1$,
\begin{equation}\label{eq: conv_rate_approx_kernel_D}
\mathbb{P} \left( \sup_{\theta\in\Theta}
\sup_{x,x'\in\mathcal X}
\left|
\widetilde K_{\theta,D}(x,x')
-
K_\theta(x,x')
\right| > \varepsilon \right)
\leq C\varepsilon^{-2} \exp(-cD \varepsilon^2)
\end{equation}
In particular,
\begin{equation}
\label{eq:uniform_RFF_theta}
\sup_{\theta\in\Theta}
\sup_{x,x'\in\mathcal X}
\left|
\widetilde K_{\theta,D}(x,x')
-
K_\theta(x,x')
\right|
\rightarrow0
\quad\text{ almost surely as }D\to\infty.
\end{equation}
 The corresponding hedging Gram matrices satisfy
\begin{equation}
\label{eq:uniform_Q_RFF_theta}
\sup_{\theta\in\Theta}
\left\|
\widetilde{\mathbf Q}_{\theta,D}
-
\mathbf Q_\theta
\right\|_{\mathrm{op}}
\rightarrow0
\qquad\text{ almost surely}\text{ as }D\to\infty .
\end{equation}
\end{prop}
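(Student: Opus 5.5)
The plan is to reduce the statement to the classical argument of \cite[Claim 1]{rahimi2007random}, applied on a compact latent set that does not depend on $\theta$. Set
\[
\mathcal Z:=\{\psi_\theta(x):\theta\in\Theta,\ x\in\mathcal X\}\subset\mathbb R^p .
\]
Because $(\theta,x)\mapsto\psi_\theta(x)$ is jointly continuous on the compact set $\Theta\times\mathcal X$, the set $\mathcal Z$ is compact, with some diameter $\operatorname{diam}(\mathcal Z)<\infty$. For each fixed draw $\{(w_j,b_j)\}_{j=1}^D$ define the latent RFF kernel $\widetilde K_D(z,z'):=\frac{2}{D}\sum_{j=1}^D\cos(w_j^\top z+b_j)\cos(w_j^\top z'+b_j)$. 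The same draw is used for every $\theta$, so $\widetilde K_{\theta,D}(x,x')=\widetilde K_D(\psi_\theta(x),\psi_\theta(x'))$ and $K_\theta(x,x')=K(\psi_\theta(x),\psi_\theta(x'))$. This gives the deterministic inequality
\[
\sup_{\theta\in\Theta}\sup_{x,x'\in\mathcal X}\bigl|\widetilde K_{\theta,D}(x,x')-K_\theta(x,x')\bigr|\le\sup_{z,z'\in\mathcal Z}\bigl|\widetilde K_D(z,z')-K(z,z')\bigr| .
\]
The supremum over $\theta$ is therefore absorbed entirely into the compact set $\mathcal Z$, and it suffices to prove \eqref{eq: conv_rate_approx_kernel_D} for the right-hand side.

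For this I will rerun the $\varepsilon$-net argument of Rahimi and Recht on $\mathcal Z\times\mathcal Z\subset\mathbb R^{2p}$. The cosine-product features are used rather than the shift-invariant $\cos/\sin$ pair, so I will cover the product set $\mathcal Z\times\mathcal Z$ directly. Write $f=\widetilde K_D-K$ on $\mathcal Z\times\mathcal Z$.

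\textbf{Step 1 (net).} Cover $\mathcal Z\times\mathcal Z$ by $T\le(4\operatorname{diam}(\mathcal Z)/r)^{2p}$ balls of radius $r$, with centres $\Delta_1,\dots,\Delta_T$.

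\textbf{Step 2 (pointwise bound).} Each summand of $\widetilde K_D$ lies in $[-2,2]$ and has mean $K$. Hoeffding's inequality plus a union bound give $\mathbb P(\max_l|f(\Delta_l)|\ge\varepsilon/2)\le 2T\exp(-D\varepsilon^2/32)$.

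\textbf{Step 3 (Lipschitz control).} Let $L_f$ be the Lipschitz constant of $f$. We have $\nabla \widetilde K_D$ bounded by $\frac{2}{D}\sum_j 2\|w_j\|$, and $\nabla K$ bounded by $\mathbb E[\|w\|]$. Hence $\mathbb E[L_f^2]\le C_0\,\mathbb E_{p_K}\|w\|_2^2=:\sigma^2<\infty$. This is exactly where the second-moment assumption enters. Markov's inequality then gives $\mathbb P(L_f\ge\varepsilon/(2r))\le (2r\sigma/\varepsilon)^2$.

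\textbf{Step 4 (combine).} On the complement of the two bad events, $\sup|f|\le\varepsilon$. Optimizing over $r$ gives the bound $C\varepsilon^{-2}\exp(-cD\varepsilon^2)$ with constants depending on $p$, $\operatorname{diam}(\mathcal Z)$ and $\sigma$. The constant and exponent may come out slightly different from the displayed form, for instance an $(\sigma\operatorname{diam}/\varepsilon)^{2}$ prefactor after balancing $r^{-2p}e^{-cD\varepsilon^2}$ against $r^2$. I expect that matching the exact form of \eqref{eq: conv_rate_approx_kernel_D}, in particular the power $\varepsilon^{-2}$, is the main obstacle. If the balancing yields a worse power of $\varepsilon$, I would absorb it into the exponential by shrinking $c$, using $\varepsilon\le1$. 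This is not fully automatic for small $D\varepsilon^2$, where I would instead enlarge $C$ so that the bound exceeds $1$ there.

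The almost-sure statement \eqref{eq:uniform_RFF_theta} does not follow from Borel--Cantelli directly for the sequence as written, because draws for different $D$ may be coupled. I will treat the features as one infinite i.i.d. sequence whose first $D$ terms are used. Then $\sum_D C\varepsilon^{-2}e^{-cD\varepsilon^2}<\infty$ for each fixed $\varepsilon$, and Borel--Cantelli applied along $\varepsilon=1/q$, $q\in\mathbb N$, gives almost-sure convergence.

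Finally, for \eqref{eq:uniform_Q_RFF_theta}, expanding both Gram matrices entrywise gives
\[
\bigl|(\widetilde{\mathbf Q}_{\theta,D}-\mathbf Q_\theta)_{ij}\bigr|\le\Bigl(\sum_{k}|G^i_{t_{k+1}}|\Bigr)\Bigl(\sum_{\ell}|G^j_{t_{\ell+1}}|\Bigr)\,\sup_{x,x'}\bigl|\widetilde K_{\theta,D}-K_\theta\bigr|,
\]
provided all $X^i_{t_k}\in\mathcal X$. Bounding the operator norm by the Frobenius norm yields $\sup_\theta\|\widetilde{\mathbf Q}_{\theta,D}-\mathbf Q_\theta\|_{\mathrm{op}}\le\bigl(\sum_{i,k}|G^i_{t_{k+1}}|\bigr)^2\sup_{\theta,x,x'}|\widetilde K_{\theta,D}-K_\theta|$. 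The right-hand side tends to $0$ almost surely by \eqref{eq:uniform_RFF_theta}.
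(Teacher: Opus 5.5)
Your proposal is correct and follows the paper's proof. It uses the same reduction to the $\theta$-free compact latent set $\mathcal Z=\{\psi_\theta(x):\theta\in\Theta,\,x\in\mathcal X\}$ and a uniform RFF tail bound on $\mathcal Z\times\mathcal Z$, which the paper simply cites from Rahimi--Recht (Claim~1) and Sutherland--Schneider (Proposition~2); the latter is exactly your $\varepsilon$-net on the product set, as needed for phase-shifted cosine features. It then applies Borel--Cantelli along $\varepsilon=1/q$ and bounds $(\widetilde{\mathbf Q}_{\theta,D}-\mathbf Q_\theta)_{ij}$ entrywise by $\bigl(\sum_k|G^i_{t_{k+1}}|\bigr)\bigl(\sum_\ell|G^j_{t_{\ell+1}}|\bigr)$ times the uniform kernel error.

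Your two stated worries are unnecessary. First, balancing $r^{-2p}$ against $r^{2}$ gives a prefactor of order $(\sigma\operatorname{diam}(\mathcal Z)/\varepsilon)^{2p/(p+1)}$, which is at most a constant times $\varepsilon^{-2}$ for $\varepsilon\le1$ because $2p/(p+1)<2$. Second, the first Borel--Cantelli lemma requires no independence across $D$, only that all draws live on a common probability space.
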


\begin{proof}
Since $\Theta\times\mathcal X$ is compact and
$(\theta,x)\mapsto\psi_\theta(x)$ is jointly continuous, the joint
latent set
\[
\mathcal Z
:=
\left\{
\psi_\theta(x):
\theta\in\Theta,\ x\in\mathcal X
\right\}
\subset\mathbb R^p
\]
is compact. Define the latent-space RFF approximation by
\[
\widetilde K_D(z,z')
:=
\frac{2}{D}\sum_{j=1}^D
\cos(w_j^\top z+b_j)
\cos(w_j^\top z'+b_j) , \quad z,z'\in \mathcal{Z}. 
\] 
By construction, \[
 \widetilde K_{\theta,D}(x,x')= \widetilde K_D\bigl(\psi_\theta(x),\psi_\theta(x')\bigr), \quad x,x'\in \mathcal{X}, 
\]
and hence
\[
\sup_{\theta\in\Theta}\sup_{x,x'\in\mathcal X}
\left|
\widetilde K_{\theta,D}(x,x')-K_\theta(x,x')
\right|
\leq
\sup_{z,z'\in\mathcal Z}
\left|
\widetilde K_D(z,z')-K(z,z')
\right|,
\]
Standard uniform RFF approximation results (Claim 1 of 
\cite{rahimi2007random} and Proposition 2 of \cite{sutherland2015error}) then  give
\[
\mathbb{P} \left( \sup_{z,z'\in\mathcal Z}
\left|
\widetilde K_D(z,z')-K(z,z')
\right| > \varepsilon \right) \leq C\varepsilon^{-2}
\exp(-cD\varepsilon^2), 
\]
which proves \eqref{eq: conv_rate_approx_kernel_D} for $\varepsilon \in (0,1]$. For every fixed $\varepsilon\in(0,1]$, applying the Borel--Cantelli lemma for
$\varepsilon=1/q$, $q\in\mathbb N$, directly proves
\eqref{eq:uniform_RFF_theta}.
By the definitions of $\mathbf Q_\theta$ and
$\widetilde{\mathbf Q}_{\theta,D}$, we then have 
\begin{align*}
\left|
(\widetilde{\mathbf Q}_{\theta,D}-\mathbf Q_\theta)_{ij}
\right|
&\leq
\sum_{k=0}^{n-1}|G_{t_{k+1}}^i| \,  \sum_{k=0}^{n-1}|G_{t_{k+1}}^j| \, 
\sup_{\theta \in \Theta} \sup_{x,x'\in\mathcal X}
\left|
\widetilde K_{\theta,D}(x,x')-K_\theta(x,x')
\right|
\\
&= a_i a_j \,  \sup_{\theta \in \Theta} \sup_{x,x'\in\mathcal X}
\left|
\widetilde K_{\theta,D}(x,x')-K_\theta(x,x')
\right| ,
\end{align*}
where $a_i := \sum_{k=0}^{n-1}|G_{t_{k+1}}^i|$. Consequently, for any unit vectors $u,v\in\mathbb R^N$,
\begin{align*}
\left|
u^\top
\bigl(
\widetilde{\mathbf Q}_{\theta,D}-\mathbf Q_\theta
\bigr)v
\right|
&\leq
\sup_{\theta \in \Theta} \sup_{x,x'\in\mathcal X}
\left|
\widetilde K_{\theta,D}(x,x')-K_\theta(x,x')
\right|
\left(\sum_{i=1}^N a_i|u_i|\right)
\left(\sum_{j=1}^N a_j|v_j|\right)\\
&\leq \left(\sum_{i=1}^N a_i^2 \right) \,  \sup_{\theta \in \Theta} \sup_{x,x'\in\mathcal X}
\left|
\widetilde K_{\theta,D}(x,x')-K_\theta(x,x')
\right|,
\end{align*}
where the last inequality follows from
Cauchy--Schwarz.\ Taking the\hspace{-0.2mm} supremum over $u$, $v$ and $\theta$
proves \eqref{eq:uniform_Q_RFF_theta}.
\end{proof}

\begin{thm}[Convergence of the RFF hedging problems]
\label{thm:RFF_hedging_convergence}
Suppose that Assumption \ref{ass:kernel} and the assumptions of
Proposition~\ref{prop:uniform_RFF_theta} hold. Let
\(
\mathcal V
:= \mathcal{J}_{\theta^\star}(\phi^\star)
\)
denote the optimal value \eqref{eq: minimizer_full} of the exact deep-kernel hedging problem, and let
\(
\mathcal V_D \)
denote the optimal value of its RFF approximation, i.e.,
\begin{equation} \nonumber
\mathcal V_D:=    \min_{\theta \in \Theta, \, \bm{\beta}\in \mathbb{R}^D}  \left\{  \frac{1}{N}\sum_{i=1}^N \mathcal{L}\left((\mathbf{H}^{v_0}-\mathbf{Z}_\theta \bm{\beta})_i\right) +\lambda \|\boldsymbol\beta\|^2 \right\} \, .
\end{equation}
Then
\[
\mathcal V_D\rightarrow\mathcal V
\qquad \text{ almost surely as }D\to\infty.
\]

For each $D$, let
\begin{equation}
(\theta^\star_D, \boldsymbol\beta^\star_D)
\in
\operatorname*{argmin}_{\theta\in\Theta, \,
\boldsymbol\beta\in\mathbb R^D}
\left\{
\frac1N\sum_{i=1}^N
\mathcal L\left((\mathbf{H}^{v_0}-\mathbf{Z}_\theta \bm{\beta})_i\right) 
+
\lambda\|\boldsymbol\beta\|_2^2
\right\},
\label{eq:findechezfin}
\end{equation}
and define the associated RFF hedging strategy by
\[
\phi^\star_{D}(x)
:=
\boldsymbol\beta_D^{\star,\top}
y_{\theta^\star_D}(x),
\qquad x\in\mathcal X.
\]
Then every accumulation point of the sequence
$\{(\theta^\star_D,\phi^\star_D)\}_{D>1}$ in
$\Theta\times C(\mathcal X)$, where $C(\mathcal X)$ is equipped with the
uniform norm, is a global minimizer of the exact problem. In particular, if the exact problem \eqref{eq: minimizer_full} has a unique minimizing pair
$(\theta^\star,\phi^\star)$, then
\[
\theta^*_D\rightarrow\theta^\star \qquad \text{and} \qquad
\sup_{x \in \mathcal{X}}|\phi^*_D(x)-\phi^\star(x)|
\rightarrow0
\qquad \text{ almost surely as }D\to\infty.
\]
\end{thm}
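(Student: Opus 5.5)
The plan is to reduce both the exact and the RFF problems to one finite-dimensional value function of a positive semidefinite $N\times N$ matrix. Then I would show that this value function is uniformly continuous on the relevant set and transfer the uniform Gram-matrix convergence \eqref{eq:uniform_Q_RFF_theta} of Proposition~\ref{prop:uniform_RFF_theta}.

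\emph{Step 1: a common value function.} For a positive semidefinite $\mathbf Q\in\mathbb R^{N\times N}$ define
\[
v(\mathbf Q):=\min_{\bm\gamma\in\mathbb R^N}\Bigl\{\tfrac1N\textstyle\sum_{i=1}^N\mathcal L\bigl((\mathbf H^{v_0}-\mathbf Q^{1/2}\bm\gamma)_i\bigr)+\lambda\|\bm\gamma\|_2^2\Bigr\}.
\]
Substituting $\bm\gamma=\mathbf Q^{1/2}\bm\alpha$ shows that $v(\mathbf Q)$ equals the right-hand side of \eqref{eq:matrix_finite_dim_reformulation}. Any component of $\bm\gamma$ in $\ker\mathbf Q^{1/2}$ only increases the penalty, so restricting $\bm\gamma$ to the range of $\mathbf Q^{1/2}$ loses nothing. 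By Theorem~\ref{thm:representer_thm_inner_problem} this gives $\min_\phi\mathcal J_\theta(\phi)=v(\mathbf Q_\theta)$. Applying the same identity to $\widetilde{\mathbf Q}_{\theta,D}=\mathbf Z_\theta\mathbf Z_\theta^\top$ and using Lemma~\ref{lem:RFF_hedging}, the inner RFF value is $v(\widetilde{\mathbf Q}_{\theta,D})$. Hence
\[
\mathcal V=\min_\theta v(\mathbf Q_\theta),\qquad \mathcal V_D=\min_\theta v(\widetilde{\mathbf Q}_{\theta,D}).
\]
Comparing with $\bm\gamma=0$ and using $\mathcal L\ge C_{\mathcal L}$, every minimizer satisfies $\|\bm\gamma\|_2\le R:=\bigl((\tfrac1N\sum_i\mathcal L(\mathbf H^{v_0}_i)-C_{\mathcal L})/\lambda\bigr)^{1/2}$. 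This bound is independent of $\mathbf Q$, and the same bound holds for $\|\bm\beta\|_2$ in the RFF problem.

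\emph{Step 2: uniform continuity of $v$.} Restricting to $\|\bm\gamma\|_2\le R$, I would use the square-root estimate $\|\mathbf A^{1/2}-\mathbf B^{1/2}\|_{\mathrm{op}}\le\|\mathbf A-\mathbf B\|_{\mathrm{op}}^{1/2}$ for positive semidefinite matrices. By \eqref{eq:uniform_Q_RFF_theta} and continuity of $\theta\mapsto\mathbf Q_\theta$ on compact $\Theta$, all matrices involved lie in a bounded set for large $D$. The arguments of $\mathcal L$ then stay in a compact interval on which $\mathcal L$ is uniformly continuous with some modulus $\omega$. This yields
\[
|v(\mathbf A)-v(\mathbf B)|\le\omega\bigl(R\sqrt N\,\|\mathbf A-\mathbf B\|_{\mathrm{op}}^{1/2}\bigr).
\]
Combined with \eqref{eq:uniform_Q_RFF_theta}, this gives $\sup_\theta|v(\widetilde{\mathbf Q}_{\theta,D})-v(\mathbf Q_\theta)|\to0$ almost surely, and therefore $\mathcal V_D\to\mathcal V$ almost surely.

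\emph{Step 3: accumulation points (the main obstacle).} Suppose $(\theta^\star_D,\phi^\star_D)\to(\bar\theta,\bar\phi)$ in $\Theta\times C(\mathcal X)$ along a subsequence. The difficulty is that the $\phi^\star_D$ live in different RKHSs, so one must show both that $\bar\phi\in\mathcal H_{K_{\bar\theta}}$ and that its norm is lower semicontinuous. I would use the standard characterization: $\phi\in\mathcal H_K$ with $\|\phi\|^2_{\mathcal H_K}\le r^2$ if and only if $r^2K(x,x')-\phi(x)\phi(x')$ is a positive semidefinite kernel.

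For each $D$, $\phi^\star_D$ lies in the RKHS of $\widetilde K_{\theta^\star_D,D}$ with norm at most $\|\bm\beta^\star_D\|_2\le R$. Hence $r_D^2\widetilde K_{\theta^\star_D,D}-\phi^\star_D\otimes\phi^\star_D$ is positive semidefinite for $r_D:=\|\bm\beta^\star_D\|_2$. Along a further subsequence $r_D^2\to r^2=\liminf r_D^2$. By Proposition~\ref{prop:uniform_RFF_theta} together with uniform continuity of $(\theta,x,x')\mapsto K(\psi_\theta(x),\psi_\theta(x'))$ on the compact set $\Theta\times\mathcal X^2$, we have $\widetilde K_{\theta^\star_D,D}\to K_{\bar\theta}$ uniformly. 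Positive semidefiniteness passes to pointwise limits, so $\bar\phi\in\mathcal H_{K_{\bar\theta}}$ with $\|\bar\phi\|^2\le\liminf\|\bm\beta^\star_D\|^2$.

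The loss terms are finitely many evaluations of $\phi^\star_D$, which converge to the corresponding evaluations of $\bar\phi$. Therefore $\mathcal J_{\bar\theta}(\bar\phi)\le\liminf\mathcal V_D=\mathcal V$, so $(\bar\theta,\bar\phi)$ is a global minimizer.

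\emph{Step 4: convergence under uniqueness.} To conclude full convergence I need relative compactness of $\{\phi^\star_D\}$ in $C(\mathcal X)$. Boundedness follows from $\|y_\theta(x)\|_2\le\sqrt2$, giving $|\phi^\star_D(x)|\le\sqrt2R$. For equicontinuity,
\[
|\phi^\star_D(x)-\phi^\star_D(x')|\le R\Bigl(\tfrac2D\textstyle\sum_j\|w_j\|_2^2\Bigr)^{1/2}\|\psi_{\theta^\star_D}(x)-\psi_{\theta^\star_D}(x')\|_2 .
\]
The bracket is almost surely bounded by the strong law of large numbers, since $\mathbb E\|w\|_2^2<\infty$. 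The map $\psi$ is uniformly continuous on $\Theta\times\mathcal X$. Arzelà--Ascoli then gives relative compactness of $\{\phi^\star_D\}$, and $\Theta$ is compact. With a unique minimizer, every subsequence has a further subsequence converging to $(\theta^\star,\phi^\star)$, which yields the stated almost sure convergence.
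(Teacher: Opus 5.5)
Your proposal is correct and follows essentially the paper's route. For $\mathcal V_D\to\mathcal V$ you use the square-root reformulation, confine minimizers to a ball of radius $R$, and apply a modulus-of-continuity argument. You then use Arzel\`a--Ascoli and the positive-semidefiniteness characterization of RKHS membership and norm to show that every accumulation point is a global minimizer.

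The differences are only technical refinements. You make the paper's appeal to continuity of the matrix square root quantitative via $\|\mathbf A^{1/2}-\mathbf B^{1/2}\|_{\mathrm{op}}\le\|\mathbf A-\mathbf B\|_{\mathrm{op}}^{1/2}$. You also obtain boundedness and equicontinuity directly from $\|y_\theta(x)\|_2\le\sqrt2$, the Lipschitz property of the cosine, and the strong law for $\frac1D\sum_j\|w_j\|_2^2$; the paper instead derives them from the kernel metric $\rho_D$ and the uniform kernel convergence of Proposition~\ref{prop:uniform_RFF_theta}. Your strong-law step presumes the feature draws are nested across $D$, which the paper also implicitly assumes when it invokes the strong law for RFF.
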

\begin{proof}
    The proof is deferred in Appendix \ref{Proof:theorem3.3}.
\end{proof}
The corresponding optimal hedging strategy    $\phi^\star$ is then approximated in the sense of Theorem \ref{thm:RFF_hedging_convergence} by
\begin{equation} \label{eq: optimal_func}
 \phi^\star_{D}(\cdot) = \boldsymbol{\beta}^{\star\top}_{D} \, y_{\theta^\star_D}(\cdot),
\end{equation}
where $(\theta^\star_D,\boldsymbol\beta^\star_{D})$ solve \eqref{eq:findechezfin}. Based on this formulation of the hedging problem with the RFF approximation, we can implement a deep approach to jointly optimize $(\theta,\boldsymbol\beta)$ using a stochastic gradient-based method. At every training iteration ($\theta$ fixed), the RFF-based approximate kernel approach has a memory complexity of order $\mathcal{O}(ND)$ and a computational cost of order $\mathcal{O}(N nD)$ per training iteration, or $\mathcal{O}(BnD)$ when using mini-batching with batch size $B$, which is significantly lower than the cost $\mathcal{O}(N^2n^2)$ (resp.\ $\mathcal{O}(Bn^2N)$) of the full kernel-based hedging method, whenever $D\ll nN$. 

\subsection{The deep kernel hedging algorithm}\label{sec:deep-kernel-algo}
To illustrate the flexibility of our deep kernel hedging framework, we consider two popular and complementary loss functions and design their corresponding algorithm.
\subsubsection{Quadratic hedging}\label{subsec:QH}
Given a dataset of $N$ sample paths with the same given initial conditions, it is common in quadratic hedging problems to also learn the initial price $v_0\in \mathbb{R}$, such that the problem is given by 
\begin{equation*}
\inf_{v_0\in \mathbb{R},~\theta\in \Theta,~\phi\in \mathcal{H}_{K_\theta}} 
\frac{1}{N} \sum_{i=1}^N
\left(
H^i - B_T^iv_0 - \sum_{k=0}^{n-1} \phi({X}^i_{t_k}) \, G^i_{t_{k+1}}
\right)^2
+
\lambda \|\phi\|_{\mathcal{H}_{K_\theta}}^2. 
\end{equation*}
This hedging problem fits into our convex hedging setting and is therefore well-posed. \vspace{1mm}
\begin{sqremark}
The inclusion of \(v_0\) does not affect existence for any continuous,
convex and coercive loss function \(\mathcal L\).\ Indeed, since
\(B_T^i>0\) for every \(i\), the loss controls unbounded sequences in
\(v_0\), while the regularization term controls unbounded sequences in
\(\phi\). The same compactness and lower-semicontinuity argument as in Lemma \ref{lem: existence and uniqueness} and Lemma \ref{lem:existence_deep_kernel_hedging_rewritten} therefore yields a global minimizer \((v_0^\star,\theta^\star,\phi^\star)\). In particular, this applies to the squared loss above.
\end{sqremark}

To avoid the computational burden of storing the full $\R^{N\times N}$ hedging Gram matrix $\mathbf{Q}_\theta$, we consider the RFF approximation from the previous section  and solve the problem
\begin{equation} \label{eq:RFF_quadratic}
    \min_{v_0\in \mathbb{R}, \, \theta\in \Theta,~\bm{\beta}\in \mathbb{R}^D} \frac{1}{N}\|\mathbf{H}^{v_0}-\mathbf{Z}_\theta \bm{\beta}\|^2 +\lambda \|\boldsymbol\beta\|^2, 
\end{equation}
where $\mathbf{Z}_\theta $ is defined by \eqref{eq:def_psi_theta} and the RFF  hedging strategy given by \eqref{eq: optimal_func}. This formulation leads to Algorithm \ref{alg:quadratic_deep_RFF_hedging_SGD}, which implements a fully stochastic gradient descent procedure to this finite-dimensional problem and returns an approximate numerical candidate solution $(\widehat{v}_0, \widehat{\theta}_D,\widehat{\boldsymbol{\beta}}_D)$.

\begin{algorithm}[!h]
\caption{ Quadratic deep kernel hedging with RFF approximation}
\label{alg:quadratic_deep_RFF_hedging_SGD}
\textbf{Input:} Sample paths $\{S^i_{t_k},B^i_{t_k},X^i_{t_k}\}_{i=1,...,N;~k=0,...,n}$, payoffs $\{H^i\}_{i=1}^N$, shift-invariant base kernel $K$ with spectral measure $p_K$, regularization parameter $\lambda$, learning rate $\eta_e$, number of epochs $E$, batch size $N_{\mathrm{b}}$, and  number of RFF features $D$.  \vspace{-1mm}

\begin{enumerate}

\item For every $i=1,\ldots,N$ and $k=0,\ldots,n-1$, set\vspace{-1mm}
\[
x_{i,k} = {{X}}^i_{t_k}, 
\qquad 
G^i_{t_{k+1}} = B_T^i\left(\frac{S^i_{t_{k+1}}}{B^i_{t_{k+1}}} - \frac{S^i_{t_k}}{B^i_{t_k}}\right). \vspace{-1mm}
\]
\item Initialize:\vspace{-1mm}
\begin{itemize}
\item Price parameter $v_0 \in \R$ .
\item Neural network $\psi_\theta$ with parameters $\theta \in \Theta$.
\item Coefficients $\bm{\beta} \in \R^D$.
\item Random features  $(w_j,b_j)
\overset{\mathrm{i.i.d.}}{\sim}
p_K\otimes\mathcal U(0,2\pi)$ for $j=1,\ldots, D,$ and define
\[
y_\theta(x) = \sqrt{\frac{2}{D}}
\begin{bmatrix}
\cos(w_1^T \psi_\theta(x) + b_1) \\
\vdots \\
\cos(w_D^T \psi_\theta(x) + b_D)
\end{bmatrix} .
\]
\end{itemize}

\item \textbf{For} epoch $e = 1, \dots, E$ \textbf{do}
\begin{enumerate}

\item Sample a mini-batch $\mathcal{B} \subset \{1,\dots,N\}$ of size $N_{\mathrm{b}}$.

\item Compute the empirical loss on the batch:
\[
\mathcal{L}(v_0,\theta,\boldsymbol\beta) = \frac{1}{N_{\mathrm{b}}} \sum_{i \in \mathcal{B}} 
\left(
H^i - B_T^i v_0 
-  \bm{\beta}^\top \sum_{k=0}^{n-1}  y_\theta(x_{i,k}) \, G^i_{t_{k+1}}
\right)^2
+ \lambda \, \bm{\beta}^\top \bm{\beta}. 
\]

\item Update parameters using a stochastic gradient descent (SGD) method:
\[
v_0\leftarrow \texttt{SGD}(v_0, \nabla_{v_0} \mathcal{L}, \eta_e),
\qquad
\theta \leftarrow \texttt{SGD}(\theta, \nabla_\theta \mathcal{L}, \eta_e),
\qquad
\bm{\beta} \leftarrow \texttt{SGD}(\bm{\beta}, \nabla_\beta \mathcal{L}, \eta_e).
\]

\end{enumerate}
\item[] \textbf{End for}

\item \textbf{Output:} Learned initial price $v_0$ and approximated hedging strategy \vspace{-1mm}
\[
\widehat{\phi}_{D}(\cdot) = \sum_{j=1}^D \beta_{j} \, y_\theta^j(\cdot). 
\]
\end{enumerate}
\end{algorithm}

\subsubsection{CVaR hedging}\label{subsec:CVaR hedging}

The quadratic hedging objective introduced in Section~\ref{subsec:QH} penalizes positive and negative terminal hedging errors $\{\mathcal{E}_i(\phi)\}_{i=1}^N$ symmetrically and controls their overall dispersion. It does not, however, specifically target adverse tail outcomes. To emphasize large hedging losses, we can replace the quadratic objective by a conditional value-at-risk (CVaR) objective. Let $\tilde\alpha\in(0,1)$ denote the lower-tail probability. The CVaR hedging problem consists in  minimizing the upper-tail $\operatorname{CVaR}_{1-\tilde\alpha}^{\text{upper}}$ of the  terminal hedging shortfall $\mathcal{E}_i(\phi)$ at confidence level $1-\tilde\alpha$. Equivalently, this amounts to maximizing the
lower-tail  $\operatorname{CVaR}_{\tilde\alpha}^{\text{lower}}$ of the terminal P\&L $-\mathcal{E}_i(\phi)$, that is, its average value over
the worst $\tilde\alpha$ fraction of outcomes. Using the Rockafellar--Uryasev representation \citep{buehler2019deep}, the corresponding regularized empirical problem can be written
\begin{equation} \begin{aligned}\label{eq:deep_kernel_cvar_hedging}
\inf_{\theta \in \Theta, \phi \in \mathcal{H}_{K_\theta}}& \left\{ \operatorname{CVaR}_{1-\tilde\alpha}^{\text{upper}}
\Big(\big\{\mathcal E_i(\phi;v_0)\big\}_{i=1}^N\big)  + \lambda \|\phi\|_{\mathcal{H}_{K_\theta}}^2
\right\}
\\
&= \inf_{\substack{
\,\theta\in\Theta,\,
\phi\in\mathcal{H}_{K_\theta},\,\zeta \in\mathbb{R}
}}
\left\{
\zeta
+
\frac{1}{\tilde\alpha N}
\sum_{i=1}^N
\left(
H^i
-
B_T^i v_0
-
\sum_{k=0}^{n-1}
\phi(X^i_{t_k})
G^i_{t_{k+1}} -\zeta
\right)_+
+
\lambda
\|\phi\|_{\mathcal{H}_{K_\theta}}^2
\right\},
\end{aligned}
\end{equation}
where $\zeta \in\mathbb{R}$ is an auxiliary optimization variable. 
Note that at any optimum $(\theta^*, \phi^*, \zeta^*)$, $\zeta^\star$ is an empirical $(1-\tilde\alpha)$-quantile of the terminal hedging shortfall,
\(
\zeta^\star
\in
q^{\text{upper}}_{1-\tilde\alpha}
\Big(
\left\{
\mathcal{E}_i(\phi^*_{N,\theta^*};v_0)
\right\}_{i=1}^N
\Big) 
\). \vspace{1mm}

\begin{sqremark}
The Rockafellar--Uryasev integrand
\[
\mathcal L_{\tilde\alpha}^{\mathrm{CVaR}}(u,\zeta)
:=
\zeta+\frac{1}{\tilde\alpha}(u-\zeta)_+
\]
is continuous and jointly convex, but it is neither bounded from below
nor coercive jointly in $(u,\zeta)$.  The initial portfolio value $v_0$ is therefore treated as a fixed initial constraint. In
particular, when $B_T^i=B_T>0$, the cash-invariance of CVaR gives
\(
\operatorname{CVaR}^{\text{upper}}_{1-\tilde\alpha}
\bigl(\{\mathcal E_i(\phi;v_0)\}_{i=1}^N\bigr)
=
\operatorname{CVaR}^{\text{upper}}_{1-\tilde\alpha}
\bigl(\{\mathcal E_i(\phi;0)\}_{i=1}^N\bigr)-B_Tv_0
\).
Consequently, unconstrained minimization over $v_0$ would make the
objective unbounded below. A hedging price may instead be defined
through an indifference-pricing criterion, as in
\cite{buehler2019deep}.
\end{sqremark}

Although the CVaR integrand is not bounded below jointly in
$(u,\zeta)$, the regularized empirical problem
\eqref{eq:deep_kernel_cvar_hedging} remains well posed.

\begin{prop}[Existence for CVaR hedging]
\label{prop:existence_CVaR}
Let $\tilde\alpha\in(0,1)$ and $\lambda>0$, and suppose that $v_0\in \mathbb{R}$ is
fixed and Assumption~\ref{ass:kernel} holds. Then,
\eqref{eq:deep_kernel_cvar_hedging} admits a global minimizer.
\end{prop}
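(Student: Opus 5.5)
The plan is to reduce the CVaR problem to the convex-loss framework already treated, by eliminating the auxiliary variable $\zeta$. For fixed errors $u_1,\dots,u_N$ (here $u_i=\mathcal E_i(\phi;v_0)$), define
\[
F(u):=\min_{\zeta\in\mathbb R}\Big\{\zeta+\frac{1}{\tilde\alpha N}\sum_{i=1}^N(u_i-\zeta)_+\Big\}.
\]
I would first check that this minimum over $\zeta$ is attained. For $\zeta\ge\max_i u_i$ the objective equals $\zeta$, so it is increasing. For $\zeta\le\min_i u_i$ it equals $\zeta(1-1/\tilde\alpha)+\frac{1}{\tilde\alpha N}\sum_i u_i$, which is decreasing because $\tilde\alpha<1$. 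Hence the minimum is attained on $[\min_i u_i,\max_i u_i]$. The function $F$ is the empirical upper CVaR. It is convex in $u$ (a partial minimization of a jointly convex function), finite, and therefore continuous on $\mathbb R^N$. It also satisfies $F(u)\ge \frac1N\sum_i u_i$ (take the average) and $F(u)\ge\min_i u_i$.

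The objective then becomes $\inf_{\theta,\phi}\{F(\mathbf H^{v_0}-(L_i(\phi))_i)+\lambda\|\phi\|^2_{\mathcal H_{K_\theta}}\}$. The loss is no longer a sum of scalar losses, but it is a continuous convex function of the vector $(L_i(\phi))_i$. For fixed $\theta$ I would redo the argument of Lemma~\ref{lem: existence and uniqueness} and Theorem~\ref{thm:representer_thm_inner_problem}. The projection onto $\mathcal H_{\theta,0}$ leaves every $L_i$ unchanged and decreases the norm, so the inner problem reduces to
\[
\min_{\bm\alpha\in\mathbb R^N}\Big\{F(\mathbf H^{v_0}-\mathbf Q_\theta\bm\alpha)+\lambda\bm\alpha^\top\mathbf Q_\theta\bm\alpha\Big\}.
\]

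The main obstacle is coercivity, since $F$ is not bounded below. I would use the Lipschitz bound $|F(u)-F(u')|\le \tilde\alpha^{-1}\|u-u'\|_\infty$ together with $|L_i(\phi)|\le\|g_{\theta,i}\|\,\|\phi\|$. Under \eqref{ass:kernel:H1}--\eqref{ass:kernel:H2}, $\sup_\theta\|g_{\theta,i}\|^2=\sup_\theta(\mathbf Q_\theta)_{ii}=:c_i<\infty$ by continuity on a compact set. This gives
\[
F(\mathbf H^{v_0}-L(\phi))\ge F(\mathbf H^{v_0})-\tilde\alpha^{-1}\max_i\sqrt{c_i}\,\|\phi\|,
\]
so the objective is bounded below by $F(\mathbf H^{v_0})-C\|\phi\|+\lambda\|\phi\|^2$. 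Thus the objective is bounded below uniformly in $\theta$, and every sublevel set has $\|\phi\|_{\mathcal H_{K_\theta}}\le R$ with $R$ independent of $\theta$.

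To handle the joint minimization over $\theta$, I would pass to the variables $(\theta,\bm\beta)$ through a feature-free parametrization, mirroring Lemma~\ref{lem:existence_deep_kernel_hedging_rewritten}. Using a Cholesky-type square root is awkward, so instead I would take a minimizing sequence $(\theta_m,\bm\alpha_m)$ and set $u_m:=\mathbf Q_{\theta_m}\bm\alpha_m$ and $r_m:=\bm\alpha_m^\top\mathbf Q_{\theta_m}\bm\alpha_m\le R^2$. By Cauchy--Schwarz, $|(u_m)_i|\le\sqrt{c_i}R$, so $(\theta_m,u_m,r_m)$ is bounded and has a convergent subsequence with limit $(\theta^\star,u^\star,r^\star)$. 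It remains to realize $u^\star=\mathbf Q_{\theta^\star}\bm\alpha$ for some $\bm\alpha$ with $\bm\alpha^\top\mathbf Q_{\theta^\star}\bm\alpha\le r^\star$. This is the same closure step as in Lemma~\ref{lem:existence_deep_kernel_hedging_rewritten}, and I would argue it identically. The set $\{(\mathbf Q\bm\alpha,\bm\alpha^\top\mathbf Q\bm\alpha)\}$ is characterized by $u\in\operatorname{range}\mathbf Q$ and $u^\top\mathbf Q^+u\le r$, which can be written as the condition $\begin{psmallmatrix}\mathbf Q& u\\ u^\top & r\end{psmallmatrix}\succeq0$. That condition is closed under limits, by the continuity of $\theta\mapsto\mathbf Q_\theta$ from \eqref{ass:kernel:H2}.

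Finally, continuity of $F$ and attainment of $\zeta$ give a minimizer $(\theta^\star,\phi^\star,\zeta^\star)$ of \eqref{eq:deep_kernel_cvar_hedging}, with $\phi^\star=\sum_i\alpha_i g_{\theta^\star,i}$.
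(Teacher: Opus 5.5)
Your proof is correct, but it takes a genuinely different route from the paper's.

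\textbf{The paper's route.} The paper keeps $\zeta$ as an outer variable. For fixed $\zeta$, the Rockafellar--Uryasev integrand $u\mapsto\zeta+\tilde\alpha^{-1}(u-\zeta)_+$ is continuous, convex and bounded below by $\zeta$, so Theorem~\ref{thm:existence_thm_hedging} applies verbatim. What remains is a one-dimensional argument showing that the inner value function $\mathcal V(\zeta)$ is continuous and coercive. This uses two lower bounds on the objective: $\zeta$, and $-C_0-\frac{1-\tilde\alpha}{\tilde\alpha}\zeta$, the latter obtained after completing the square in $\mathbf c=\mathbf Q_\theta^{1/2}\bm\alpha$.

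\textbf{Your route.} You minimize out $\zeta$ first, which leaves the empirical CVaR $F(u)$: a non-separable loss that is not bounded below. You therefore cannot call Theorem~\ref{thm:existence_thm_hedging} and instead rebuild existence by hand. You redo the representer reduction for a general function of $(L_i(\phi))_i$. You obtain coercivity uniform in $\theta$ from the Lipschitz bound together with $\sup_\theta(\mathbf Q_\theta)_{ii}<\infty$. You close a minimizing sequence using $\bigl(\begin{smallmatrix}\mathbf Q_{\theta_m} & u_m\\ u_m^\top & r_m\end{smallmatrix}\bigr)\succeq 0$ and the closedness of the PSD cone.

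\textbf{What each buys.} The paper's route is shorter because it recycles Theorem~\ref{thm:existence_thm_hedging} and the continuity of $\theta\mapsto\mathbf Q_\theta^{1/2}$. Yours is self-contained and avoids matrix square roots altogether. It also localizes $\zeta^\star$ in $[\min_i u_i,\max_i u_i]$. Most importantly, it proves more: existence for any continuous loss of the whole error vector that is bounded below by $a-b\|u\|$, separable or not. This covers, for example, any Lipschitz function of the errors such as a finite convex risk measure.

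\textbf{Two minor points.}
\begin{enumerate}
\item Your closure step is not ``the same'' as in Lemma~\ref{lem:existence_deep_kernel_hedging_rewritten}. That proof instead reparametrizes by $\mathbf c=\mathbf Q_\theta^{1/2}\bm\alpha$ and applies Weierstrass on $\Theta\times\overline B_R$. Your Schur-complement argument is nonetheless complete on its own.
\item The conditions $u\in\operatorname{range}\mathbf Q$ and $u^\top\mathbf Q^+u\le r$ characterize the set $\{(u,r):\exists\bm\alpha,\ u=\mathbf Q\bm\alpha,\ \bm\alpha^\top\mathbf Q\bm\alpha\le r\}$. They do not characterize the graph set $\{(\mathbf Q\bm\alpha,\bm\alpha^\top\mathbf Q\bm\alpha)\}$ you wrote, which would require equality. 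Your subsequent use of the condition is the correct one.
\end{enumerate}
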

\begin{proof}
The proof is deferred in Appendix \ref{proof:existence_CVar}. 
\end{proof}
Using the definitions \eqref{eq:def_H_v0}--\eqref{eq:def_psi_theta} of $\mathbf{H}^{v_0}$ and $\mathbf{Z}_\theta$, we once again consider the RFF approximate formulation introduced in Section \ref{sec:rff_deep_kernel_hedging} and solve 
\begin{equation*}
    \min_{\theta\in \Theta,~\bm{\beta}\in \mathbb{R}^D,  \, \zeta \in \R} \left\{\zeta
+
\frac{1}{\tilde\alpha N}
\sum_{i=1}^N
\left(
\bigl(\mathbf{H}^{v_0}-\mathbf{Z}_\theta \bm{\beta}\bigr)_i
-\zeta
\right)_+ +\lambda \|\bm{\beta}\|^2 \right\} .
\end{equation*}
This formulation leads to Algorithm \ref{alg:CVaR_deep_RFF_hedging_SGD}, which applies stochastic
gradient-based optimization to this finite-dimensional problem and
returns a numerical candidate solution
$(\widehat\theta_D,\widehat{\boldsymbol\beta}_D,\widehat\zeta_D)$.

\begin{algorithm}[!h]
\caption{ CVaR deep kernel hedging with RFF approximation}
\label{alg:CVaR_deep_RFF_hedging_SGD}
\textbf{Input:} Sample paths $\{S^i_{t_k},B^i_{t_k},X^i_{t_k}\}_{i=1,...,N;~k=0,...,n}$, payoffs $\{H^i\}_{i=1}^N$, fixed initial capital $v_0$, shift-invariant base kernel $K$ with spectral measure $p_K$, regularization parameter $\lambda$, learning rate $\eta_e$, number of epochs $E$, batch size $N_{\mathrm{b}}$, and  number of RFF features $D$.

\begin{enumerate}

\item For every $i=1,\ldots,N$ and $k=0,\ldots,n-1$, set\vspace{-1mm}
\[
x_{i,k} = {{X}}^i_{t_k}, 
\qquad 
G^i_{t_{k+1}} = B_T^i\left(\frac{S^i_{t_{k+1}}}{B^i_{t_{k+1}}} - \frac{S^i_{t_k}}{B^i_{t_k}}\right). \vspace{-1mm}
\]
\item Initialize:
\begin{itemize}
\item Neural network $\psi_\theta$ with parameters $\theta \in \Theta$.
\item Coefficients $\bm{\beta} \in \R^D$ and $\zeta \in \R$.
\item Random features  $(w_j,b_j)
\overset{\mathrm{i.i.d.}}{\sim}
p_K\otimes\mathcal U(0,2\pi)$ for $j=1,\ldots, D,$ and define
\[
y_\theta(x) = \sqrt{\frac{2}{D}}
\begin{bmatrix}
\cos(w_1^T \psi_\theta(x) + b_1) \\
\vdots \\
\cos(w_D^T \psi_\theta(x) + b_D)
\end{bmatrix} .
\]
\end{itemize}

\item \textbf{For} epoch $e = 1, \dots, E$ \textbf{do}
\begin{enumerate}

\item Sample a mini-batch $\mathcal{B} \subset \{1,\dots,N\}$ of size $N_{\mathrm{b}}$.

\item Compute the empirical loss on the batch:
\[
\mathcal{L}(\theta,\boldsymbol\beta,\zeta) = \zeta + \frac{1}{\tilde \alpha N_{\mathrm{b}}} \sum_{i \in \mathcal{B}} 
\left(
H^i - B_T^i v_0 
- \bm{\beta}^\top \sum_{k=0}^{n-1} y_\theta(x_{i,k}) \, G^i_{t_{k+1}}-\zeta \right)_+
+ \lambda \, \bm{\beta}^\top \bm{\beta}. 
\]

\item Update parameters using a stochastic gradient descent (SGD) method:
\[
\theta \leftarrow \texttt{SGD}(\theta, \nabla_\theta \mathcal{L}, \eta_e),
\qquad
\bm{\beta} \leftarrow \texttt{SGD}(\bm{\beta}, \nabla_\beta \mathcal{L}, \eta_e),
\qquad
\zeta \leftarrow \texttt{SGD}(\zeta, \nabla_\zeta \mathcal{L}, \eta_e).
\]

\end{enumerate}
\item[] \textbf{End for}

\item \textbf{Output:} Learned hedging strategy
\[
\widehat{\phi}_{D}(\cdot) = \sum_{j=1}^D \beta_{j} \, y_\theta^j(\cdot). 
\]
\end{enumerate}
\end{algorithm}

\section{Numerical experiments}\label{sec:num}
This section presents the numerical experiments used to evaluate the proposed deep kernel hedging approach. To this end, we consider both synthetic data generated in a Heston stochastic volatility model and real data from S\&P500 time series. We compare our deep kernel approach with standard benchmarks, including deep hedging and classical kernel methods. We first detail the architectures of the different learning approaches, before discussing the numerical results.

\paragraph{Deep kernel architecture.}
We consider a deep kernel construction where the input features are first mapped through a shallow neural network
$\psi_\omega : \mathbb{R}^{d} \to \mathbb{R}^p$, followed by a standard Gaussian (RBF) kernel applied in the learned representation space. The choice of the RBF kernel is motivated by its strong universality and smoothness properties\footnote{In particular, if $\mathcal{X} \subset \R^d$ is compact and $\psi_\theta : \mathcal{X} \to \R^p$ is continuous, then $\psi_\theta(\mathcal{X})$ is compact, and the Gaussian kernel is universal on the  latent set $\psi_\theta(\mathcal{X})$ \citep{micchelli2006universal}. Consequently, the deep kernel $K_\theta$ is universal on $\mathcal{X}$ if and only if $\psi_\theta$ is injective on $\mathcal{X}$; that is, its associated RKHS $\mathcal{H}_{K_\theta}$ is uniformly dense in $C(\mathcal{X})$.}, which make it particularly well-suited for capturing nonlinear relationships in a wide range of function classes. More specifically, the deep kernel is defined as
\[
K_\theta(x,y) := \exp\left(-\frac{\|\tilde{\psi}_\omega(x) - \tilde{\psi}_\omega(y)\|^2}{2\gamma^2}\right),
\]
where $\theta := (\omega,\gamma)$, with $\gamma>0$ controlling the scale of the representation in the RBF kernel, and
\[
\tilde{\psi}_\omega(\cdot)
:=\frac{\psi_\omega(\cdot)}{\max\{\|\psi_\omega(\cdot)\|, \epsilon_{\text{norm}}\}} 
\in\mathbb{S}^{p-1} ,
\]
denoting the $\ell_2$-normalized\footnote{Note that $\ell_2$-normalization improves numerical stability in our experiments, even though it may destroy injectivity of $\psi_\theta$. Similarly, we do not constrain in practice $\theta$ to lie in a compact set $\Theta$ as doing so limits the expressiveness of the neural network.} output of the neural network with $\epsilon_{\text{norm}} = 10^{-6}$. By defining
\[
\psi_\theta(\cdot)
:=\frac{1}{\gamma}\tilde{\psi}_\omega(\cdot),
\]
$\gamma$ can be absorbed into the learned representation, yielding the equivalent formulation
\begin{equation}\label{eq: reparameterization_deep_RBF_kernel}
K_\theta(x,y) = K_{\text{RBF}}({\psi}_\theta(x), {\psi}_\theta(y)),
\end{equation}
where
\[
K_{\text{RBF}}(x,y):=\exp\left(-\frac{\|x- y\|^2}{2}\right).
\]

The $\ell_2$ normalization removes the scale degree of freedom from the neural representation, allowing the neural network $\psi_\omega$ to focus on learning the geometry of the representation, while $\gamma$ controls the scale of the RBF kernel. Moreover, the representation \eqref{eq: reparameterization_deep_RBF_kernel} shows that the RBF scaling parameter can be learned jointly with the neural network parameters rather than tuned separately. This is particularly advantageous in our setting, since selecting an appropriate scale parameter for the RBF kernel can be challenging in practice; see for instance \cite{wainer2021tune}. Thus, by incorporating $\gamma$ into the learned representation, our approach avoids the need for a separate fine-tuning step for the scale parameter (or any other hyperparameter) of the base kernel. \\

As detailed in Sections \ref{sec:rff_deep_kernel_hedging}--\ref{sec:deep-kernel-algo}, our deep kernel-based hedging algorithm relies on a RFF approximation. Since we consider the standard RBF kernel as the base kernel, its spectral density is given by
\begin{equation*}
    p_{K_\text{RBF}}(w) = (2\pi)^{-p/2}
    \exp\left(-\frac{\|w\|^2}{2}\right),
    \qquad w\in\mathbb{R}^p,
\end{equation*}
and hence the random frequencies are sampled as
\begin{equation*}
    w_i \sim \mathcal{N}(0,I_p),
    \qquad i=1,\ldots,D.
\end{equation*}
Thanks to the reparameterization in \eqref{eq: reparameterization_deep_RBF_kernel}, the random frequencies $w_i$ are sampled once at the beginning of the algorithm and kept fixed throughout training. Without such a reparameterization, the random Fourier features would have to be sampled from a distribution that explicitly depends on the kernel hyperparameter $\gamma$. As a result, any update of $\gamma$ during training would require resampling the random features, leading to a less stable optimization procedure due to the additional source of randomness at each iteration.\\

\begin{figure}[!ht]
   \centering
    \resizebox{\linewidth}{!}{%
    \begin{tikzpicture}[
    x=1cm,
    y=1cm,
    >=Latex,
    neuron/.style={
        circle,
        draw=black,
        thick,
        fill=white,
        minimum size=1.15cm,
        inner sep=0pt,
        font=\large
    },
    output/.style={
        neuron,
        minimum size=1.45cm
    },
    connection/.style={
        ->,
        thin,
        draw=black
    },
    layer title/.style={
        font=\large
    },
    annotation/.style={
        align=center,
        font=\normalsize
    }
]

% ------------------------------------------------------------------
% Node coordinates
% ------------------------------------------------------------------

% Input layer
\node[neuron] (x1) at (0,  3.0) {$x_1$};
\node[neuron] (x2) at (0,  1.0) {$x_2$};
\node[neuron] (x3) at (0, -1.0) {$x_3$};
\node at (0,-2.15) {$\vdots$};
\node[neuron] (xd) at (0, -3.4) {$x_d$};

% Feature network
\node[neuron] (h1) at (4,  3.0) {};
\node[neuron] (h2) at (4,  1.0) {};
\node[neuron] (h3) at (4, -1.0) {};
\node at (4,-2.15) {$\vdots$};
\node[neuron] (h4) at (4, -3.4) {};

% Latent features
\node[neuron] (z1) at (8,  2.7) {$z_1$};
\node[neuron] (z2) at (8,  0.7) {$z_2$};
\node at (8,-0.65) {$\vdots$};
\node[neuron] (zp) at (8, -2.5) {$z_p$};

% Random Fourier feature layer
\node[neuron] (y1) at (12,  2.7) {$y_1$};
\node[neuron] (y2) at (12,  0.7) {$y_2$};
\node at (12,-0.65) {$\vdots$};
\node[neuron] (yD) at (12, -2.5) {$y_D$};

% Output layer
\node[output] (phi) at (15.4,0.1) {$\widehat{\phi}_D$};

% ------------------------------------------------------------------
% Connections
% Drawn in the background so that lines do not appear inside nodes
% ------------------------------------------------------------------

\begin{scope}[on background layer]

% Input -> feature network
\foreach \x in {x1,x2,x3,xd}{
    \foreach \h in {h1,h2,h3,h4}{
        \draw[connection] (\x) -- (\h);
    }
}

% Feature network -> latent features
\foreach \h in {h1,h2,h3,h4}{
    \foreach \z in {z1,z2,zp}{
        \draw[connection] (\h) -- (\z);
    }
}

% Latent features -> RFF layer
\foreach \z in {z1,z2,zp}{
    \foreach \y in {y1,y2,yD}{
        \draw[connection] (\z) -- (\y);
    }
}

% RFF layer -> output
\foreach \y in {y1,y2,yD}{
    \draw[connection] (\y) -- (phi);
}

\end{scope}

% ------------------------------------------------------------------
% Layer titles
% ------------------------------------------------------------------

\node[layer title] at (0,    4.65) {Input layer};
\node[layer title] at (4,    4.65) {Feature network};
\node[layer title] at (8,    4.65) {Latent features};
\node[layer title] at (12,   4.65) {RFF layer};
\node[layer title] at (15.4, 4.65) {Output layer};

% ------------------------------------------------------------------
% Mathematical descriptions
% ------------------------------------------------------------------

\node[annotation] at (0,-5.5)
    {input $x\in\mathcal{X}\subseteq \R^d$};

\node[annotation] at (4,-5.5)
    {Linear $\rightarrow$ tanh $\rightarrow$ Linear\\[2mm]
     $z=\psi_{\theta}(x)= \tilde\psi_{\omega}(x)/\gamma$};

\node[annotation] at (7.8,-5.5)
    {$z\in\mathbb{R}^{p}$};

\node[annotation] at (11.4,-5.35)
    {$\displaystyle {\scriptsize
      y=\sqrt{\frac{2}{D}}
      \cos\!\left(w^{\top}z+b\right) 
      \in\mathbb{R}^{D}}$;\\ \\ \vspace*{-1.5mm}
      $\displaystyle {\scriptsize 
      (w,b)\sim p_{K_{\rm{RBF}}}\hspace{-0.6mm}\otimes\mathcal{U}(0,2\pi)}$ fixed};

\node[annotation] at (15.4,-5.5)
    {$\displaystyle
      \widehat{\phi}_D=\boldsymbol{\beta}^{\top}y\in\mathbb{R}$};

\end{tikzpicture}}
    \caption{Deep Kernel Hedging architecture}
    \label{fig: DKH_architecture}
    \end{figure}

The neural network $\psi_\omega$ is composed of one hidden layer of 16 neurons with hyperbolic tangent (\texttt{Tanh}) activation function and one output linear layer of dimension 16. For the random Fourier feature approximation, we fix the number of features to $D=100$. This provides a good trade-off between approximation accuracy and computational cost in our low-data regime, as confirmed by Figure \ref{fig:RFF_conv} in Appendix \ref{sec:appendix_synthetic_data}. Note that we also standardize the input variable before it is fed into the neural network. Figure \ref{fig: DKH_architecture} highlights the chosen learning architecture for the deep kernel hedging approach with RFF approximation. The training is done in batches of size $N_{\mathrm{b}}=32$ for $E=100$ epochs, with a $\texttt{AdamW}$ optimizer, see \cite{loshchilov2017decoupled}, with weight decay $5\mathrm{e}{-3}$, a learning rate $\eta_e$ ranging from $1\mathrm{e}{-3}$ to $1\mathrm{e}{-4}$ with a \texttt{cosine} scheduler over the training steps, and a ridge regularization parameter $\lambda = 1\mathrm{e}{-6}$.

\paragraph{Benchmark methods.}
We compare the proposed deep kernel hedging approach against several benchmark methods. To this end, we consider both a classical kernel-based hedging strategy and a standard deep hedging approach with feedforward neural network (FNN). All the following benchmarks are trained using the $\texttt{AdamW}$ optimizer with the same hyperparameter configuration as the deep kernel hedging method. This ensures that differences in performance are due to  modeling choices rather than optimization settings. \\

For the classical kernel method, we adopt the same overall setting as in the deep kernel approach, but remove the learned representation. In particular, we fix the feature map to the identity, $\psi_\theta(x) = x$, and use an RBF kernel with bandwidth parameter $\gamma$. As in the deep kernel model, we rely on a RFF approximation of the kernel, with $D = 100$ features. In this case, the  kernel bandwidth $\gamma$ is the only learned kernel parameter. This design isolates the effect of learning a nonlinear representation, while keeping the kernel structure and RFF approximation scheme identical to the deep kernel framework.\\

For the classical deep hedging  \citep{buehler2019deep}, we implement a fully connected FNN as a benchmark model, where the network directly outputs the hedging strategy $\phi$. In general non-Markovian settings, one may naturally consider more sophisticated architectures such as recurrent neural networks (RNNs) or convolutional neural networks (CNNs) in order to capture temporal dependencies. However, recent works \citep{abi2025hedging} have shown that FNNs remain highly competitive in such settings when combined with appropriate feature engineering, in particular through time-augmented signature representations of the input features. This further supports the use of a FNN as a strong and versatile baseline across different modeling regimes. The chosen benchmark FNN consists of two hidden layers with $16$ neurons each, using \texttt{Tanh} activation functions, so as to match the deep kernel hedging architecture and to have a fair numerical comparison\footnote{Under the considered architectures, the numbers of trainable parameters are $16d+305$ for deep hedging and $16d+389$ for deep kernel hedging, where $d$ is the input dimension.}. \\

\begin{sqremark}
To assess whether the observed performance gains are specific to the use of \texttt{cosine} random Fourier features (specific to kernel approximation) or instead result from the learned latent representation $\psi_\theta$, we also investigate additional RFNN benchmarks with \texttt{ReLU} and \texttt{Tanh} activations \citep{huang2006extreme}, although these are not included in the main text. The corresponding results are reported by Table \ref{tab:other_metrics_qh_rfnn} in Appendix~\ref{sec:appendix_synthetic_data}. These experiments show that the choice of the random-feature activation has only a minor impact on performance, with \texttt{cosine}, \texttt{ReLU}, and \texttt{Tanh} features yielding very similar results when combined with the learned representation. In contrast, removing the learned representation  and using the random features directly leads to substantially weaker performance. This suggests that the main benefit of our approach comes from learning a nonlinear latent representation $\psi_\theta$ prior to the random-feature layer, rather than from the specific choice of the random-feature family. This further supports our use of \texttt{cosine} RFFs as the canonical choice throughout the paper.
\end{sqremark}

\subsection{Results on synthetic data: Heston model}
First, we consider numerical experiments on synthetic data. To this end, as in \cite{buehler2019deep}, we consider the \cite{heston1993closed} model. Let $\left(\Omega, \mathcal{F}, \mathcal{F}_t, \mathbb{P}\right)$ be a filtered probability space, where $\mathbb{P}$ stands for the real measure. The \cite{heston1993closed} model is specified
by the following stochastic differential equations:
\begin{equation*}
    \begin{aligned}
        dS_t &= \mu S_t dt + \sqrt{\nu_t} S_t dW^S_t, \quad S_0>0, \\
        d\nu_t&= \kappa\left(\bar\nu-\nu_t\right)dt + \sigma_\nu \sqrt{\nu_t}dW^\nu_t, \quad \nu_0>0,
    \end{aligned}
\end{equation*}
 where $(W^S,W^\nu)$ are correlated Brownian motions such that $d\langle W^S,W^\nu\rangle_t = \rho dt$, $\mu, \kappa, \bar\nu, \sigma_\nu>0$ and $\rho\in(-1,1)$. In addition, the cash account process $(B_t)_{t\ge0}$ satisfies $B_t = e^{rt}$, where $r\in \mathbb{R}$ denotes the constant risk-free interest rate. For our numerical results, we have chosen the set of parameters from \cite[Table 3]{mikkila2023empirical} given by
 \begin{equation*}
     S_0=1,r=0.02, \mu = 0.1, \nu_0 = 0.03, \bar\nu= 0.04, \kappa=3.41, \sigma_\nu=0.62, \rho = -0.79. 
 \end{equation*}
 
In this setting, we consider the quadratic and CVaR hedging of at-the-money (ATM) European and Asian call options. In the main text, we focus on the ATM European call with payoff
\begin{equation*}
H_T = (S_T-S_0)_+,
\end{equation*}
while the corresponding results and analysis for the Asian call are provided in Appendix \ref{sec:appendix_synthetic_data}. To this end, we simulate the process $(S,\nu)$ using a discrete Euler scheme with $\delta_t=1/360$ that generates our training synthetic dataset consisting of $N$   sample paths observed at different trading dates $0=t_0<t_1<\cdots<t_n=T$,
 \[
\mathcal D_{N}
=
\Big\{
\left(\left(S_{t_k}^i,\nu^i_{t_k}\right)_{k=0}^n, H^i_T\right)_{i=1}^{N}
\Big\}.
\]
In the Markovian setting considered here, we follow a similar approach to \cite{buehler2019deep,lutkebohmert2022robust,abi2025hedging} and use only the time-augmented risk factors as input features, namely\footnote{We also experimented with path-signature features, but found no additional benefit from their use in the Markovian setting considered here.}
 \begin{equation*}
     X^i_{t_k} = (t_k, S^i_{t_k},\nu^i_{t_k}), \qquad k=0,\cdots, n. 
 \end{equation*}
We remark that in more general non-Markovian settings, path-signature features could be used to enrich the state representation and capture path-dependent effects (\cite{abi2025hedging, cirone2025rough} and Section \ref{subsec:num_real} below). \\

We now compare the different learning approaches for different maturities
\[
T \in \{1/12,1/4,1/2\},
\]
and different training sample sizes
\[
N \in \{250,500,1000,2500\}.
\]
We restrict ourselves to relatively small training datasets in order to remain in a realistic financial setting. As a benchmark, we also include the classical \cite{black1973pricing} practitioner delta hedging strategy, where the hedge ratio at time $t$ is given by the Black--Scholes delta evaluated using the current stochastic volatility $\sqrt{\nu_t}$, i.e.,
\[
\Delta^{\mathrm{BS}}(S_t,K,T-t,r,\sqrt{\nu_t}).
\]
For quadratic hedging, the initial price $v_0$ is also learned, while for CVaR hedging, we fix the initial price to the corresponding Black--Scholes price with $\sigma=\sqrt{\nu_0}$, see Section \ref{sec:deep-kernel-algo}. We evaluate all methods out-of-sample on an independent test set $\mathcal{D}_{N^\text{OOS}}$ consisting of $N^\text{OOS}=25{,}000$ simulated paths. To assess hedging performance, we consider several OOS metrics based on the terminal hedging error \eqref{eq:hedg_error}, including the mean squared error (MSE),  mean absolute error (MAE), as well as tail-risk measures with the $95\%$ quantile $q^{\text{upper}}_{0.95}$ and upper-tail $\operatorname{CVaR}^{\text{upper}}_{0.95}$ of the terminal hedging error with $\tilde\alpha = 5\%$,
\begin{equation*}
    \begin{aligned}
    &\text{MSE} = \frac{1}{N^\text{OOS}}\sum_{i=1}^{N^\text{OOS}} \mathcal{E}^2_i(\widehat{\phi}_D), \ \ \, \qquad \text{MAE} = \frac{1}{N^\text{OOS}}\sum_{i=1}^{N^\text{OOS}} |\mathcal{E}_i(\widehat{\phi}_D)|, \\
    &q^{\text{upper}}_{0.95} = q^{\text{upper}}_{0.95}\big(\{\mathcal{E}_i(\widehat{\phi}_D) \}_{i=1}^{N^\text{OOS}}\big), \quad  \operatorname{CVaR}^{\text{upper}}_{0.95} =\operatorname{CVaR}^{\text{upper}}_{0.95}\big(\{ \mathcal{E}_i(\widehat{\phi}_D) \}_{i=1}^{N^\text{OOS}}\big) .
    \end{aligned}
\end{equation*} Moreover, since the learning procedures involve random initialization of the neural network parameters and since the kernel-based methods additionally rely on randomly sampled random Fourier features, we repeat the experiments using different random seeds. We report the mean of each performance metric across the different seeds, together with its corresponding standard deviation. \\

\begin{table}[!ht]
\small
\centering
\caption{Out-of-sample MSE for the ATM European call for quadratic hedging. Reported values are averages over ten seeds, with standard deviations in parentheses.}
\label{tab:loss_set_1}
\begin{minipage}{1\textwidth}
\centering
\caption*{$T=1/12$}
\begin{tabular}{lcccc}
\toprule
$N$& $250$ & $500$ &$1000$ &$2500$\\
\midrule
BS & $2.56 \mathrm{e}{-5}$ &$2.56 \mathrm{e}{-5}$ & $2.56 \mathrm{e}{-5}$& $2.56 \mathrm{e}{-5}$\\
Deep hedging & $3.08\mathrm{e}{-5}$ ($2.4\mathrm{e}{-6}$) & $2.56\mathrm{e}{-5}$  ($7.9\mathrm{e}{-7}$)& $2.33\mathrm{e}{-5}$ ($3.4\mathrm{e}{-7}$)& $2.21\mathrm{e}{-5}$ ($2.4 \mathrm{e}{-7}$)\\
Kernel hedging & $4.64\mathrm{e}{-5}$ ($2.2\mathrm{e}{-6}$) & $4.21\mathrm{e}{-5}$ ($1.1\mathrm{e}{-6}$) & $3.94\mathrm{e}{-5}$ ($7.1\mathrm{e}{-7}$)& $3.82 \mathrm{e}{-5}$ ($3.7\mathrm{e}{-7}$)\\
Deep kernel hedging& $\mathbf{2.44{e}{-5}~(1.2{e}{-6})}$ & $\mathbf{2.22 {e}{-5}~(4.8{e}{-7})}$ & $\mathbf{2.14{e}{-5}~(1.5{e}{-7})}$ & $\mathbf{2.09{e}{-5}~(5.9{e}{-8})}$\\
\bottomrule 
\end{tabular}
\end{minipage}
\\ \vspace{0.25cm}
\begin{minipage}{1\textwidth}
\centering
\caption*{$T=1/4$}
\begin{tabular}{lcccc}
\toprule
$N$& $250$ & $500$ &$1000$ &$2500$\\
\midrule
BS & $1.07\mathrm{e}{-4}$ &$1.07\mathrm{e}{-4}$ & $1.07\mathrm{e}{-4}$& $1.07\mathrm{e}{-4}$\\
Deep hedging & $1.06\mathrm{e}{-4}$ ($5.7\mathrm{e}{-6}$) & $8.88\mathrm{e}{-5}$ ($3.8\mathrm{e}{-6}$) & $7.71\mathrm{e}{-5}$ ($2.4\mathrm{e}{-6}$) & $7.24\mathrm{e}{-5}$ ($1.4\mathrm{e}{-6}$)\\
Kernel hedging & $1.56\mathrm{e}{-4}$ ($8.3\mathrm{e}{-6}$) & $1.38\mathrm{e}{-4}$ ($3.0\mathrm{e}{-6}$) & $1.33\mathrm{e}{-4}$ ($1.6\mathrm{e}{-6}$)& $1.28 \mathrm{e}{-4}$ ($7.5\mathrm{e}{-7}$)\\
Deep kernel hedging&$\mathbf{7.91{e}{-5}~(2.2{e}{-6})}$ & $\mathbf{7.11{e}{-5}~(1.2{e}{-6})}$& $\mathbf{6.88{e}{-5}~(8.0{e}{-7})}$& $\mathbf{6.75{e}{-5}~(4.9{e}{-7})}$\\
\bottomrule 
\end{tabular}
\end{minipage}
\\ \vspace{0.25cm}
\begin{minipage}{1\textwidth}
\centering
\caption*{$T=1/2$}
\begin{tabular}{lcccc}
\toprule
$N$& $250$ & $500$ &$1000$ &$2500$\\
\midrule
BS & $2.40\mathrm{e}{-4}$ &$2.40\mathrm{e}{-4}$ & $2.40\mathrm{e}{-4}$& $2.40\mathrm{e}{-4}$\\
Deep hedging & $2.04\mathrm{e}{-4}$ ($9.9\mathrm{e}{-6}$) & $1.79\mathrm{e}{-4}$ ($9.1\mathrm{e}{-6}$) & $1.59\mathrm{e}{-4}$ ($6.8\mathrm{e}{-6}$) & $1.43\mathrm{e}{-4}$ ($3.9\mathrm{e}{-6}$)\\
Kernel hedging & $3.21\mathrm{e}{-4}$ ($1.3\mathrm{e}{-5}$) &  $2.97\mathrm{e}{-4}$ ($5.9\mathrm{e}{-6}$) & $2.87\mathrm{e}{-4}$ ($3.7\mathrm{e}{-6}$)& $2.65 \mathrm{e}{-4}$ ($1.7\mathrm{e}{-6}$)\\
Deep kernel hedging& $\mathbf{1.62{e}{-4}~ (3.8{e}{-6})}$ & $\mathbf{1.44{e}{-4}~(2.0{e}{-6})}$& $\mathbf{1.37{e}{-4}~(2.3{e}{-6})}$ & $\mathbf{1.31{e}{-4}~(5.8{e}{-7}})$\\
\bottomrule 
\end{tabular}
\end{minipage}

\end{table}

\begin{table}[h!]
\small
\centering
\caption{Out-of-sample metrics for the ATM European call for quadratic hedging with $N=1000$. Reported values are averages over ten seeds, with standard deviations in parentheses.}
\label{tab:other_metrics_set_1}
\begin{minipage}{1\textwidth}
\centering
\caption*{$T=1/12$}
\begin{tabular}{lcccc}
\toprule
 & MSE & MAE &$q^{\text{upper}}_{0.95}$ & CVaR$^{\text{upper}}_{0.95}$\\
\midrule
BS  & $2.56 \mathrm{e}{-5}$& ${4.01\mathrm{e}{-3}}$ & $8.96\mathrm{e}{-3}$ & $1.21\mathrm{e}{-2}$ \\
Deep hedging& $2.33\mathrm{e}{-5}~(3.4 \mathrm{e}{-7})$ & $3.66 \mathrm{e}{-3}$ ($3.6 \mathrm{e}{-5})$& $8.28 \mathrm{e}{-3}$ ($7.0\mathrm{e}{-5}$)& $1.16 \mathrm{e}{-2}$ ($9.2\mathrm{e}{-5})$\\
Kernel hedging & $3.94\mathrm{e}{-5}~(7.2 \mathrm{e}{-7})$& $4.79 \mathrm{e}{-3}$ ($4.5\mathrm{e}{-5}$) & $1.12 \mathrm{e}{-2}$ ($1.2 \mathrm{e}{-4}$) & $1.53\mathrm{e}{-2}$ ($1.6 \mathrm{e}{-4}$) \\
Deep kernel hedging& $\mathbf{2.14{e}{-5}~(1.5 {e}{-7})}$ & $\mathbf{3.51 {e}{-3}~(7.3{e}{-6})}$ & $\mathbf{8.05{e}{-3}~(9.0{e}{-5})}$ & $\mathbf{1.13{e}{-2}~(1.1 {e}{-4})}$ \\
\bottomrule 
\end{tabular}
\end{minipage}
\\ \vspace{0.25cm}
\begin{minipage}{1\textwidth}
\centering
\caption*{$T=1/4$}
\begin{tabular}{lcccc}
\toprule
& MSE & MAE &$q^{\text{upper}}_{0.95}$ & CVaR$^{\text{upper}}_{0.95}$\\
\midrule
BS  & $1.07\mathrm{e}{-4}$& $8.28\mathrm{e}{-3}$ & $1.95 \mathrm{e}{-2}$ & $2.49\mathrm{e}{-2}$ \\
Deep hedging& $7.71\mathrm{e}{-5}~(2.4 \mathrm{e}{-6})$& $6.56 \mathrm{e}{-3}$ ($9.5\mathrm{e}{-5})$& $1.54\mathrm{e}{-2}$ ($3.5\mathrm{e}{-4}$)& $2.16 {e}{-2}~(4.0{e}{-4})$\\
Kernel hedging & $1.33\mathrm{e}{-4}~(1.6 \mathrm{e}{-6})$& $8.58 \mathrm{e}{-3}$ ($7.7\mathrm{e}{-5}$) & $1.99 \mathrm{e}{-2}$ ($1.3 \mathrm{e}{-4}$) & $2.78\mathrm{e}{-2}$ ($1.7 \mathrm{e}{-4}$) \\
Deep kernel hedging& $\mathbf{6.87{e}{-5}~(8.0 {e}{-7})}$& $\mathbf{6.29 {e}{-3}~(2.4{e}{-5})}$ & $\mathbf{1.46{e}{-2}~(1.3{e}{-4})}$ & $\mathbf{2.06{e}{-2} (2.5 {e}{-4})}$ \\
\bottomrule 
\end{tabular}
\end{minipage}
\\ \vspace{0.25cm}
\begin{minipage}{1\textwidth}
\centering
\caption*{$T=1/2$}
\begin{tabular}{lcccc}
\toprule
 & MSE &MAE &$q^{\text{upper}}_{0.95}$ & CVaR$^{\text{upper}}_{0.95}$\\
\midrule
BS  & $2.40\mathrm{e}{-4}$& $1.22\mathrm{e}{-2}$ & $3.08\mathrm{e}{-2}$ & $3.98\mathrm{e}{-2}$ \\
Deep hedging& $1.59\mathrm{e}{-4}~(6.8 \mathrm{e}{-6})$&$9.34 \mathrm{e}{-3}$ ($1.6 \mathrm{e}{-4})$& $2.24 \mathrm{e}{-2}$ ($4.9\mathrm{e}{-4}$)& $3.15 \mathrm{e}{-2}$ ($7.6\mathrm{e}{-4})$\\
Kernel hedging& $2.87\mathrm{e}{-4}~(3.7 \mathrm{e}{-6})$ & $1.24 \mathrm{e}{-2}$ ($1.3\mathrm{e}{-4}$) & $2.92 \mathrm{e}{-2}$ ($2.3 \mathrm{e}{-4}$) & $4.31\mathrm{e}{-2}$ ($2.9\mathrm{e}{-4}$) \\
Deep kernel hedging& $\mathbf{1.37{e}{-4}~(2.3 {e}{-6})}$& $\mathbf{8.68 {e}{-3}~(8.1{e}{-5})}$ & $\mathbf{2.11{e}{-2}~(2.3{e}{-4)}}$ & $\mathbf{3.02{e}{-2}~(4.6 {e}{-4})}$\\
\bottomrule 
\end{tabular}
\end{minipage}
\end{table}

We first focus on quadratic hedging. Table~\ref{tab:loss_set_1} reports the out-of-sample MSE across maturities and training sample sizes. Table~\ref{tab:other_metrics_set_1} reports additional out-of-sample metrics across maturities, with a fixed training sample size of $N=1000$. Several patterns can be observed from the numerical experiments. First, increasing the number of training paths consistently improves performance across all learning-based methods, leading to lower out-of-sample hedging MSE. The improvement is particularly significant in the low-data regime, where additional training paths lead to substantial gains in hedging performance. Overall, deep kernel hedging achieves the lowest MSE across all considered maturities and training sample sizes, and this improvement can also be seen in the other out-of-sample metrics (MAE, $q^{\text{upper}}_{0.95}$, $\text{CVaR}^{\text{upper}}_{0.95}$) reported in Table~\ref{tab:other_metrics_set_1}. These results already highlight the importance of the learning method in the low-data regime, where the choice of method can have a substantial impact on out-of-sample performance. \\

Second, comparing deep kernel hedging with standard deep hedging shows that the proposed approach systematically outperforms the classical deep hedging method across all considered maturities and training sample sizes. The improvement is particularly pronounced for smaller training sets, while the performance gap progressively narrows as more training paths become available. Nevertheless, deep kernel hedging remains superior in all considered settings. This suggests that the kernel component provides a useful inductive bias that is particularly beneficial when training data are limited, while the neural network part learns a flexible representation of the market state.  \\

Third, the comparison between classical kernel hedging and deep kernel hedging highlights the importance of learning the representation of the state variables. The classical kernel approach performs poorly and even underperforms the Black--Scholes benchmark in several settings, indicating that applying the RBF kernel directly to the original input space does not provide a suitable geometry for the hedging problem. By contrast, deep kernel hedging jointly learns a nonlinear representation of the input space and the kernel-based hedging strategy, resulting in a substantial and consistent improvement in performance. This comparison suggests that the main benefit of the proposed approach does not simply come from the use of a kernel, but rather from combining the kernel structure with a learned latent representation, as confirmed in Appendix \ref{sec:appendix_synthetic_data}. \\

Fourth, we examine the variability of the different methods across random seeds. Deep kernel hedging tends to have a lower standard deviation than the other learning-based approaches. This is particularly significant since, in addition to the random initialization of the neural network parameters, the deep kernel method involves an additional source of randomness through the sampling of the random Fourier features. Different realizations can lead to different feature representations and consequently to greater variability in the resulting hedging strategies. In contrast, the learned representation is adapted during training to the fixed random-feature map, which may mitigate part of the variability induced by different RFF realizations. The lower dispersion observed across seeds therefore suggests that the learned representation not only improves hedging performance, but also contributes to making the method more robust across different training runs.\\

\begin{table}[!ht]
\small
\centering
\caption{Out-of-sample metrics for the ATM European call for CVaR hedging with $N=1000$ and $v_0=\text{BS}(S_0,K,T,r,\sigma=\sqrt{\nu_0})$. Reported values are averages over ten seeds, with standard deviations in parentheses.}
\label{tab:other_metrics_set_1_cvar}
\begin{minipage}{1\textwidth}
\centering
\caption*{$T=1/12$}
\begin{tabular}{lcccc} \toprule & MSE & MAE & $q^{\text{upper}}_{0.95}$ & CVaR$^{\text{upper}}_{0.95}$\\ 
\midrule
BS & $\mathbf{2.56{e}{-5}}$ & $\mathbf{4.01{e}{-3}}$ & $8.96\mathrm{e}{-3}$ & $1.21\mathrm{e}{-2}$  \\ 
Kernel hedging & $5.02\mathrm{e}{-5}$ (1.2$\mathrm{e}{-6}$) & $5.56\mathrm{e}{-3}$ (8.1$\mathrm{e}{-5}$) & $1.19\mathrm{e}{-2}$ (1.1$\mathrm{e}{-4}$) & $1.56\mathrm{e}{-2}$ (1.1$\mathrm{e}{-4}$) \\
Deep hedging & $3.12\mathrm{e}{-5}$ (1.2$\mathrm{e}{-6}$) & $4.47\mathrm{e}{-3}$ (1.1$\mathrm{e}{-4}$) & $8.81\mathrm{e}{-3}$ (1.8$\mathrm{e}{-4}$) & $1.16\mathrm{e}{-2}$ (1.6$\mathrm{e}{-4}$) \\ 
Deep kernel hedging & ${2.93\mathrm{e}{-5}}$ {(1.2$\mathrm{e}{-6}$)} & {$4.28\mathrm{e}{-3}$ (1.1$\mathrm{e}{-4}$)} & $\mathbf{8.47{e}{-3}~(7.2{e}{-5})}$ & $\mathbf{1.13{e}{-2} ~(1.2{e}{-4})}$ \\ 
\bottomrule \end{tabular}

\end{minipage}
\\ \vspace{0.25cm}
\begin{minipage}{1\textwidth}
\centering
\caption*{$T=1/4$}
\begin{tabular}{lcccc} 
\toprule
& MSE & MAE & $q^{\text{upper}}_{0.95}$ & CVaR$^{\text{upper}}_{0.95}$ \\ 
\midrule 
BS & $1.07\mathrm{e}{-4}$ & $8.28\mathrm{e}{-3}$ & $1.95\mathrm{e}{-2}$ & $2.49\mathrm{e}{-2}$ \\ 
Kernel hedging & $1.63\mathrm{e}{-4}$ (6.9$\mathrm{e}{-6}$) & $9.93\mathrm{e}{-3}$ (3.3$\mathrm{e}{-4}$) & $2.28\mathrm{e}{-2}$ (1.6$\mathrm{e}{-4}$) & $3.06\mathrm{e}{-2}$ (1.7$\mathrm{e}{-4}$)\\ 
Deep hedging & $1.06\mathrm{e}{-4}$ (2.6$\mathrm{e}{-6}$) & $8.25\mathrm{e}{-3}$ (1.0$\mathrm{e}{-4}$) & $1.73\mathrm{e}{-2}$ (3.6$\mathrm{e}{-4}$) & $2.30\mathrm{e}{-2}$ (4.0$\mathrm{e}{-4}$) \\ 
Deep kernel hedging & $\mathbf{9.92{e}{-5}~(4.0{e}{-6})}$ & $\mathbf{8.02{e}{-3}~(2.2{e}{-4})}$ & $\mathbf{1.68{e}{-2} ~(1.3{e}{-4})}$ & $\mathbf{2.21{e}{-2}~(2.5{e}{-4}})$ \\ \bottomrule \end{tabular}
\end{minipage}
\\ \vspace{0.25cm}
\begin{minipage}{1\textwidth}
\centering
\caption*{$T=1/2$}
\begin{tabular}{lcccc} 
\toprule 
& MSE & MAE & $q^{\text{upper}}_{0.95}$ & CVaR$^{\text{upper}}_{0.95}$ \\ 
\midrule 
BS & $2.40\mathrm{e}{-4}$ & $1.22\mathrm{e}{-2}$ & $3.08\mathrm{e}{-2}$ & $3.98\mathrm{e}{-2}$ \\ 
Kernel hedging & $3.62\mathrm{e}{-4}$ (1.7$\mathrm{e}{-5}$) & $1.49\mathrm{e}{-2}$ (5.5$\mathrm{e}{-4}$) & $3.56\mathrm{e}{-2}$ (2.5$\mathrm{e}{-4}$) & $4.65\mathrm{e}{-2}$ (2.4$\mathrm{e}{-4}$) \\ 
Deep hedging & $2.41\mathrm{e}{-4}$ (7.0$\mathrm{e}{-6}$) & $1.26\mathrm{e}{-2}$ (2.4$\mathrm{e}{-4}$) & $2.80\mathrm{e}{-2}$ (4.2$\mathrm{e}{-4}$) & $3.60\mathrm{e}{-2}$ (6.2$\mathrm{e}{-4}$) \\ 
Deep kernel hedging & $\mathbf{2.10{e}{-4}~(7.4{e}{-6})}$ & $\mathbf{1.15{e}{-2}~(3.4{e}{-4})}$ & $\mathbf{2.67{e}{-2} ~(1.6{e}{-4})}$ & $\mathbf{3.47{e}{-2}~(3.4{e}{-4})}$ \\ 
\bottomrule 
\end{tabular}
\end{minipage}
\vspace{0.4mm}
\end{table}
All the above results are confirmed by Figure \ref{fig:quadratic_loss_training}, which compares the convergence of the three learned hedging methods in terms of the out-of-sample hedging MSE. The left plot reports the MSE as a function of the training epoch, whereas the right plot reports it as a function of cumulative training time.\ Each curve shows the mean over 10 random seeds, with shaded regions representing $\pm$ one\hspace{-0.2mm} standard\hspace{-0.2mm} deviation.\ The \hspace{-0.2mm}Black--Scholes\hspace{-0.2mm} MSE\hspace{-0.2mm} is\hspace{-0.2mm} shown\hspace{-0.2mm} as\hspace{-0.2mm} a\hspace{-0.2mm} horizontal benchmark\hspace{-0.2mm} as\hspace{-0.2mm} it\hspace{-0.2mm} does\hspace{-0.2mm} not\hspace{-0.2mm} require\hspace{-0.2mm} training. 
\begin{figure}[!ht]
    \centering

    \begin{subfigure}[t]{0.502\linewidth}
        \centering
        \includegraphics[width=\linewidth]
            {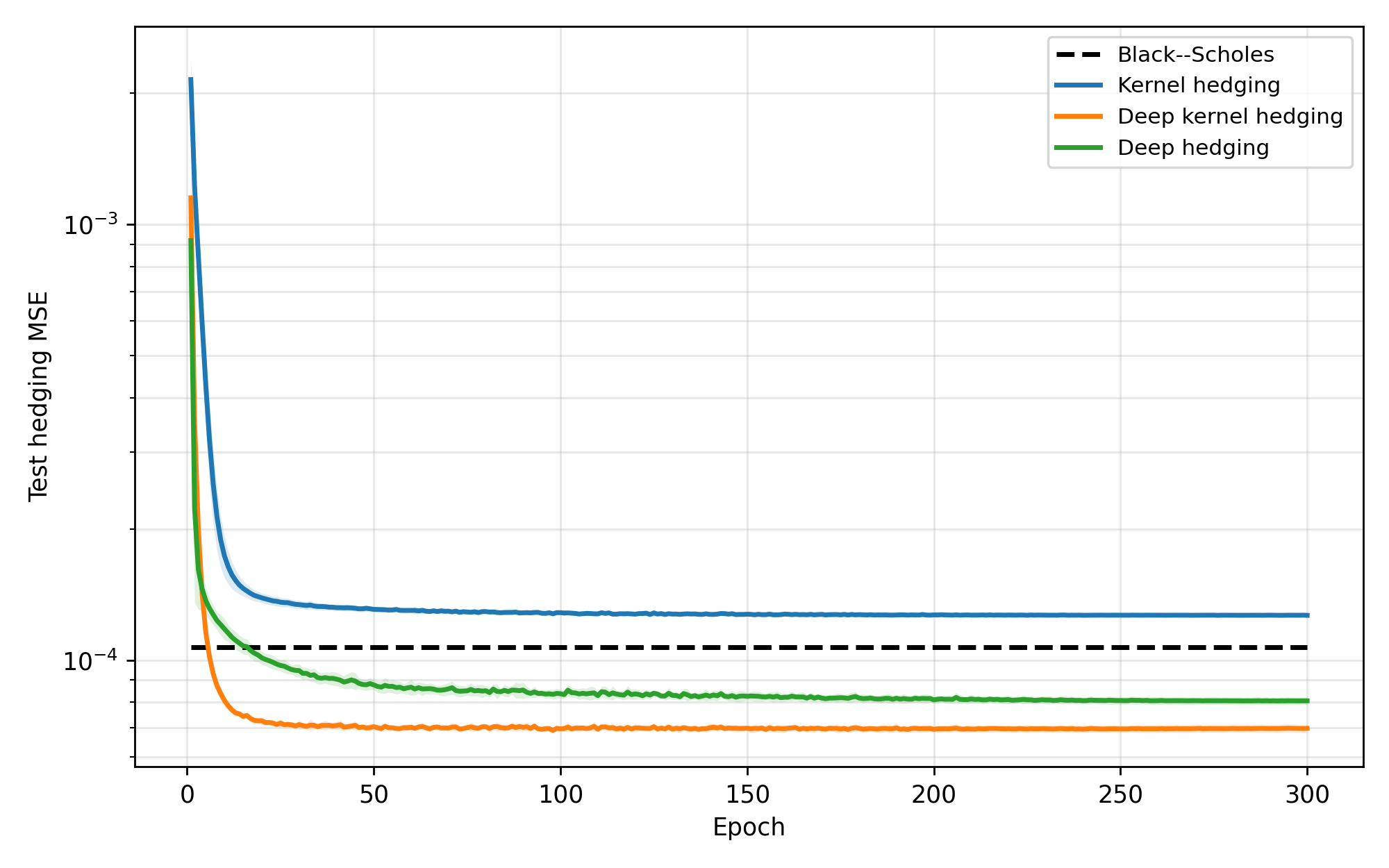}
        \caption{MSE on test set versus training epoch.}
        \label{fig:quadratic_loss_epoch}
    \end{subfigure} \hspace{-4mm}
    \hfill
    \begin{subfigure}[t]{0.502\linewidth}
        \centering
        \includegraphics[width=\linewidth]
            {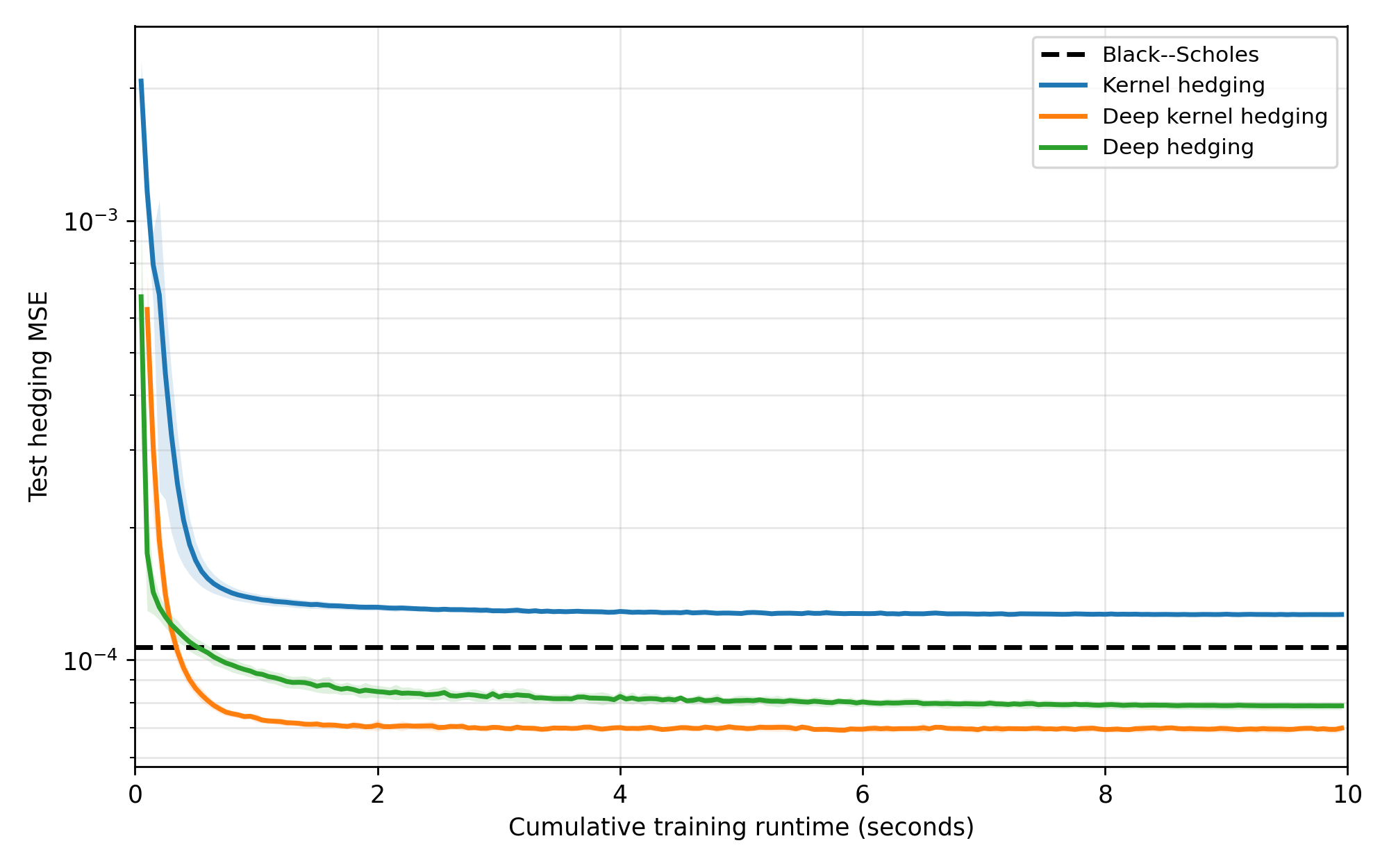}
        \caption{MSE on test set versus cumulative runtime.}
        \label{fig:quadratic_loss_runtime}
    \end{subfigure}

    \caption{Hedging MSE $\sum_{i=1}^{N^\text{OOS}} \mathcal{E}^2_i(\widehat{\phi}_D) /N^\text{OOS}$ during training on a test set of $N^\text{OOS}=25,000$ synthetic paths of an ATM European call with $T=1/4$ and $N=1000$. The curves and shaded regions show the mean and one standard deviation, respectively, over 10 seeds.}
    \label{fig:quadratic_loss_training}
\end{figure}
\\ 

Finally, to assess hedging performance under a tail-risk objective, we also consider the CVaR hedging problem \eqref{eq:deep_kernel_cvar_hedging} with $\tilde\alpha=5\%$ for ATM European call options. Table~\ref{tab:other_metrics_set_1_cvar} reports the corresponding out-of-sample results for $N=1000$. Overall, the conclusions are consistent with those obtained under quadratic hedging: deep kernel hedging systematically outperforms the other learning-based approaches in terms of CVaR, while also achieving the best (or near-best) performance according to the other reported metrics. This again demonstrates the benefit of the learned representation, also when the hedging strategy is directly optimized for tail risk. Moreover, the deep kernel approach generally exhibits a lower standard deviation across random seeds, indicating a stable learning procedure despite the additional randomness introduced by the RFF approximation. Note that a direct comparison of the absolute CVaR values between quadratic and CVaR hedging does not make sense. In the CVaR setting, the initial portfolio value $v_0$ is fixed to the Black--Scholes price, whereas it is learned under quadratic hedging. %In particular, the initial price learned under quadratic hedging may be higher than the corresponding Black--Scholes price, which shifts the resulting hedging P\&L upwards and can mechanically lead to a less negative CVaR. 
Therefore, the absolute CVaR values obtained under the two hedging objectives are not directly comparable as they correspond to different initial portfolio values. Overall, these results illustrate the flexibility of our framework: the loss function can be selected according to the performance criterion of interest, allowing the learned strategy to prioritize either average hedging accuracy or protection against extreme P\&L downside risk.\\

Overall, within the considered Heston framework and for the neural architectures selected in this study, the numerical results indicate that the deep kernel approach offers a competitive and robust alternative to standard deep hedging by effectively combining the advantages of both deep and kernel methodologies. Naturally, these conclusions remain conditional on the specific experimental setting under consideration, and different market dynamics or architectures may lead to different relative performances.

\subsection{Results on real data: S\&P500 time series} \label{subsec:num_real}Finally, we conclude this paper by focusing on real data of S\&P500 time series. We construct our dataset using S\&P500 historical data over the period from January 1st, 2010 to May 31st, 2026. The data from January 1st, 2010 to December 31st, 2022 is used for training, while the period from January 1st, 2023 to May 31st, 2026 is reserved for out-of-sample evaluation. For a fixed maturity $T_\text{days}$ (in days in this section), we extract overlapping trajectories using a sliding-window procedure (rolling windows with overlap), which yields a collection of market paths observed at discrete trading dates $0=t_0 < t_1 < \cdots < t_n = T_\text{days}$. In addition to the spot price process $S_t$, we compute a realized volatility proxy $\sqrt{\nu_t}$ using a one-month (20 trading days) rolling window, and fix the interest rate $r$ to zero. The resulting dataset is therefore given by
\[
\mathcal D_{N_T}
=
\Big\{
\left(\left(S_{t_k}^i/S_{t_0}^i, \nu_{t_k}^i\right)_{k=0}^n, \, H_T^i\right)
\Big\}_{i=1}^{N_T},
\]
where $N_T$ denotes the number of overlapping trajectories associated with maturity $T_\text{days}$. For the training set, we have $N_T = 3262-T_\text{days}$ sample paths and for the out-of-sample evaluation $N^\text{OOS}_T=834-T_\text{days}$. 

\begin{sqremark}
An alternative approach to increase the amount of training data is to first fit a stochastic or generative model (e.g., Heston, GARCH, or GAN) to the historical data and then use the fitted model to generate additional sample paths. This would effectively bring the problem back to the synthetic-data setting considered above. However, both our experiments and the empirical deep-hedging results reported in \cite[Table 6]{mikkila2023empirical} suggest that this approach performs worse than training directly on the observed data.
\end{sqremark}

As model inputs, we consider two different feature specifications to assess the impact of path-dependent information on hedging performance. In the first, baseline specification, we use the classical time-augmented state variables
\begin{equation} \label{vanilla_feat}
X_{t_k} = (t_k, S_{t_k}/S_{t_0}, \nu_{t_k}),
\quad k = 0,\dots,n, \tag{Vanilla features}
\end{equation}
which provide a standard Markovian representation of the market state. However, we find that the resulting hedging performance is relatively poor across the considered methods. We therefore report these baseline results in Appendix~\ref{sec: appendix_SP500} and focus in the main text on a richer feature representation based on time-augmented path-signatures. %Path-signatures provide a systematic and tractable way of encoding path-dependent information without imposing a specific parametric structure on the underlying market dynamics. Originally introduced by \cite{chen1957integration}, they have recently been extensively used in mathematical finance, including in hedging applications; see, e.g., \cite{lyons2020non,abi2025signature,abi2025hedging,cirone2025rough,jaber2025signature}. For completeness, a short reminder of path-signatures is provided in Appendix~\ref{appendix:sig}. In the main experiments, we therefore use time-augmented signature features.
More precisely, we consider the time-augmented signature of order $M=3$ of the path $Z:=(S/S_{t_0},\nu)$ and define
\[
X_{t_k}=\hat{\mathbb{Z}}_{t_k}^{\leq 3},
\quad k = 0,\dots,n, \tag{Signature features}
\]
where $\hat{\mathbb{Z}}$ denotes the time-augmented signature of $Z$, see Definition~\ref{def:sig}. As discussed in Appendix \ref{appendix:sig}, this provides a richer representation of the observed market dynamics and allows the learning methods to exploit path-dependent information beyond the instantaneous state variables without imposing a specific parametric structure on the underlying market dynamics. Numerically, these signature features are computed using the Python library \texttt{iisignature}. Since path-signatures depend only on the increments of a path, they are invariant under translations and therefore do not capture its absolute level. This can be problematic in our empirical application, where the data streams have different initial conditions. To address this issue, we use the base-point augmentation introduced in \cite[Section 3.1.4]{chevyrev2025primer}, which consists in appending the origin $0$ to the beginning of each data stream and thereby removes translation invariance. Throughout the paper, signatures are hence understood to be computed after this base-point augmentation.\\

In this real (scarce) dataset setting, we consider the quadratic and CVaR hedging of ATM European and Asian call options for different maturities,
\begin{equation*}
T_\text{days}\in \{10 \text{ days},~ 20 \text{ days},~ 40 \text{ days}\}.
\end{equation*}
As in the experiments on synthetic data, we focus on ATM European calls in the main text, while the corresponding results for the Asian payoff are reported in Appendix \ref{sec: appendix_SP500}. \\

Moreover, following \cite{buehler2019deep}, we do not learn the initial price $v_0$, even for the quadratic hedging problem. Instead, for each trajectory, we fix
\[
v_0^i = \text{BS}\big(S_0=S_0^i,K=S_0^i,T,\sigma=\sqrt{\nu_0^i}\big),
\]
where $\nu_0^i$ denotes the initial realized variance of the $i$-th path and $\text{BS}(S,K,T,\sigma)$ refers to the Black-Scholes price of a European call option. This choice is motivated by the fact that real market trajectories do not start from a common initial market state. As a consequence, it is natural that the option price at time zero depends on the prevailing market conditions, and in particular on the initial volatility level. In contrast to the synthetic data setting where $v_0$ can be treated as a single parameter to learn, we therefore fix here $v_0^i$ pathwise and do not learn it from data. More generally, one could consider learning the pricing functional as a function of the initial market state (in particular $\nu_0^i$), but our focus in this paper is mainly on learning the hedging strategy rather than the optimal pricing rule. Finally, we benchmark our learned strategies against the classical Black-Scholes practitioner delta, where the hedge is computed using the Black-Scholes delta evaluated with the realized volatility.\\

We keep the same architectures as in the synthetic data experiments for all learning-based methods, but use a smaller  and constant learning rate of $1\mathrm{e}{-4}$, to mitigate overfitting in the noisier real-market setting. Unlike the synthetic experiments, where all simulated paths start from the same initial spot price $S_0$, the real-data paths are associated with different initial spot prices $S_0^i$. We therefore normalize the terminal hedging error by the corresponding initial spot price both during training and evaluation, i.e.\ we use $\mathcal{E}_i^{\text{norm}}(\phi) := \mathcal{E}_i(\phi;v_0^i)/ S_0^i $. Specifically, the hedging strategies are trained by minimizing losses defined on the normalized hedging error $\mathcal{E}_i^{\text{norm}}(\phi)$. Similarly, all out-of-sample terminal hedging errors are normalized by their respective initial spot prices before computing the reported performance metrics, making comparisons across periods with different market levels meaningful. \\

\begin{table}[!ht]
\small
\centering
\caption{Out-of-sample metrics for the quadratic hedging of an ATM European call on real dataset with time-augmented signature features. P\&Ls are scaled by the underlying price at the beginning of the periods. Reported values are averages over ten seeds, with standard deviations in parentheses.}
\label{tab:loss_SP500_sign}

\begin{minipage}{1\textwidth}
\centering
\caption*{$T_\text{days}=$ 10 days.}
\begin{tabular}{lcccc} 
\toprule & MSE & MAE & $q^{\text{upper}}_{0.95}$ & CVaR$^{\text{upper}}_{0.95}$ \\ 
\midrule 
BS & $3.12\mathrm{e}{-5}$ & $3.48\mathrm{e}{-3}$ & $9.13\mathrm{e}{-3}$ & $1.41\mathrm{e}{-2}$ \\ 
Kernel hedging & $2.64\mathrm{e}{-5}$ (1.5$\mathrm{e}{-6}$) & $3.83\mathrm{e}{-3}$ (1.3$\mathrm{e}{-4}$) & $8.61\mathrm{e}{-3}$ (4.2$\mathrm{e}{-4}$) & $1.19\mathrm{e}{-2}$ (3.8$\mathrm{e}{-4}$)\\ 
Deep hedging & $2.24\mathrm{e}{-5}$ (2.4$\mathrm{e}{-6}$) & $3.51\mathrm{e}{-3}$ (1.2$\mathrm{e}{-4}$) & $7.17\mathrm{e}{-3}$ (1.7$\mathrm{e}{-4}$) & $1.02\mathrm{e}{-2}$ (6.1$\mathrm{e}{-4}$) \\ 
Deep kernel hedging & $\mathbf{2.06{e}{-5} \, (1.2{e}{-6})}$ & $\mathbf{3.33{e}{-3} \, (4.9{e}{-5}})$ & $\mathbf{6.76{e}{-3} \, (2.6{e}{-4}})$ & $\mathbf{1.00{e}{-2} \, (4.5{e}{-4}})$ \\ 
\bottomrule \end{tabular}
\end{minipage}
\\ \vspace{0.25cm}

\begin{minipage}{1\textwidth}
    \centering 
    \caption*{$T_\text{days}=$ 20 days.}
\begin{tabular}{lcccc}
\toprule & MSE & MAE & $q^{\text{upper}}_{0.95}$ & CVaR$^{\text{upper}}_{0.95}$\\ 
\midrule 
BS & $5.64\mathrm{e}{-5}$ & $4.83\mathrm{e}{-3}$ & $1.17\mathrm{e}{-2}$ & $1.73\mathrm{e}{-2}$ \\ 
Kernel hedging & $4.50\mathrm{e}{-5}$ (4.4$\mathrm{e}{-6}$) & $4.81\mathrm{e}{-3}$ (2.2$\mathrm{e}{-4}$) & $1.12\mathrm{e}{-2}$ (4.8$\mathrm{e}{-4}$) & $1.71\mathrm{e}{-2}$ (9.7$\mathrm{e}{-4}$) \\ 
Deep hedging & $3.46\mathrm{e}{-5}$ (4.3$\mathrm{e}{-6}$) & $4.18\mathrm{e}{-3}$ (1.4$\mathrm{e}{-4}$) & $8.63\mathrm{e}{-3}$ (3.7$\mathrm{e}{-4}$) & $1.20\mathrm{e}{-2}$ (8.8$\mathrm{e}{-4}$) \\ 
Deep kernel hedging & $\mathbf{3.13{e}{-5} \, (2.8{e}{-6}})$ & $\mathbf{3.90{e}{-3} \, (1.2{e}{-4}})$ & $\mathbf{8.34{e}{-3} \, (3.7{e}{-4}})$ & $\mathbf{1.13{e}{-2} \, (6.8{e}{-4}})$ \\ 
\bottomrule 
\end{tabular}
\end{minipage}
\\ \vspace{0.25cm}

\begin{minipage}{1\textwidth}
\centering
\caption*{$T_\text{days}=$ 40 days.}
\begin{tabular}{lcccc} 
\toprule 
& MSE & MAE & $q^{\text{upper}}_{0.95}$ & CVaR$^{\text{upper}}_{0.95}$\\ 
\midrule 
BS & $9.75\mathrm{e}{-5}$ & $6.23\mathrm{e}{-3}$ & $1.36\mathrm{e}{-2}$ & $1.76\mathrm{e}{-2}$\\ 
Kernel hedging & $7.29\mathrm{e}{-5}$ (1.2$\mathrm{e}{-5}$) & $6.20\mathrm{e}{-3}$ (3.6$\mathrm{e}{-4}$) & $1.62\mathrm{e}{-2}$ (2.0$\mathrm{e}{-3}$) & $2.30\mathrm{e}{-2}$ (3.7$\mathrm{e}{-3}$) \\ 
Deep hedging & $4.35\mathrm{e}{-5}$ (5.3$\mathrm{e}{-6}$) & $4.75\mathrm{e}{-3}$ (1.9$\mathrm{e}{-4}$) & $1.08\mathrm{e}{-2}$ (9.3$\mathrm{e}{-4}$) & $1.64\mathrm{e}{-2}$ (2.3$\mathrm{e}{-3}$) \\ 
Deep kernel hedging & $\mathbf{3.55{e}{-5} \, (4.2{e}{-6}})$ & $\mathbf{4.08{e}{-3} \, (1.9{e}{-4}})$ & $\mathbf{8.74{e}{-3} \, (1.0{e}{-3}})$ & $\mathbf{1.36{e}{-2} \, (2.2{e}{-3}})$ \\ 
\bottomrule 
\end{tabular}
\end{minipage}
\end{table}

Table \ref{tab:loss_SP500_sign} reports the out-of-sample performance of the different learning-based methods for the quadratic hedging problem using the time-augmented signature features across different maturities. For the three maturities considered, deep kernel hedging achieves the best overall hedging performance, yielding the lowest MSE and MAE, but also tail risk measures $q^{\text{upper}}_{0.95}$ and $\text{CVaR}^{\text{upper}}_{0.95}$ (which are not directly minimized during training). The aforementioned complementary roles of the kernel and neural network components are therefore particularly beneficial in the real-data setting considered here, where the available training samples are limited and the rolling-window trajectories are highly correlated. Moreover, as in the synthetic-data experiments, deep kernel hedging exhibits greater stability across random seeds, as reflected by the lower standard deviations observed across most of the reported metrics.  \\

\begin{table}[!ht]
\small
\centering
\caption{Out-of-sample metrics for the CVaR hedging of an ATM European call on real dataset with time-augmented signature features. P\&Ls are scaled by the underlying price at the beginning of the periods. Reported values are averages over ten seeds, with standard deviations in parentheses.}
\label{tab:loss_SP500_sign_CVaR_hedging}

\begin{minipage}{1\textwidth}
\centering
\caption*{$T_\text{days}=$ 10 days.}
\begin{tabular}{lcccc} 
\toprule 
& MSE & MAE & $q^{\text{upper}}_{0.95}$ & CVaR$^{\text{upper}}_{0.95}$ \\ 
\midrule 
BS & $3.12\mathrm{e}{-5}$ & $\mathbf{3.48{e}{-3}}$ & $9.13\mathrm{e}{-3}$ & $1.42\mathrm{e}{-2}$ \\ 
Kernel hedging & $3.04\mathrm{e}{-5}$ (2.3$\mathrm{e}{-6}$) & $4.14\mathrm{e}{-3}$ (1.7$\mathrm{e}{-4}$) & $8.90\mathrm{e}{-3}$ (3.6$\mathrm{e}{-4}$) & $1.20\mathrm{e}{-2}$ (5.2$\mathrm{e}{-4}$) \\
Deep hedging & $2.84\mathrm{e}{-5}$ (2.4$\mathrm{e}{-6}$) & $4.02\mathrm{e}{-3}$ (1.0$\mathrm{e}{-4}$) & $7.84\mathrm{e}{-3}$ (2.9$\mathrm{e}{-4}$) & $1.00\mathrm{e}{-2}$ (7.9$\mathrm{e}{-4}$) \\ 
Deep kernel hedging & $\mathbf{2.81{e}{-5} \, (1.3{e}{-6}})$ & ${4.03\mathrm{e}{-3}}$ (6.8$\mathrm{e}{-5}$) & $\mathbf{7.01{e}{-3} \,  (2.4{e}{-4}})$ & $\mathbf{9.19{e}{-3} \, (4.8{e}{-4}})$ \\ 
\bottomrule \end{tabular}
\end{minipage}
\\ \vspace{0.25cm}

\begin{minipage}{1\textwidth}
    \centering 
    \caption*{$T_\text{days}=$ 20 days.}
\begin{tabular}{lcccc} 
\toprule & MSE & MAE & $q^{\text{upper}}_{0.95}$ & CVaR$^{\text{upper}}_{0.95}$ \\ 
\midrule
BS & $5.64\mathrm{e}{-5}$ & $4.83\mathrm{e}{-3}$ & $1.17\mathrm{e}{-2}$ & $1.73\mathrm{e}{-2}$ \\ 
Kernel hedging & $5.22\mathrm{e}{-5}$ (6.4$\mathrm{e}{-6}$) & $5.38\mathrm{e}{-3}$ (3.8$\mathrm{e}{-4}$) & $1.13\mathrm{e}{-2}$ (6.6$\mathrm{e}{-4}$) & $1.63\mathrm{e}{-2}$ (9.7$\mathrm{e}{-4}$) \\
Deep hedging & $4.98\mathrm{e}{-5}$ (6.0$\mathrm{e}{-6}$) & $4.91\mathrm{e}{-3}$ (1.4$\mathrm{e}{-4}$) & $\mathbf{8.30{e}{-3} \, (2.3{e}{-4}})$ & $\mathbf{1.04{e}{-2} \, (1.0{e}{-3})}$ \\ 
Deep kernel hedging & $\mathbf{4.09{e}{-5} \,  (4.2{e}{-6}})$ & $\mathbf{4.68{e}{-3} \, (2.1{e}{-4}})$ & ${8.42\mathrm{e}{-3}}$ (2.3$\mathrm{e}{-4}$) & ${1.08\mathrm{e}{-2}}$ (4.7$\mathrm{e}{-4}$) \\ 
\bottomrule \end{tabular}
\end{minipage}
\\ \vspace{0.25cm}

\begin{minipage}{1\textwidth}
\centering
\caption*{$T_\text{days}=$ 40 days.}
\begin{tabular}{lcccc} 
\toprule & MSE & MAE & $q^{\text{upper}}_{0.95}$ & CVaR$^{\text{upper}}_{0.95}$ \\ 
\midrule
BS & ${9.75\mathrm{e}{-5}}$ & ${6.23\mathrm{e}{-3}}$ & ${1.36\mathrm{e}{-2}}$ & ${1.76\mathrm{e}{-2}}$ \\ 
Kernel hedging & $1.08\mathrm{e}{-4}$ (1.7$\mathrm{e}{-5}$) & $8.04\mathrm{e}{-3}$ (7.0$\mathrm{e}{-4}$) & $1.74\mathrm{e}{-2}$ (1.2$\mathrm{e}{-3}$) & $2.42\mathrm{e}{-2}$ (2.8$\mathrm{e}{-3}$) \\ 
Deep hedging & $1.53\mathrm{e}{-4}$ (3.4$\mathrm{e}{-5}$) & $7.36\mathrm{e}{-3}$ (3.3$\mathrm{e}{-4}$) & $9.10\mathrm{e}{-3}$ (7.4$\mathrm{e}{-4}$) & $\mathbf{1.10{e}{-2} \, (1.1{e}{-3})}$\\ 
Deep kernel hedging & $\mathbf{7.16{e}{-5} \, (9.1{e}{-6})}$ & $\mathbf{5.92{e}{-3} \, (2.1{e}{-4})}$ & $\mathbf{8.65\mathrm{e}{-3} \, (1.4{e}{-3})}$ & ${1.15\mathrm{e}{-2}}$ (1.8$\mathrm{e}{-3}$) \\ 
\bottomrule 
\end{tabular}
\end{minipage}
\end{table}

Finally, we consider CVaR hedging, with the out-of-sample results reported in Table~\ref{tab:loss_SP500_sign_CVaR_hedging}. Compared with Table~\ref{tab:loss_SP500_sign}, the CVaR-trained strategies exhibit higher MSE and MAE values, as they are no longer trained to minimize the quadratic hedging error. However, the  upper-tail metrics $\operatorname{CVaR}^{\text{upper}}_{0.95}$ are substantially improved\footnote{Since the methods are trained with the same initial prices, the comparison of the metrics between the hedging strategies optimizing quadratic or CVaR criteria is fair here, unlike in the synthetic-data setting.}. This is consistent with the CVaR training objective, which explicitly penalizes the severity of adverse hedging outcomes in the worst $5\%$ of the P\&L distribution. We note, however, that deep kernel hedging does not systematically outperform deep hedging in terms of CVaR in this empirical setting. These results should however be interpreted with some caution since the out-of-sample sample size is only $N^\text{OOS}_T=834-T_{\mathrm{days}}$. Consequently, the empirical out-of-sample  CVaR is estimated from only approximately $40$ tail observations (since $\alpha=5\%$), making this tail-risk metric subject to substantial sampling variability.  \\

Overall, these empirical results on real data further support deep kernel hedging as a robust and competitive alternative to standard deep hedging. In particular, the proposed approach consistently achieves strong hedging performance across different maturities. Taken together, these results suggest that the combination of a learned representation with the structured kernel-based hedging framework provides an effective and flexible approach for learning hedging strategies from real data.

\section{Conclusion}

\begin{comment}
These conclusions are specific to the real dataset (S\&P 500 index), and learning procedures considered in this study. Different market regimes, alternative assets, or modifications of the feature construction and training methodology would be helpful to strenghten the claims of the paper.
\\
\\
Although we focus on Fourier features based on the \texttt{cosine} map, our framework could accommodate straightforardly more general random nonlinear features. For instance, Gaussian random weights combined with a ReLU activation induce an arc-\texttt{cosine} kernel, while sigmoid or \texttt{Tanh} activations generate other neural-network kernels. Replacing the RFF layer with such random features would connect our approach directly to the broader class of random-feature neural networks, while retaining the learned representation $\psi_\theta$.
\end{comment}

In this paper, we introduce a deep kernel hedging framework that combines the flexibility of deep learning with the structural inductive bias of kernel methods. The hedging functional is restricted in a reproducing kernel Hilbert space whose kernel is parameterized through a neural network embedding of the input features. By means of a generalized representer theorem, we derive a finite-dimensional formulation of the original infinite-dimensional hedging problem and propose a random Fourier feature approximation with convergence guarantees that overcomes the computational limitations associated with large kernel matrices. \\

The numerical results on both synthetic and real-world data support the benefits of this approach. In particular, the proposed deep kernel method consistently achieves competitive (and in most settings superior) hedging performance compared with both classical kernel methods and standard deep hedging architectures. The advantage over standard deep hedging is especially relevant in low-data regimes, where the inductive bias inherited from the kernel appears to be particularly beneficial. Moreover, the proposed approach exhibits greater numerical stability with a reduced sensitivity to initialization and feature randomization across different seeds. \\%controlling the initialization of the neural network and the random Fourier features, the variability of the hedging performance is generally smaller than for the corresponding deep hedging architectures. This reduced sensitivity to initialization and randomization is an additional practical advantage when the amount of available data is limited. \\

The comparison with classical kernel methods and RFNNs suggests that the performance gains are not specific to the Fourier representation. Our numerical experiments indeed show that replacing the random Fourier features used for approximating the underlying base kernel by alternative random feature constructions leads to similar hedging performance. This confirms that the main benefit of the deep kernel approach lies in learning a data-dependent neural representation of the input space that is adapted to the hedging objective, while the specific choice of base kernel and  its associated random feature map appears to play a minor role. \\ 

Future work could extend the framework to multiple tradable assets and examine transaction costs and market impact, including their implications for convexity and computational tractability. Finally, when the learning dataset contains paths with different initial conditions, one could extend the framework to learn not only the hedging functional but also the initial pricing functional as a function of the initial market state. This would allow the learned model to jointly capture the dependence of the optimal hedging strategy and the corresponding optimal initial price on the observed market conditions.

\section{Appendix}

\subsection{Proofs}

\subsubsection{Proof of Proposition \ref{prop:rff_primal_equivalence}.}\label{proof_Prop_2.1}
\begin{proof}
For fixed \(\theta\in\Theta\), define
\[
F_\theta(\boldsymbol\beta)
= \frac{1}{N}
\sum_{i=1}^N
\mathcal L\left(
Y_i-(\mathbf Z_\theta\boldsymbol\beta)_i
\right)
+
\lambda\|\boldsymbol\beta\|_2^2,
\qquad
\boldsymbol\beta\in\mathbb R^D.
\]
Setting \(\boldsymbol\beta=\mathbf Z_\theta^\top\boldsymbol{\alpha}\) shows that
\[ \frac{1}{N}
\sum_{i=1}^N
\mathcal L\left(
Y_i
-
\bigl(\mathbf Z_\theta\mathbf Z_\theta^\top\boldsymbol\alpha\bigr)_i
\right)
+
\lambda\|\mathbf Z_\theta^\top\boldsymbol\alpha\|_2^2
=
F_\theta(\boldsymbol\beta).
\]
As \(\boldsymbol\alpha\) ranges over \(\mathbb R^N\), the vector \(\boldsymbol\beta\) ranges
over \(\operatorname{range}(\mathbf Z_\theta^\top)\). Hence, the
left-hand side of \eqref{eq:rff_equivalent_problems} is equal to
\[
\min_{\boldsymbol\beta\in\operatorname{range}(\mathbf Z_\theta^\top)}
F_\theta(\boldsymbol\beta).
\]

We now use the orthogonal decomposition
\(
\mathbb R^D
=
\operatorname{range}(\mathbf Z_\theta^\top)
\oplus
\ker(\mathbf Z_\theta).
\)
Thus, every \(\boldsymbol\beta\in\mathbb R^D\) can be written uniquely as
\[
\boldsymbol\beta=\boldsymbol\beta_\parallel+\boldsymbol\beta_\perp,
\qquad
\boldsymbol\beta_\parallel\in\operatorname{range}(\mathbf Z_\theta^\top),
\quad
\boldsymbol\beta_\perp\in\ker(\mathbf Z_\theta).
\]
Since
\(
\mathbf Z_\theta\beta
=
\mathbf Z_\theta\beta_\parallel
\)
and the decomposition is orthogonal, we obtain
\begin{align*}
F_\theta(\boldsymbol\beta)
&= \frac{1}{N}
\sum_{i=1}^N
\mathcal L\left(
Y_i-(\mathbf Z_\theta\boldsymbol\beta_\parallel)_i
\right)
+
\lambda\|\boldsymbol\beta_\parallel\|_2^2
+
\lambda\|\boldsymbol\beta_\perp\|_2^2 =
F_\theta(\boldsymbol\beta_\parallel)
+
\lambda\|\boldsymbol\beta_\perp\|_2^2.
\end{align*}
Consequently, \(F_\theta(\boldsymbol\beta)\geq F_\theta(\boldsymbol\beta_\parallel)\), with
strict inequality whenever \(\boldsymbol\beta_\perp\neq 0\). It follows that every
minimizer of \(F_\theta\) belongs to
\(\operatorname{range}(\mathbf Z_\theta^\top)\), and therefore
\[
\min_{\boldsymbol\beta\in\operatorname{range}(\mathbf Z_\theta^\top)}
F_\theta(\boldsymbol\beta)
=
\min_{\boldsymbol\beta\in\mathbb R^D}F_\theta(\boldsymbol\beta),
\]
which proves \eqref{eq:rff_equivalent_problems}. Because \(\mathcal L\) is continuous, \(F_\theta\) is continuous. Moreover, since \(\mathcal L\) is bounded below by a constant $C_\mathcal{L}\in \mathbb{R}$, and \(\lambda>0\),
\[
F_\theta(\boldsymbol\beta)\geq C_\mathcal{L}+\lambda\|\boldsymbol\beta\|_2^2,
\]
so \(F_\theta\) is coercive and hence admits a minimizer. Furthermore,
the loss term is convex in \(\boldsymbol\beta\), while
\(\lambda\|\boldsymbol\beta\|_2^2\) is strictly convex. Thus, \(F_\theta\) is
strictly convex and its minimizer \(\boldsymbol\beta_\theta^\star\) is unique. The
associated predictor is
\[
f^*_{D,\theta}(x)
=
(\boldsymbol\beta_\theta^\star)^\top y_\theta(x).
\]

Finally, if \(\mathcal L(u)=u^2\), then
\[
F_\theta(\boldsymbol\beta)
=
\frac{1}{N}\|\mathbf Y-\mathbf Z_\theta\boldsymbol\beta\|_2^2
+
\lambda\|\boldsymbol\beta\|_2^2.
\]
Its first-order optimality condition is
\[
\left(
\mathbf Z_\theta^\top\mathbf Z_\theta
+
\lambda N I_D
\right)\boldsymbol\beta
=
\mathbf Z_\theta^\top\mathbf Y.
\]
Since
\(\mathbf Z_\theta^\top\mathbf Z_\theta+\lambda N \mathbf I_D\)
is positive definite, it is invertible, and hence
\[
\boldsymbol\beta_\theta^\star
=
\left(
\mathbf Z_\theta^\top\mathbf Z_\theta
+\lambda N I_D
\right)^{-1}
\mathbf Z_\theta^\top\mathbf Y,
\]
which proves \eqref{eq:rff_optimal_beta}.
\end{proof}

\subsubsection{Proof of Lemma \ref{lem:existence_deep_kernel_hedging_rewritten}}
\label{proof1}
\begin{proof}
 For each \(\theta\in\Theta\), let
\(
\mathbf{A}_\theta
:=
\mathbf{Q}_\theta^{1/2}
\)
denote the unique positive-semidefinite square root of
\(\mathbf{Q}_\theta\). For fixed \(\theta\), introduce the variable
\[
\mathbf c
=
\mathbf{A}_\theta\bm{\alpha}.
\]
Since \(\mathbf{Q}_\theta=\mathbf{A}_\theta^2\), we have
\[
\mathbf{Q}_\theta\bm{\alpha}
=
\mathbf{A}_\theta \mathbf c
\qquad 
\text{ and } \qquad
\bm{\alpha}^{\top}\mathbf{Q}_\theta\bm{\alpha}
=
\lVert\mathbf{A}_\theta\bm{\alpha}\rVert_2^2
=
\lVert\mathbf c\rVert_2^2.
\]
As \(\bm{\alpha}\) ranges over \(\mathbb{R}^N\), the vector
\(\mathbf c\) ranges over
\(\operatorname{range}(\mathbf{A}_\theta)\). Hence, for every fixed
\(\theta\), from Theorem \ref{thm:representer_thm_inner_problem}, the inner problem in \eqref{eq: deep_kernel_hedging_rewritten} is equivalent to
\begin{equation*}
\min_{\mathbf c\in\operatorname{range}(\mathbf{A}_\theta)}
\left\{
\frac{1}{N}
\sum_{i=1}^N
\mathcal{L}\left(
(\mathbf{H}^{v_0}
-
\mathbf{A}_\theta\mathbf c)_i
\right)
+
\lambda\lVert\mathbf c\rVert_2^2
\right\}.
\end{equation*}
Since
\(\mathbf{A}_\theta\) is symmetric,
\(
\mathbb{R}^N
=
\operatorname{range}(\mathbf{A}_\theta)
\oplus
\ker(\mathbf{A}_\theta).
\)
Thus, every \(\mathbf c\in\mathbb{R}^N\) admits a unique decomposition
\[
\mathbf c
=
\mathbf c_{\parallel}
+
\mathbf c_{\perp},
\]
where
\(
\mathbf c_{\parallel}
\in\operatorname{range}(\mathbf{A}_\theta)\) and 
\(\mathbf c_{\perp}
\in\ker(\mathbf{A}_\theta).
\)
This decomposition satisfies
\[
\mathbf{A}_\theta\mathbf c
=
\mathbf{A}_\theta\mathbf c_{\parallel}
\qquad
\text{and}
\qquad
\lVert\mathbf c\rVert_2^2
=
\lVert\mathbf c_{\parallel}\rVert_2^2
+
\lVert\mathbf c_{\perp}\rVert_2^2.
\]
Therefore, replacing \(\mathbf c\) by
\(\mathbf c_{\parallel}\) leaves the empirical loss unchanged and
weakly decreases the regularization term, with a strict decrease
whenever \(\mathbf c_{\perp}\neq0\). Consequently, every minimizer
of the unconstrained problem belongs to
\(\operatorname{range}(\mathbf{A}_\theta)\). It follows that the joint problem \eqref{eq: deep_kernel_hedging_rewritten} is equivalent to
\begin{equation}
\label{eq:square_root_reformulation}
\inf_{\theta\in\Theta,\;\mathbf c\in\mathbb{R}^N}
\left\{
\frac{1}{N}
\sum_{i=1}^N
\mathcal{L}\left((
\mathbf{H}^{v_0}
-
\mathbf{A}_\theta\mathbf c)_i
\right)
+
\lambda\lVert\mathbf c\rVert_2^2
\right\}.
\end{equation}

We now establish the existence of a minimizer of
\eqref{eq:square_root_reformulation}. By \eqref{ass:kernel:H2} in
Assumption \ref{ass:kernel}, and the definition of
\(\mathbf{Q}_\theta\), the mapping
\(
\theta\mapsto\mathbf{Q}_\theta
\)
is continuous. Since the positive-semidefinite matrix square-root map
is continuous, the mapping
\(
\theta\mapsto\mathbf{A}_\theta
\)
is also continuous. Together with the continuity of
\(\mathcal{L}\), this implies that the objective in
\eqref{eq:square_root_reformulation} is continuous in
\((\theta,\mathbf c)\). Moreover, since $\mathcal{L}$ is bounded from below, there exists $C_{\mathcal L}\in\mathbb R$ such that $\mathcal{L}\geq C_{\mathcal L}$. Hence,

$$
\frac{1}{N}
\sum_{i=1}^N
\mathcal{L}\left( (
\mathbf{H}^{v_0}
-
\mathbf{A}_\theta\mathbf c)_i
\right)
+
\lambda\lVert\mathbf c\rVert_2^2
\geq
C_{\mathcal L}
+
\lambda\lVert\mathbf c\rVert_2^2.
$$
Define
\[
C_0
:=
\frac{1}{N}
\sum_{i=1}^N
\mathcal{L}\left(\mathbf{H}^{v_0}_i\right)\geq C_\mathcal{L}.
\]
The value of the objective at \(\mathbf c=0\) is \(C_0 < \infty\),
independently of \(\theta\). Therefore, it is sufficient to minimize
over the sublevel set on which the objective is at most \(C_0\).
Every point in this sublevel set satisfies
\[
\lVert\mathbf c\rVert_2
\leq
\sqrt{\frac{C_0-C_\mathcal{L}}{\lambda}}.
\]
Thus, as \eqref{ass:kernel:H1} in
Assumption \ref{ass:kernel} holds, the optimization can be restricted to the compact set
\[
\Theta
\times
\overline{B}_{\mathbb{R}^N}
\left(
0,\sqrt{\frac{C_0-C_\mathcal{L}}{\lambda}}
\right).
\]
The Weierstrass theorem therefore yields a minimizer
\(
(\theta^\star,\mathbf c^\star_{\theta^\star})
\)
of \eqref{eq:square_root_reformulation}. By the preceding orthogonal-decomposition argument,
\(
\mathbf c_{\theta^\star}^\star
\in
\operatorname{range}(\mathbf{A}_{\theta^\star}),
\) and there exists \(\bm{\alpha}_{\theta^\star}^\star\in\mathbb{R}^N\) such that
\[
\mathbf c_{\theta^\star}^\star
=
\mathbf{A}_{\theta^\star}\bm{\alpha}_{\theta^\star} ^\star. 
\]
 Hence, \((\theta^\star,\boldsymbol{\alpha}_{\theta^\star}^\star)\) is a minimizer of \eqref{eq: deep_kernel_hedging_rewritten}.
\end{proof}
\subsubsection{Proof of Theorem \ref{thm:RFF_hedging_convergence}}
\label{Proof:theorem3.3}
\begin{proof}
We work on the probability-one event on which the uniform RFF convergence in Proposition~\ref{prop:uniform_RFF_theta} holds.

\textbf{Step 1: convergence of $\mathcal{V}_D$.} First, let us show the convergence of $(\mathcal{V}_D)_{D>1}$.  Let
\[
\mathbf A_\theta:=\mathbf Q_\theta^{1/2},
\qquad
\widetilde{\mathbf A}_{\theta,D}
:=
\widetilde{\mathbf Q}_{\theta,D}^{1/2}.
\]
By Proposition~\ref{prop:uniform_RFF_theta} and continuity of the
positive-semidefinite square-root map,
\begin{equation}\label{eq: conv_uni_A}
\eta_D := \sup_{\theta\in\Theta}
\left\|
\widetilde{\mathbf A}_{\theta,D}
-
\mathbf A_\theta
\right\|_{\mathrm{op}}
\rightarrow0, \quad \text{as } D\to \infty.  
\end{equation} 
Then, using the square-root reformulation from the proof of
Theorem~\ref{thm:existence_thm_hedging}, the exact and approximate optimal
values can be written as
\[
\mathcal V
=
\min_{\theta\in\Theta, \,  \mathbf c\in\mathbb R^N}
F(\theta,\mathbf c),
\qquad
\mathcal V_D
=
\min_{\substack{\theta\in\Theta, \, \mathbf c\in\mathbb R^N}}
F_D(\theta,\mathbf c),
\]
where
\[
F(\theta,\mathbf c)
=
\frac1N\sum_{i=1}^N
\mathcal L\left( (
\mathbf H^{v_0}
-
\mathbf A_\theta\mathbf c)_i
\right)
+
\lambda\|\mathbf c\|_2^2
\]
and
\[
F_D(\theta,\mathbf c)
=
\frac1N\sum_{i=1}^N
\mathcal L\left( (
\mathbf H^{v_0}
-
\widetilde{\mathbf A}_{\theta,D}\mathbf c)_i
\right)
+
\lambda\|\mathbf c\|_2^2.
\]

Set
\[
C_0:=\frac1N\sum_{i=1}^N\mathcal L(\mathbf H_i^{v_0}),
\qquad
R:=\sqrt{\frac{C_0-{C}_\mathcal{L}}{\lambda}},
\]
where $C_\mathcal{L}\in \mathbb{R}$ is the lower bound of $\mathcal{L}$. We observe that every minimizer $\mathbf c^\star$ of either problem belongs to the ball $\{\mathbf c:\|\mathbf c\|_2\leq R\}$. We next show that $F_D\to F$ uniformly on
\[
\Theta\times\overline B_R,
\qquad
\overline B_R:=\{\mathbf c\in\mathbb R^N:\|\mathbf c\|_2\leq R\},
\]
as  $D \to \infty$. By compactness of $\Theta$ and continuity of
$\theta\mapsto\mathbf A_\theta$,
\[
\sup_{\theta\in\Theta}\|\mathbf A_\theta\|_{\mathrm{op}}<\infty.
\]
Together with \eqref{eq: conv_uni_A}, this implies that, for all
sufficiently large $D$,
\[
\sup_{\theta\in\Theta}
\|\widetilde{\mathbf A}_{\theta,D}\|_{\mathrm{op}}
\leq B
\]
for some $B<\infty$. Hence, for $\|\mathbf c\|_2\leq R$, all arguments
of $\mathcal L$ appearing in $F$ and $F_D$ belong to the compact interval $[-M,M]$, where
\[
M:=\max_{1\leq i\leq N}|\mathbf H_i^{v_0}|+BR.
\]
The continuity of $\mathcal L$ therefore implies its uniform continuity
on $[-M,M]$. Let
\[
\omega_{\mathcal L,M}(\delta)
:=
\sup_{\substack{u,v\in[-M,M]\\|u-v|\leq\delta}}
|\mathcal L(u)-\mathcal L(v)|
\]
denote its modulus of continuity on this interval. For every
$\theta\in\Theta$ and $\|\mathbf c\|_2\leq R$,
\[
\left|
\bigl(
(\widetilde{\mathbf A}_{\theta,D}-\mathbf A_\theta)\mathbf c
\bigr)_i
\right|
\leq R\eta_D.
\]
It follows that
\[
\sup_{\theta\in\Theta, \, \|\mathbf c\|_2\leq R}
|F_D(\theta,\mathbf c)-F(\theta,\mathbf c)|
\leq
\omega_{\mathcal L,M}(R\eta_D)
\rightarrow0.
\]
almost surely as $D\to\infty$. It follows directly that
\begin{equation}\label{eq: conv_V_D_to_V}
|\mathcal V_D-\mathcal V|
\rightarrow0 \,,
\end{equation}
almost surely as $D\to\infty$.\\

\textbf{Step 2: convergence of $(\theta^\star_D, \phi_D^\star)_{D>1}$.} It remains to establish convergence of the corresponding hedging
strategies. The definition of the RKHS norm and the regularization bound give
\begin{equation} \label{eq:RFF_norm_bound}
\|\phi^\star_D\|_{\mathcal H_{
\widetilde K_{\theta^\star_D,D}}}
\leq 
\|\boldsymbol\beta^\star_D\|_2
\leq R.
\end{equation}
Consider an arbitrary subsequence. By compactness of $\Theta$, it has a
further subsequence, still indexed by $D$, such that
\[
\theta^\star_D\rightarrow\bar\theta \, .
\]
The uniform RFF approximation of Proposition~\ref{prop:uniform_RFF_theta} and the continuity of
$\theta\mapsto K_\theta$ in $C(\mathcal X\times\mathcal X)$ imply
\begin{equation} \begin{aligned}
\varepsilon_D
&:=
\sup_{x,x' \in \mathcal{X}}\left|
\widetilde K_{\theta_D^\star,D}(x,x')
-
K_{\bar\theta}(x,x')
\right|&
\\
&\hspace{9mm}\leq
\sup_{\theta\in\Theta} \sup_{x,x' \in \mathcal{X}}
\left|
\widetilde K_{\theta,D}(x,x')-K_\theta(x,x')
\right|
+
 \sup_{x,x' \in \mathcal{X}} \left|K_{\theta_D^\star}(x,x')-K_{\bar\theta}(x,x') \right|
\rightarrow0.
\label{eq:uni_conv_K_tilde_theta}
\end{aligned} \end{equation}

We next verify uniform boundedness. By the reproducing property and
\eqref{eq:RFF_norm_bound},
\[
|\phi_D^\star(x)|
\leq
R\sqrt{
\widetilde K_{\theta_D^\star,D}(x,x)}.
\]

\begin{comment}
Equation~\eqref{eq:uni_conv_K_tilde_theta} and boundedness of
$K_{\bar\theta}$ on the compact set $\mathcal X\times\mathcal X$
therefore show that
\[
\sup_D \sup_{x\in \mathcal{X}} |\phi_D^\star(x)|<\infty,
\]
after disregarding at most finitely many indices.
\end{comment}

By \eqref{eq:uni_conv_K_tilde_theta}, there exists $D_0$ such that
$\varepsilon_D\leq 1$ for all $D\geq D_0$. Moreover, since
$K_{\bar\theta}$ is continuous on the compact set
$\mathcal X\times\mathcal X$, we may define
\[
M:=\sup_{x\in\mathcal X} K_{\bar\theta}(x,x)<\infty.
\]
Hence, for all $D\geq D_0$,
\[
\sup_{x\in\mathcal X}
\widetilde K_{\theta_D^\star,D}(x,x)
\leq M+1,
\]
and therefore
\[
\sup_{x\in\mathcal X}|\phi_D^\star(x)|
\leq R\sqrt{M+1}.
\]
For the finitely many indices $D<D_0$, since each $\phi_D^\star$
is continuous on the compact set $\mathcal X$, we have
\[
C_0:=
\max_{1\leq D<D_0}
\sup_{x\in\mathcal X}|\phi_D^\star(x)|
<\infty.
\]
Thus, defining
\[
C:=\max\left\{C_0,R\sqrt{M+1}\right\}<\infty,
\]
we obtain the uniform bound
\[
\sup_D\sup_{x\in\mathcal X}|\phi_D^\star(x)|
\leq C<\infty.
\]

We now establish equicontinuity. To this end, let us define
\[
\rho_D(x,x')^2
:=
\widetilde K_{\theta_D^\star,D}(x,x)
+
\widetilde K_{\theta_D^\star,D}(x',x')
-
2\widetilde K_{\theta_D^\star,D}(x,x')
\]
and
\[
\rho_{\bar\theta}(x,x')^2
:=
K_{\bar\theta}(x,x)
+
K_{\bar\theta}(x',x')
-
2K_{\bar\theta}(x,x').
\]
The RKHS inequality gives
\begin{equation}
\label{eq:equicont_RKHS}
|\phi_D^\star(x)-\phi_D^\star(x')|
\leq R\rho_D(x,x').
\end{equation}
Furthermore, \eqref{eq:uni_conv_K_tilde_theta} implies
\[
\rho_D(x,x')^2
\leq
\rho_{\bar\theta}(x,x')^2+4\varepsilon_D.
\]
Since $K_{\bar\theta}$ is continuous and $\mathcal X$ is compact,
\[
\sup_{\|x-x'\|_2 \leq\delta}
\rho_{\bar\theta}(x,x')
\rightarrow0
\qquad\text{as }\delta\downarrow0.
\]
Combining this property with $\varepsilon_D\to0$ and
\eqref{eq:equicont_RKHS} shows that $(\phi_D^\star)_D$ is
equicontinuous. Since the family $(\phi_D^\star)_D$ is uniformly bounded and equicontinuous on the compact set $\mathcal X$, the Arzel\`a--Ascoli theorem therefore yields a further subsequence and a function $\bar\phi\in C(\mathcal X)$ such that
\begin{equation} \label{eq:unife}\sup_{x \in \mathcal{X}}|\phi^*_D(x)-\bar\phi(x)|
\rightarrow0,
\end{equation}
as $D \to \infty$. Moreover, the uniform convergence of the kernels
\eqref{eq:uni_conv_K_tilde_theta} and of the strategies
\eqref{eq:unife}, together with the bound
\eqref{eq:RFF_norm_bound}, allows us to pass to the limit in the
positive-semidefiniteness condition of \cite[Theorem 3.11]{paulsen2016introduction}, yielding
\begin{equation} \label{eq:liminf}
\bar\phi\in\mathcal H_{K_{\bar\theta}},
\qquad
\|\bar\phi\|_{\mathcal H_{K_{\bar\theta}}}
\leq
\liminf_{D\to\infty}
\|\phi^\star_D\|_{\mathcal H_{
\widetilde K_{\theta^\star_D,D}}}.
\end{equation}

We next pass to the limit in the empirical hedging loss.
For $f\in C(\mathcal X)$, define
\[
\mathcal R(f)
:=
\frac1N\sum_{i=1}^N
\mathcal L\left(
H^i-B_T^iv_0
-\sum_{k=0}^{n-1}
G_{t_{k+1}}^i f(X_{t_k}^i)
\right).
\]
For each training path $i$,
\begin{align*}
\left|
\sum_{k=0}^{n-1}
G_{t_{k+1}}^i
\left(
\phi_D^\star(X_{t_k}^i)-\bar\phi(X_{t_k}^i)
\right)
\right| \leq
\left(\sum_{k=0}^{n-1}|G_{t_{k+1}}^i|\right) \, 
\sup_{x \in \mathcal{X}} |\phi_D^\star(x)-\bar\phi(x) | 
\rightarrow0.
\end{align*}
Since $\mathcal L$ is continuous and the dataset is
finite, it follows that
\[
\mathcal R(\phi_D^\star)\rightarrow\mathcal R(\bar\phi).
\]

Combining this convergence with \eqref{eq: conv_V_D_to_V}, \eqref{eq:RFF_norm_bound} and \eqref{eq:liminf}, we obtain
\begin{align*}
\mathcal J_{\bar\theta}(\bar\phi)
=
\mathcal R(\bar\phi)
+\lambda
\|\bar\phi\|_{\mathcal H_{K_{\bar\theta}}}^2 &\leq
\mathcal R(\bar\phi)
+\lambda\liminf_{D\to\infty}
\|\phi_D^\star\|_{\mathcal H_{
\widetilde K_{\theta^\star_D,D}}}^2\\
&\leq
\liminf_{D\to\infty}
\left\{
\mathcal R(\phi_D^\star)
+\lambda\|\boldsymbol\beta_D^\star\|_2^2
\right\}\\
&=
\liminf_{D\to\infty}\mathcal V_D
=
\mathcal V.
\end{align*}
On the other hand, $\bar\theta\in\Theta$ and
$\bar\phi\in\mathcal H_{K_{\bar\theta}}$, so the definition
of $\mathcal V$ gives
\[
\mathcal V\leq\mathcal J_{\bar\theta}(\bar\phi).
\]
Hence,
\[
\mathcal J_{\bar\theta}(\bar\phi)=\mathcal V,
\]
and $(\bar\theta,\bar\phi)$ is a global minimizer of the
exact deep kernel hedging problem. The argument applies to every subsequence. Consequently, every
accumulation point of
$\{(\theta_D^\star,\phi_D^\star)\}_{D\geq1}$ is a global minimizer.
If the exact minimizing pair $(\theta^\star,\phi^\star)$ is unique,
every subsequence has a further subsequence converging to that same
pair. Therefore,
\[
\theta_D^\star\rightarrow\theta^\star,
\qquad
\sup_{x \in \mathcal{X}}|\phi^*_D(x)-\phi^\star(x)|
\rightarrow0,
\]
almost surely as $D\to \infty$.
\end{proof}
\subsubsection{Proof of Proposition \ref{prop:existence_CVaR}}
\begin{proof} \label{proof:existence_CVar}
Since infima over product sets commute, we may exchange the order of minimization and write \eqref{eq:deep_kernel_cvar_hedging} as
\begin{equation}\label{eq: cvar_hedging_rewritten}
\inf_{\zeta\in\mathbb R} \
\inf_{\theta\in\Theta, \, 
                \phi\in\mathcal H_{K_\theta}}
\left\{
\zeta
+
\frac{1}{\tilde\alpha N}
\sum_{i=1}^N
\left(
H^i
-
B_T^i v_0
-
\sum_{k=0}^{n-1}
\phi(X^i_{t_k})
G^i_{t_{k+1}} -\zeta
\right)_+
+
\lambda
\|\phi\|_{\mathcal{H}_{K_\theta}}^2
\right\},
\end{equation}
For every fixed $\zeta\in \mathbb{R}$, the loss
\[
u\mapsto
\mathcal L_{\tilde\alpha}^{\mathrm{CVaR}}(u,\zeta)
=
\zeta+\frac{1}{\tilde\alpha}(u-\zeta)_+
\]
is continuous, convex and bounded from below. Therefore, Theorem
\ref{thm:existence_thm_hedging} implies that the inner minimum is
attained by some $(\theta_\zeta^\star,\phi_\zeta^\star)$. It remains to minimize the resulting value function in $\zeta$.
Using the finite-dimensional representation from Theorem~\ref{thm:representer_thm_inner_problem} and the proof of Lemma \ref{lem:existence_deep_kernel_hedging_rewritten} with $\mathbf c
= \mathbf{Q}^{1/2}_\theta\bm{\alpha} \in \R^N$, we write
\[
F(\theta,\mathbf c,\zeta)
:=
\zeta+
\frac{1}{\tilde\alpha N}
\sum_{i=1}^N
\left(
H^i-B_T^iv_0-(\mathbf{Q}_\theta^{1/2}\mathbf c)_i-\zeta
\right)_+
+
\lambda\|\mathbf c\|_2^2 , \ \  \text{ and } \ \ \mathcal{V}(\zeta)
:=
\min_{\theta\in\Theta,\,\mathbf c\in\mathbb R^N}
F(\theta,\mathbf c,\zeta),
\]
such that \eqref{eq: cvar_hedging_rewritten} is equivalent to 
\begin{equation*}
    \inf_{\zeta\in \mathbb{R}} \mathcal{V}(\zeta).
\end{equation*}
Moreover, we observe that 
\[
F(\theta,\mathbf c,\zeta)\geq\zeta,
\]
and completing the square in
$\mathbf c$, we also have that there exists a constant $C_0<\infty$, independent
of $(\theta,\mathbf c,\zeta)$, such that
\[
F(\theta,\mathbf c,\zeta)
\geq
-C_0-\frac{1-\tilde\alpha}{\tilde\alpha}\zeta.
\]
The  value function $\mathcal{V}(\zeta)$ therefore tends to $+\infty$ as
$\zeta\to+\infty$ and as $\zeta\to-\infty$, and is therefore coercive. Moreover, under
Assumption~\ref{ass:kernel}, the continuity of
$\theta\mapsto\mathbf Q_\theta^{1/2}$ and the uniform coercivity in
$\mathbf c$ imply that $\mathcal{V}$ is continuous. Hence, $\mathcal{V}$ attains its minimum at some $\zeta^\star\in\mathbb R$. Since the corresponding inner minimum is also attained, there exists a global minimizer $(\theta^\star,\phi^\star,\zeta^\star)$ of
\eqref{eq:deep_kernel_cvar_hedging}.
\end{proof}
\subsection{Path-signatures} \label{appendix:sig}

In this appendix, we provide a brief reminder on path-signatures. For $T > 0$, let $X:=(X_t)_{t \leq T}$ be a (continuous) path with finite variation in $\R^d$.  Let $\{ e_1, \dots, e_d \} \subset \R^d$ be the canonical basis of $\mathbb{R}^d$ and $\alphabet = \{ \word{1}, \word{2}, \dots, \word{d} \}$ be the corresponding alphabet. For $i \in \{ 1, \dots, d \}$, we write $e_{i}$ as the blue letter $\word{i}$ and for $n \geq 1, i_1, \dots, i_n \in \{ 1, \dots, d \}$, we write $e_{i_1} \otimes \cdots \otimes e_{i_n} $ as the concatenation of letters  $\word{i_1 \cdots i_n}$, that we call a word of length $n$. We note that $(e_{i_1} \otimes \cdots \otimes e_{i_n})_{(i_1, \dots, i_n) \in \{ 1, \dots, d \}^n}$ is a basis of $(\R^d) \conpow{n}$ that can be identified with the set of words of length $n$ defined by 
    
    \begin{equation*}
        V_n := \{ \word{i_1 \cdots i_n}: \word{i_k} \in \alphabet \text{ for } k = 1, 2, \dots, n \}. 
    \end{equation*} 
    
    Moreover, $\emptyword$ denotes the empty word and $V_0 = \{ \emptyword \}$ serves as a basis for $(\R^d) \conpow{0} = \R$. 

\begin{defn}\label{def:sig} The signature of $X$ is defined by $\mathbb{X}$ such that 
\begin{align*}
            \mathbb{X}_t := \left({\mathbb{X}}_t^\word{i_1 \cdots i_n}\right)_{(\word{i_1\cdots i_n})\in V_n,~n\in \mathbb{N}}, \quad t \leq T, 
\end{align*}
where $\mathbb{X}^\emptyword := 1$, and 
$$ \sigX^\word{i_1\dots i_n} := \int_{0 < u_1 < \cdots < u_n < t} d X_{u_1}^{\word{i_1}} \cdots   d X_{u_n}^\word{i_n}, \quad (\word{i_1\dots i_n})\in V_n,\quad n\in \mathbb{N}_0.$$

Similarly, the truncated signature of order $M \in \N$ is defined by
\begin{equation*}
\begin{aligned} 
            \sigX^{\leq M} :=\left({\mathbb{X}}_t^\word{i_1 \cdots i_n}\right)_{(\word{i_1\cdots i_n})\in V_n,~n\leq M}, \quad t\leq T. 
\end{aligned}
\end{equation*}
such that $\sigX^{\leq M}\in \mathbb{R}^{\frac{d^{M+1}-1}{d-1}}$. 
\end{defn} 
 In dimension $d=1$, the signature of $X$ reduces to the sequence of monomials $\left( \frac{1}{n!} (X_t-X_0)^n \right)_{n \in \mathbb{N}}$. In higher dimensions, however, the signature contains iterated integrals involving different components of the path. This makes path-signatures particularly useful for describing path-dependent quantities, especially when augmenting the path with time. Let $X:[0,T] \to \mathbb{R}$ and define its time-augmentation $\hat{X}:=(t,X_t)_{t\leq T}$. The first few signature levels of $\hat{X}$ are given by
    \begin{equation*}
    \begin{aligned}        
    &(\text{Order } 0):\hat{\mathbb{X}}_t^\emptyword = 1, \\
    &(\text{Order } 1): \hat{\mathbb{X}}_t^\word{1} = t, \quad \hat{\mathbb{X}}_t^\word{2} = X_t-X_0, \\
    &(\text{Order } 2):\hat{\mathbb{X}}_t^\word{11} = \frac{t^2}{2!}, \quad \hat{\mathbb{X}}_t^\word{12} = \int_0^t s d X_s, \quad \hat{\mathbb{X}}_t^\word{21} =  \int_0^t (X_s-X_0) ds, \quad \hat{\mathbb{X}}_t^\word{22} =  \frac{(X_t-X_0)^2}{2!}, \\
    &~~~~\dots
    \end{aligned}
    \end{equation*}

These terms already illustrate how the time-augmented signatures encode path-dependent features. While the first level only records the current time and the increment of the path, higher-order terms contain iterated integrals that depend on the evolution of the path over the whole time interval. The collection of all time-augmented signature terms therefore provides a rich summary of the path's history. As shown in \cite{cuchiero2025universal}, signatures provide a universal and algebraic representation of paths. In fact, under suitable conditions, the signatures uniquely characterize paths, up to the natural equivalence induced by tree-like pieces. This provides the theoretical foundation for using time-augmented signatures as input features in learning problems involving path-dependent quantities. Rather than using the entire past trajectory, one can represent the path through its time-augmented signature coordinates. In particular, when the signature is truncated at a finite order, this yields a finite-dimensional representation of a path, with higher truncation orders providing increasingly rich information about its history. \\

It is worth noting, however, that the universal approximation theorem stated in \cite{cuchiero2025universal} shows that, under suitable conditions, continuous functionals of paths can be approximated arbitrarily well by linear combinations of the coordinates of time-augmented signatures. In other words, for a suitable path functional $F$, we can approximate $F(X)$ by expressions of the form $F(X)\approx \langle \alpha, \hat{\mathbb{X}}^{\leq M}_T\rangle$, with $\alpha\in\mathbb{R}^{\frac{(d+1)^{M+1}-1}{d}}$ for a sufficiently large $M\in \mathbb{N}$. In practice, however, we typically work with low-order truncated signatures for computational reasons, and the resulting approximation error can be non-negligible, especially in highly nonlinear settings, see \cite[Section 5]{abi2025hedging} and \cite[Section 6.2]{jaber2025signaturehedging}. This motivates combining low-order signature features with nonlinear machine learning techniques, which can increase the expressive power of the representation without requiring prohibitively high signature orders. We adopt this approach in the present paper, combining low-order truncated signatures with nonlinear machine learning techniques. A similar approach was previously proposed in \cite{abi2025hedging} for learning optimal hedging strategies in non-Markovian settings. \\

For further details on path-signatures and their applications in hedging context, we refer among others to \cite{chen1957integration, lyons2020non, cuchiero2023signature, cuchiero2025universal, abi2025hedging,jaber2025signaturehedging, chevyrev2025primer}.

\subsection{Additional experiments on synthetic data}\label{sec:appendix_synthetic_data}

In this appendix, we provide additional results for the numerical experiments on synthetic data. 
\paragraph{Comparison to Random Feature Neural Networks (RFNNs).} To assess whether the benefits of our approach are specific to the random Fourier approximation  \eqref{eq: latent_repr} with \texttt{cosine} features and Gaussian kernel, or extend to more general random-feature families, we additionally consider RFNNs based on \texttt{ReLU} and \texttt{Tanh} activations \citep{huang2006extreme, rahimi2008weighted, gonon2023approximation}. We first note that our current implementation of (standard) kernel hedging with RFF is a particular RFNN with \texttt{cosine} activation. More precisely, for the Gaussian kernel
\(
K(x,x')
=
\exp (-\lVert x-x'\rVert_2^2/2\gamma^2 ),
\) the normalized RFF map is \begin{equation*}
y^{\rm{RFF}}_D(x)
=
\sqrt{\frac{2}{D}}
\begin{bmatrix}
\cos\bigl(w_1^\top x/\gamma +b_1\bigr)\\
\vdots\\
\cos\bigl(w_D^\top \, x/\gamma+b_D\bigr)
\end{bmatrix}
,
\end{equation*} where the Fourier frequencies and phases are sampled according to
\[
w_j \sim p_{K}
=
\mathcal{N}(0, I_d),
\qquad
b_j \sim \mathcal{U}(0,2\pi),
\]
and remain fixed throughout training. For the \texttt{ReLU} and \texttt{Tanh} alternatives, we similarly sample the hidden weights and biases once according to
\[
w_j \sim\mathcal{N}(0,I_d),
\qquad
b_j\sim\mathcal{N}(0,1),
\]
and keep them fixed during optimization. The corresponding normalized random-feature maps are given by
\begin{equation}
\label{eq:rfnn_maps}
y_D^{\mathrm{ReLU}}(x)
=
\frac{1}{\sqrt{D}}
\begin{bmatrix}
\operatorname{ReLU}\left(w_1^\top x+b_1\right)\\
\vdots\\
\operatorname{ReLU}\left(w_D^\top \, x+b_D\right)
\end{bmatrix},
\qquad
y_D^{\tanh}(x)
=
\frac{1}{\sqrt{D}}
\begin{bmatrix}
\tanh\left(w_1^\top x+b_1\right)\\
\vdots\\
\tanh\left(w_D^\top \, x+b_D\right)
\end{bmatrix}.
\end{equation}
For the classical RFNN hedging strategies, only the output coefficients $\boldsymbol{\beta}$ and the initial portfolio value $v_0$ are optimized. Their deep  counterparts instead apply the random-feature maps in \eqref{eq:rfnn_maps} to the learned representation $\psi_\theta(x)$, see \eqref{eq: latent_repr}. Consequently, the representation parameters \(\theta\), the output coefficients \(\boldsymbol{\beta}\), and the initial portfolio value \(v_0\) are optimized jointly, while the random weights and biases $(w,b)$ remain fixed. To isolate the effect of the activation function, the \texttt{ReLU} and \texttt{Tanh} models use identical realizations of \(w\) and \(b\) within each seed, and all methods share the same number of random features, training seeds, loss function, and optimization settings. These comparisons therefore allow us to disentangle the contribution of the learned representation from that of the underlying random-feature family. The main observation from Table \ref{tab:other_metrics_qh_rfnn} is that the learned representation $\psi_\theta$ is far more important than the random-feature activation. Without it, kernel hedging with \texttt{cosine} RFF and both shallow RFNNs perform even worse than Black-Scholes. Once the representation is learned, deep kernel hedging with \texttt{cosine} and deep RFNN hedging with \texttt{ReLU} or \texttt{Tanh} features yield very similar results.
\begin{table}[!ht]
\small
\centering
\caption{Out-of-sample metrics for the ATM European call under quadratic hedging with $T=1/4$, $N=1000$. Reported values are averages over ten seeds, with standard deviations in parentheses.}
\label{tab:other_metrics_qh_rfnn}
\begin{minipage}{\textwidth}
\centering
\resizebox{\textwidth}{!}{%
\begin{tabular}{lcccc}
\toprule
& MSE & MAE & $q^{\text{upper}}_{0.95}$ & $\operatorname{CVaR}^{\text{upper}}_{0.95}$ 
 \\
\midrule

BS
& $1.07\mathrm{e}{-4}$
& $8.28\mathrm{e}{-3}$
& $1.95\mathrm{e}{-2}$
& $2.49\mathrm{e}{-2}$
 \\

Kernel hedging (\texttt{cosine})
& $1.33\mathrm{e}{-4}$ ($1.6\mathrm{e}{-6}$)
& $8.58\mathrm{e}{-3}$ ($7.7\mathrm{e}{-5}$)
& $1.99\mathrm{e}{-2}$ ($1.3\mathrm{e}{-4}$)
& $2.78\mathrm{e}{-2}$ ($1.7\mathrm{e}{-4}$)
 \\

RFNN hedging (\texttt{ReLU})
& $1.38\mathrm{e}{-4}$ ($1.9\mathrm{e}{-6}$)
& $8.91\mathrm{e}{-3}$ ($7.1\mathrm{e}{-5}$)
& $2.06\mathrm{e}{-2}$ ($2.8\mathrm{e}{-4}$)
& $2.87\mathrm{e}{-2}$ ($4.2\mathrm{e}{-4}$)
 \\

RFNN hedging (\texttt{Tanh})
& $1.45\mathrm{e}{-4}$ ($2.9\mathrm{e}{-6}$)
& $8.96\mathrm{e}{-3}$ ($9.9\mathrm{e}{-5}$)
& $2.15\mathrm{e}{-2}$ ($3.8\mathrm{e}{-4}$)
& $2.99\mathrm{e}{-2}$ ($4.8\mathrm{e}{-4}$)
 \\

Deep kernel hedging (\texttt{cosine})
& $\mathbf{6.87{e}{-5}}$ $\mathbf{(8.0{e}{-7})}$
& $\mathbf{6.29{e}{-3}}$ $\mathbf{(2.4{e}{-5})}$
& $1.46\mathrm{e}{-2}$ ($1.3\mathrm{e}{-4}$)
& $2.06\mathrm{e}{-2}$ ($2.5\mathrm{e}{-4}$)
 \\

Deep RFNN hedging (\texttt{ReLU})
& $6.90\mathrm{e}{-5}$ ($4.4\mathrm{e}{-7}$)
& $6.32\mathrm{e}{-3}$ ($2.4\mathrm{e}{-5}$)
& $1.46\mathrm{e}{-2}$ ($7.3\mathrm{e}{-5}$)
& $2.06\mathrm{e}{-2}$ ($1.2\mathrm{e}{-4}$)
 \\

Deep RFNN hedging (\texttt{Tanh})
& $6.88\mathrm{e}{-5}$ ($5.3\mathrm{e}{-7}$)
& $6.33\mathrm{e}{-3}$ ($3.8\mathrm{e}{-5}$)
& $\mathbf{1.45{e}{-2}}$ $\mathbf{(1.0e{-4})}$
& $\mathbf{2.04{e}{-2}}$ $\mathbf{(1.4{e}{-4})}$
 \\

\bottomrule
\end{tabular}%
}
\end{minipage}
\end{table}
\\ \\
We do not include the heavier deep kernel architectures of \cite{xie2019deep} and \cite{mehrkanoon2019deep} in our numerical comparison, as they lead to  substantial overfitting in our limited-data financial setting.
\paragraph{Robustness of the RFF approximation.}
\begin{figure}
    \centering
    \includegraphics[width=0.75\linewidth]{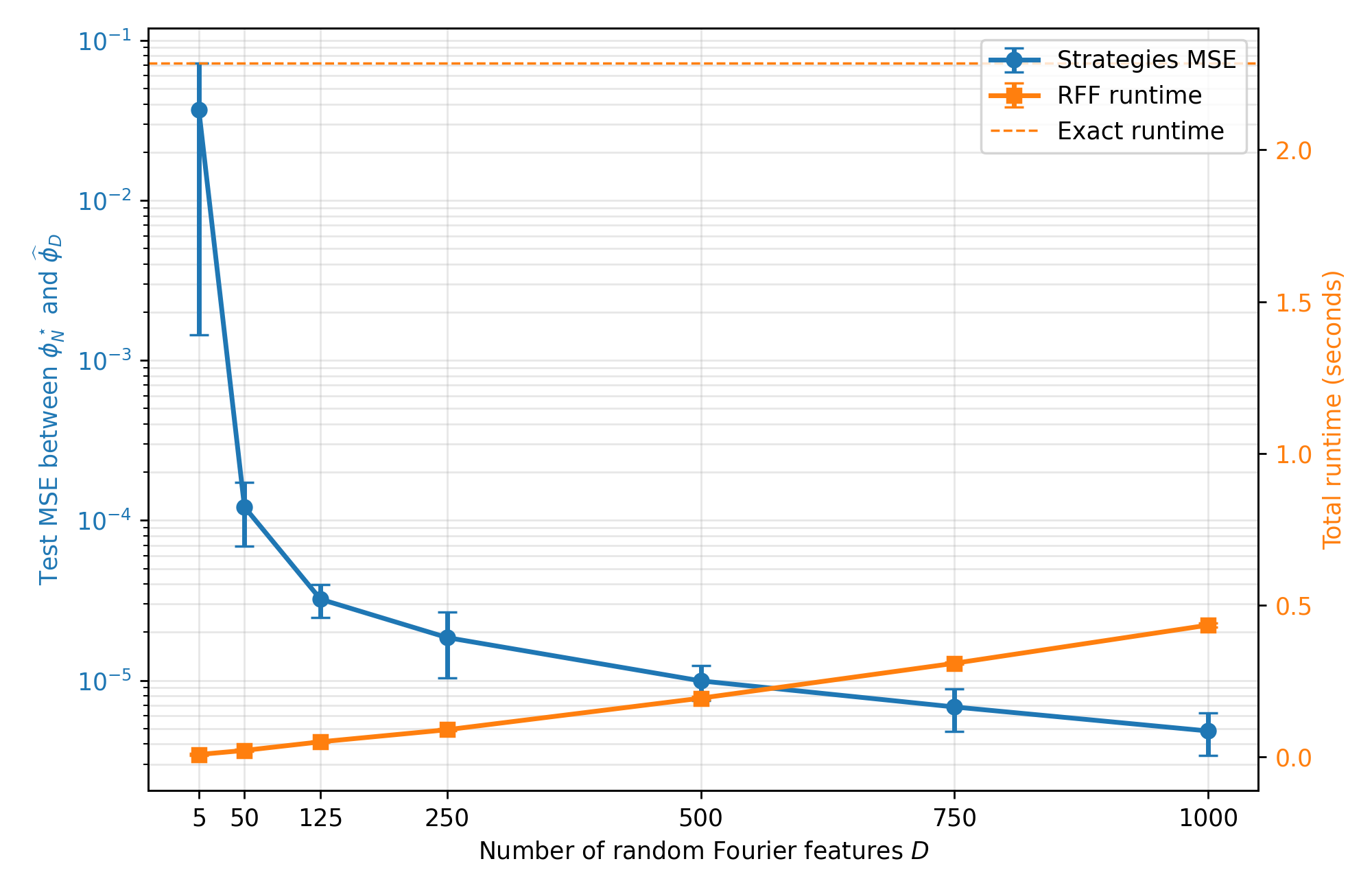}
    \caption{Comparison of the exact RKHS hedging strategy $\phi_{N,\hat\theta}^\star$ and its RFF approximation $\widehat{\phi}_{D,\hat\theta}$ for $N=1000$ training paths of an ATM European call under quadratic hedging. The blue curve shows the test-set MSE$_D$ \eqref{testMSEcomp}, while the orange curve reports the runtime required to construct the RFF features, solve the RFF hedging problem \eqref{eq:RFF_quadratic}, and evaluate $\widehat{\phi}_{D,\hat\theta}$ on the test set. The dashed orange line indicates the corresponding runtime for computing and evaluating the exact RKHS solution $\phi_{N,\hat\theta}^\star$. Markers and error bars represent the mean and one standard deviation over 10 RFF draws. The learned deep representation and kernel bandwidth $\hat{\theta} = (\hat\omega, \hat\gamma)$ are held fixed throughout the experiment.}
    \label{fig:RFF_conv}
\end{figure}
To assess the accuracy-efficiency trade-off of the RFF approximation, we first learn the deep representation and kernel bandwidth $\hat\theta = (\hat\omega,\hat\gamma)$ and then keep them fixed (such as to fix the learned kernel and thus only assess the RFF accuracy). Using \(N=1000\) training paths and conditionally on the learned kernel $K_{\hat\theta}$, we compute the exact RKHS optimizer \(\phi_{N,\hat\theta}^\star\) through the representer theorem \eqref{eq:represent_opt} and compare it with the closed-form RFF solution \(\widehat{\phi}_{D,\hat{\theta}}\) in \eqref{eq: optimal_func} for increasing numbers \(D\) of random features. Approximation accuracy is measured on an independent test set of $N^{\text{OOS}}=25,000$ simulated sample paths by the mean squared error between the two strategies 
\begin{equation}\text{MSE}_D = \frac{1}{N^{\text{OOS}} \, n} \sum_{i=1}^{N^{\text{OOS}}} \sum_{k=0}^{n-1} \left( \phi_{N,\hat{\theta}}^\star(X^i_{t_k}) - \widehat{\phi}_{D,\hat{\theta}}(X^i_{t_k}) \right)^2, \label{testMSEcomp}
\end{equation}while the reported runtime includes the construction of the RFF features, the solution of the resulting ridge-regression problem, and test-set evaluation, excluding the shared learning of the deep representation $\hat{\theta}$. Figure \ref{fig:RFF_conv} reports means and standard deviations over 10 RFF draws and shows that the strategy MSE decreases sharply as \(D\) increases, approaching \(10^{-5}\), whereas the computational cost rises roughly linearly. Most of the approximation gain is achieved for moderate values of \(D\) around 100, with diminishing improvements beyond approximately \(D=250\), illustrating the trade-off between fidelity to the exact RKHS solution and computational efficiency.

\paragraph{Asian option hedging.} We consider the quadratic hedging of ATM arithmetic Asian call options of the form
\begin{equation}\label{eq: arithm_asian_payoff}
    H_T=\left(\frac{1}{T}\int_0^TS_t dt-S_0 \right)_+. 
\end{equation}
Since the Asian payoff depends on the average of the path, we augment the state space with the running average process $A=(A_t)_{0<t\leq T}$, defined by
\begin{equation*}
A_t := \frac{1}{t}\int_0^t S_s \, ds, \qquad 0<t\leq T,
\end{equation*}
with $A_0 := S_0$. As the underlying state $(S_t,\nu_t)$ is Markovian, the augmented state $(S_t,\nu_t,A_t)$ is also Markovian. Table \ref{tab:metrics_QH_Asian_call} reports the out-of-sample results based on synthetic data. Note that, unlike for the European call payoff, we do not include the Black-Scholes benchmark, as there is no explicit hedging strategy for an arithmetic Asian call of the form \eqref{eq: arithm_asian_payoff} in the Black-Scholes setting. Across all three maturities, the deep kernel hedging strategy substantially outperforms the benchmarks in terms of all the out-of-sample metrics, and its advantage becomes more pronounced as the maturity increases. 
\begin{table}[!ht]
\small
\centering
\caption{Out-of-sample metrics for the ATM Asian call for quadratic hedging with $N=1000$. Reported values are averages over ten seeds, with standard deviations in parentheses.}
\label{tab:metrics_QH_Asian_call}
\begin{minipage}{1\textwidth}
\centering
\caption*{$T=1/12$}
\begin{tabular}{lcccc}
\toprule 
& MSE & MAE & $q^{\text{upper}}_{0.95}$ & CVaR$^{\text{upper}}_{0.95}$\\ 
\midrule 
Kernel hedging & $1.42\mathrm{e}{-5}$ (2.9$\mathrm{e}{-7}$) & $2.97\mathrm{e}{-3}$ (3.5$\mathrm{e}{-5}$) & $5.64\mathrm{e}{-3}$ (1.3$\mathrm{e}{-4}$) & $7.86\mathrm{e}{-3}$ (1.1$\mathrm{e}{-4}$) \\ 
Deep hedging & $8.75\mathrm{e}{-6}$ (4.2$\mathrm{e}{-7}$) & $2.33\mathrm{e}{-3}$ (5.8$\mathrm{e}{-5}$) & $4.30\mathrm{e}{-3}$ (1.1$\mathrm{e}{-4}$) & $6.08\mathrm{e}{-3}$ (1.3$\mathrm{e}{-4}$) \\ 
Deep kernel hedging & $\mathbf{7.91{e}{-6} \, (1.8{e}{-7})}$ & $\mathbf{2.23{e}{-3} \, (3.1{e}{-5})}$ & $\mathbf{4.10{e}{-3} \, (1.2{e}{-4})}$ & $\mathbf{5.80{e}{-3} \, (1.5{e}{-4})}$ \\ \bottomrule \end{tabular}

\end{minipage}
\\ \vspace{0.25cm}
\begin{minipage}{1\textwidth}
\centering
\caption*{$T=1/4$}
\begin{tabular}{lcccc} 
\toprule
& MSE & MAE & $q^{\text{upper}}_{0.95}$ & CVaR$^{\text{upper}}_{0.95}$\\ 
\midrule 
Kernel hedging & $4.48\mathrm{e}{-5}$ (1.1$\mathrm{e}{-6}$) & $5.28\mathrm{e}{-3}$ (6.5$\mathrm{e}{-5}$) & $9.31\mathrm{e}{-3}$ (1.1$\mathrm{e}{-4}$) & $1.35\mathrm{e}{-2}$ (1.9$\mathrm{e}{-4}$) \\
Deep hedging & $2.52\mathrm{e}{-5}$ (1.2$\mathrm{e}{-6}$) & $3.97\mathrm{e}{-3}$ (1.0$\mathrm{e}{-4}$) & $6.42\mathrm{e}{-3}$ (1.3$\mathrm{e}{-4}$) & $9.48\mathrm{e}{-3}$ (2.2$\mathrm{e}{-4}$) \\ 
Deep kernel hedging & $\mathbf{2.12{e}{-5} \, (5.1{e}{-7})}$ & $\mathbf{3.67{e}{-3} \, (4.3{e}{-5})}$ & $\mathbf{5.93{e}{-3} \, (9.9{e}{-5})}$ & $\mathbf{8.77{e}{-3} \, (2.0{e}{-4})}$\\ 
\bottomrule
\end{tabular}
\end{minipage}
\\ \vspace{0.25cm}
\begin{minipage}{1\textwidth}
\centering
\caption*{$T=1/2$}
\begin{tabular}{lcccc} 
\toprule 
& MSE & MAE & $q^{\text{upper}}_{0.95}$ & CVaR$^{\text{upper}}_{0.95}$ \\ 
\midrule 
Kernel hedging & $1.04\mathrm{e}{-4}$ (2.4$\mathrm{e}{-6}$) & $8.09\mathrm{e}{-3}$ (9.4$\mathrm{e}{-5}$) & $1.41\mathrm{e}{-2}$ (2.2$\mathrm{e}{-4}$) & $2.04\mathrm{e}{-2}$ (3.0$\mathrm{e}{-4}$) \\ 
Deep hedging & $5.86\mathrm{e}{-5}$ (3.5$\mathrm{e}{-6}$) & $6.19\mathrm{e}{-3}$ (1.8$\mathrm{e}{-4}$) & $9.32\mathrm{e}{-3}$ (3.3$\mathrm{e}{-4}$) & $1.42\mathrm{e}{-2}$ (5.1$\mathrm{e}{-4}$) \\ 
Deep kernel hedging & $\mathbf{5.00{e}{-5} \, (1.2{e}{-6})}$ & $\mathbf{5.78{e}{-3} \, (7.8{e}{-5})}$ & $\mathbf{8.86{e}{-3} \, (2.3{e}{-4})}$ & $\mathbf{1.34{e}{-2} \, (3.6{e}{-4})}$ \\ 
\bottomrule 
\end{tabular}

\end{minipage}
\end{table}

\subsection{Additional experiments on S\&P500 data}\label{sec: appendix_SP500}
In this appendix, we provide additional results for the numerical experiments on real  data. 
\paragraph{Vanilla features.} As first baseline, we use vanilla features as inputs for learning the quadratic hedging strategy of an ATM European call based on real data. Among the learning methods, we observe in Table \ref{tab:loss_SP500_vanilla} that the deep kernel hedging remains the best-performing method under quadratic hedging when using the vanilla features as first baseline on S\&P500 data, achieving the lowest MSE and MAE, as well as the most favorable $q^{\text{upper}}_{0.95}$ and $\operatorname{CVaR}^{\text{upper}}_{0.95}$ in most settings. However, comparison with the results obtained using truncated time-augmented signatures in Table~\ref{tab:loss_SP500_sign} shows that the vanilla features do not attain the same level of hedging performance since they fail to include any path-dependent information. Moreover, the benefits of signature-based features become increasingly pronounced as the option maturity increases.
\begin{table}[!ht]
\small
\centering
\caption{Out-of-sample metrics for the quadratic hedging of an ATM European call on real dataset with vanilla features. P\&Ls are scaled by the underlying price at the beginning of the periods. Reported values are averages over ten seeds, with standard deviations in parentheses.}
\label{tab:loss_SP500_vanilla}

\begin{minipage}{1\textwidth}
\centering
\caption*{$T_\text{days}=$ 10 days.}
\begin{tabular}{lcccc} 
\toprule 
& MSE & MAE & $q^{\text{upper}}_{0.95}$ & CVaR$^{\text{upper}}_{0.95}$\\ 
\midrule BS & $3.12\mathrm{e}{-5}$ & $3.48\mathrm{e}{-3}$ & $9.13\mathrm{e}{-3}$ & $1.41\mathrm{e}{-2}$ \\ 
Kernel hedging & $2.62\mathrm{e}{-5}$ (3.9$\mathrm{e}{-7}$) & $3.72\mathrm{e}{-3}$ (2.6$\mathrm{e}{-5}$) & $7.99\mathrm{e}{-3}$ (1.9$\mathrm{e}{-4}$) & $1.16\mathrm{e}{-2}$ (1.7$\mathrm{e}{-4}$) \\ 
Deep hedging & $2.49\mathrm{e}{-5}$ (9.3$\mathrm{e}{-7}$) & $3.70\mathrm{e}{-3}$ (6.8$\mathrm{e}{-5}$) & $7.67\mathrm{e}{-3}$ (2.5$\mathrm{e}{-4}$) & $1.06\mathrm{e}{-2}$ (4.0$\mathrm{e}{-4}$) \\ 
Deep kernel hedging & $\mathbf{2.20{e}{-5} (5.6{e}{-7}})$ & $\mathbf{3.35{e}{-3} (3.7{e}{-5}})$ & $\mathbf{6.82{e}{-3} \, (1.2{e}{-4}})$ & $\mathbf{9.76{e}{-3} \, (3.0{e}{-4}})$ \\ 
\bottomrule 
\end{tabular}
\end{minipage}
\\ \vspace{0.25cm}

\begin{minipage}{1\textwidth}
\caption*{$T_\text{days}=$ 20 days.}

  \centering 
\begin{tabular}{lcccc} 
\toprule & MSE & MAE & $q^{\text{upper}}_{0.95}$ & CVaR$^{\text{upper}}_{0.95}$ \\ 
\midrule 
BS & $5.64\mathrm{e}{-5}$ & $4.83\mathrm{e}{-3}$ & $1.17\mathrm{e}{-2}$ & $1.73\mathrm{e}{-2}$  \\ 
Kernel hedging & $4.75\mathrm{e}{-5}$ (7.7$\mathrm{e}{-7}$) & $4.73\mathrm{e}{-3}$ (4.3$\mathrm{e}{-5}$) & $1.05\mathrm{e}{-2}$ (3.3$\mathrm{e}{-4}$) & $1.63\mathrm{e}{-2}$ (2.2$\mathrm{e}{-4}$)\\ 
Deep hedging & $4.45\mathrm{e}{-5}$ (2.7$\mathrm{e}{-6}$) & $4.69\mathrm{e}{-3}$ (1.4$\mathrm{e}{-4}$) & $1.01\mathrm{e}{-2}$ (4.7$\mathrm{e}{-4}$) & $1.45\mathrm{e}{-2}$ (1.0$\mathrm{e}{-3}$) \\ 
Deep kernel hedging & $\mathbf{3.82{e}{-5} \, (7.4{e}{-7}})$ & $\mathbf{4.29{e}{-3} \,  (3.1{e}{-5}})$ & $\mathbf{9.08{e}{-3} \, (2.1{e}{-4}})$ & $\mathbf{1.25{e}{-2} \, (4.4{e}{-4}})$ \\ 
\bottomrule \end{tabular}  
\end{minipage}
\\ \vspace{0.25cm}

\begin{minipage}{1\textwidth}
    \centering 
    \caption*{$T_\text{days}=$ 40 days.}

\begin{tabular}{lcccc} 
\toprule & MSE & MAE & $q^{\text{upper}}_{0.95}$ & CVaR$^{\text{upper}}_{0.95}$ \\ 
\midrule 
BS & ${9.75\mathrm{e}{-5}}$ & $\mathbf{6.23{e}{-3}}$ & $\mathbf{1.36{e}{-2}}$ & $\mathbf{1.76{e}{-2}}$  \\ 
Kernel hedging & $1.13\mathrm{e}{-4}$ (2.8$\mathrm{e}{-6}$) & $7.71\mathrm{e}{-3}$ (1.1$\mathrm{e}{-4}$) & $1.97\mathrm{e}{-2}$ (8.6$\mathrm{e}{-4}$) & $2.68\mathrm{e}{-2}$ (7.1$\mathrm{e}{-4}$)\\ 
Deep hedging & $1.01\mathrm{e}{-4}$ (5.5$\mathrm{e}{-6}$) & $7.21\mathrm{e}{-3}$ (2.2$\mathrm{e}{-4}$) & $1.82\mathrm{e}{-2}$ (7.5$\mathrm{e}{-4}$) & $2.49\mathrm{e}{-2}$ (1.3$\mathrm{e}{-3}$) \\ 
Deep kernel hedging & $\mathbf{8.93{e}{-5} \, (3.5{e}{-6}})$ & ${6.58\mathrm{e}{-3}  \, (1.1\mathrm{e}{-4}})$  & ${1.68\mathrm{e}{-2} \, (6.1\mathrm{e}{-4}})$ & ${2.35\mathrm{e}{-2} \, (1.2\mathrm{e}{-3}})$ \\ 
\bottomrule \end{tabular}
\end{minipage}
\end{table}

\paragraph{Asian option hedging.}
We consider the quadratic hedging of ATM arithmetic Asian call options of the form
\begin{equation*}
H_T=\left(\frac{1}{T}\int_0^T S_t \, dt-S_0\right)_+.
\end{equation*}
Based on real data, for each trajectory, we set the initial price of the Asian call used for hedging as
\[
v_0^i = \mathrm{AMM}(S_0=S_0^i,K=S_0^i,T,\sigma=\sqrt{\nu_0^i}),
\]
where $\nu_0^i$ denotes the initial realized variance of the $i$-th path and $\mathrm{AMM}(S,K,T,\sigma)$ is the closed-form moment-matching approximation of the arithmetic Asian call price given in \cite[Proposition 6]{lo2014moment}. Table \ref{tab:loss_SP500_asian} reports the results. Across all three maturities, deep kernel hedging achieves the lowest MSE and MAE, with the performance gap widening as maturity increases from 10 to 40 days. Although tail-risk metrics are not explicitly optimized, deep kernel hedging generally provides the most favorable downside-risk P\&L profile across all maturities. Kernel hedging remains less accurate than the two deep-learning approaches. Overall, these results further highlight the effectiveness of deep kernel hedging, including for path-dependent payoffs.

\begin{table}[!ht]
\small
\centering
\caption{Out-of-sample metrics for the quadratic hedging of an ATM Asian call on the real dataset with
time-augmented signature features. Reported values are averages over ten seeds, with standard deviations in parentheses.}
\label{tab:loss_SP500_asian}

\begin{minipage}{1\textwidth}
\centering
\caption*{$T_\text{days}=$ 10 days.}
\begin{tabular}{lcccc} 
\toprule 
& MSE & MAE & $q^{\text{upper}}_{0.95}$ & CVaR$^{\text{upper}}_{0.95}$\\ 
\midrule 
Kernel hedging & $1.81\mathrm{e}{-5}$ (5.1$\mathrm{e}{-7}$) & $2.84\mathrm{e}{-3}$ (3.2$\mathrm{e}{-5}$) & $5.94\mathrm{e}{-3}$ (1.9$\mathrm{e}{-4}$) & $1.02\mathrm{e}{-2}$ (2.1$\mathrm{e}{-4}$) \\ 
Deep hedging & $1.59\mathrm{e}{-5}$ (9.0$\mathrm{e}{-7}$) & $2.77\mathrm{e}{-3}$ (6.2$\mathrm{e}{-5}$) & $5.30\mathrm{e}{-3}$ (2.7$\mathrm{e}{-4}$) & $8.61\mathrm{e}{-3}$ (4.3$\mathrm{e}{-4}$) \\ 
Deep kernel hedging & $\mathbf{1.30{e}{-5} \, (5.4{e}{-7})}$ & $\mathbf{2.51{e}{-3} \, (2.5{e}{-5})}$ & $\mathbf{4.71{e}{-3} \, (6.2{e}{-5})}$ & $\mathbf{7.90{e}{-3} \, (2.1{e}{-4})}$\\ 
\bottomrule 
\end{tabular}

\end{minipage}
\\ \vspace{0.25cm}

\begin{minipage}{1\textwidth}
\caption*{$T_\text{days}=$ 20 days.}

  \centering 
\begin{tabular}{lcccc} 
\toprule 
& MSE & MAE & $q^{\text{upper}}_{0.95}$ & CVaR$^{\text{upper}}_{0.95}$\\ 
\midrule 
Kernel hedging & $2.31\mathrm{e}{-5}$ (6.5$\mathrm{e}{-7}$) & $3.34\mathrm{e}{-3}$ (5.1$\mathrm{e}{-5}$) & $6.82\mathrm{e}{-3}$ (2.3$\mathrm{e}{-4}$) & $1.06\mathrm{e}{-2}$ (2.7$\mathrm{e}{-4}$) \\ 
Deep hedging & $2.15\mathrm{e}{-5}$ (1.5$\mathrm{e}{-6}$) & $3.33\mathrm{e}{-3}$ (9.8$\mathrm{e}{-5}$) & $6.69\mathrm{e}{-3}$ (4.3$\mathrm{e}{-4}$) & $9.80\mathrm{e}{-3}$ (9.4$\mathrm{e}{-4}$) \\ 
Deep kernel hedging & $\mathbf{1.76{e}{-5} \, (5.5{e}{-7})}$ & $\mathbf{2.95{e}{-3} \, (3.1{e}{-5})}$ & $\mathbf{5.92{e}{-3} \, (1.2{e}{-4})}$ & $\mathbf{8.53{e}{-3} \, (3.8{e}{-4})}$ \\ 
\bottomrule 
\end{tabular}

\end{minipage}
\\ \vspace{0.25cm}

\begin{minipage}{1\textwidth}
    \centering 
    \caption*{$T_\text{days}=$ 40 days.}
\begin{tabular}{lcccc} 
\toprule 
& MSE & MAE & $q^{\text{upper}}_{0.95}$ & CVaR$^{\text{upper}}_{0.95}$ \\ 
\midrule 
Kernel hedging & $4.11\mathrm{e}{-5}$ (1.1$\mathrm{e}{-6}$) & $4.62\mathrm{e}{-3}$ (6.5$\mathrm{e}{-5}$) & $9.56\mathrm{e}{-3}$ (2.5$\mathrm{e}{-4}$) & $1.41\mathrm{e}{-2}$ (3.2$\mathrm{e}{-4}$) \\ 
Deep hedging & $3.38\mathrm{e}{-5}$ (2.8$\mathrm{e}{-6}$) & $4.07\mathrm{e}{-3}$ (7.6$\mathrm{e}{-5}$) & $\mathbf{7.58{e}{-3} (2.4{e}{-4})}$ & ${1.27\mathrm{e}{-2}}$ (1.6$\mathrm{e}{-3}$) \\ 
Deep kernel hedging & $\mathbf{2.96{e}{-5} \, (8.3{e}{-7})}$ & $\mathbf{3.95{e}{-3} \, (4.6{e}{-5})}$ & $7.72\mathrm{e}{-3} \, (1.6\mathrm{e}-4)$ & $\mathbf{1.13{e}{-2} \,  (3.7{e}{-4})}$ \\ 
\bottomrule 
\end{tabular}

\end{minipage}
\end{table}

{\small

\section*{Disclosure of interest}
No potential conflict of interest was reported by the authors. 

\section*{Funding}
\textbf{E. Motte} is  grateful for the financial support from the Fonds de la Recherche Scientifique (F.R.S. - FNRS) through a FRIA grant.

\section*{Data Availability Statement}
The S\&P500  historical data (January 1st, 2010 to May 31st, 2026) used in the numerical experiments are publicly available and can be accessed directly through the \texttt{yfinance} Python package. No proprietary or newly collected data were used. 

\section*{Declaration of generative AI use}
 Generative AI was used to improve the spelling, grammar and flow of the article. It was also used to generate the flow chart in Figure \ref{fig: DKH_architecture}.
 
 \section*{Author contributions}
CRediT: \textbf{J. Dupret:} Methodology, Software, Validation, Formal analysis, Writing; \textbf{D. Hainaut:} Conceptualization, Validation, Supervision; \textbf{E. Motte:} Conceptualization, Methodology, Software, Validation, Formal analysis, Writing.}

\bibliographystyle{plainnat}
\bibliography{bibl}

\end{document}